\newtheorem{lemma}{Lemma}
\newtheorem{definition}{Definition}
\crefname{equation}{}{}
\crefname{algorithm}{Alg.}{Alg.}
\crefname{line}{Line}{Lines}
\crefname{section}{Section}{Sections}
\crefname{assumption}{Assumption}{Assumptions}
\crefname{definition}{Definition}{Definitions}
\crefname{lemma}{Lemma}{Lemmas}
\crefname{figure}{Fig.}{Fig.}
\crefname{table}{Table}{Tables}
\crefname{remark}{Remark}{Remarks}
\crefname{corollary}{Corollary}{Corollaries}
\crefname{theorem}{Theorem}{Theorems}
\crefname{appendix}{Appendix}{Appendices}
\crefname{ALC@line}{line}{lines}  
\crefname{ALG@line}{line}{lines}  
\begin{document}
\title{Data Cleansing for GANs}
\author{Naoyuki Terashita\,\orcidlink{0000-0002-8057-1908}, Hiroki Ohashi\,\orcidlink{0000-0001-6970-2412}, and Satoshi Hara\,\orcidlink{0009-0009-8369-9294}
\thanks{This document is the accepted version of a manuscript that will appear in IEEE Transactions on Neural Networks and Learning Systems. In accordance with IEEE Preprint Policy, this version is posted on arXiv for rapid dissemination and does not represent the copyedited and formatted version of record. The final published version is available in the IEEE Xplore Digital Library (DOI: 10.1109/TNNLS.2025.3529540). \copyright~2025 IEEE. All rights reserved.}
\thanks{Satoshi Hara is supported by JST, PRESTO Grant Number JPMJPR20C8, Japan. The experimental results were partially produced using the computational resource of AI Bridging Cloud Infrastructure (ABCI) provided by the National Institute of Advanced Industrial Science and Technology (AIST).}
\thanks{N.~Terashita is with Hitachi, Ltd., Tokyo, Japan, and also with Osaka University, Osaka, Japan.}
\thanks{H.~Ohashi is with Hitachi, Ltd., Tokyo, Japan.}
\thanks{S.~Hara is with University of Electro-Communications, Tokyo, Japan.}
}



\maketitle

\begin{abstract}
As the application of generative adversarial networks (GANs) expands, it becomes increasingly critical to develop a unified approach that improves performance across various generative tasks.
One effective strategy that applies to any machine learning task is identifying harmful instances, whose removal improves the performance.
While previous studies have successfully estimated these harmful training instances in supervised settings, their approaches are not easily applicable to GANs.
The challenge lies in two requirements of the previous approaches that do not apply to GANs.
First, previous approaches require that the absence of a training instance directly affects the parameters. However, in the training for GANs, the instances do not directly affect the generator's parameters since they are only fed into the discriminator.
Second, previous approaches assume that the change in loss directly quantifies the harmfulness of the instance to a model's performance, while common types of GAN losses do not always reflect the generative performance.
To overcome the first challenge, we propose influence estimation methods that use the Jacobian of the generator's gradient with respect to the discriminator's parameters (and vice versa).
Such a Jacobian represents the indirect effect between two models: how removing an instance from the discriminator's training changes the generator's parameters.
Second, we propose an instance evaluation scheme that measures the harmfulness of each training instance based on how a GAN evaluation metric (e.g., Inception score) is expected to change by the instance's removal.
Furthermore, we demonstrate that removing the identified harmful instances significantly improves the generative performance on various GAN evaluation metrics.
The code is available at \hyperlink{https://github.com/hitachi-rd-cv/data-cleansing-for-gans}{https://github.com/hitachi-rd-cv/data-cleansing-for-gans}
\end{abstract}

\begin{IEEEkeywords}
influence estimation, influence function, hypergradient, data cleansing, data evaluation, generative adversarial network, generative model
\end{IEEEkeywords}

\section{Introduction}
\IEEEPARstart{G}{enerative} adversarial network (GAN)~\cite{goodfellow2014generative} is a powerful subclass of the generative model, which is computationally more reasonable than recent diffusion-based models~\cite{dhariwal2021diffusion} and has been proven effective in various generating tasks including the super-resolution~\cite{ledig2017photo}, 3D reconstruction~\cite{wu2016learning}, and audio synthesis~\cite{donahue2018adversarial}.
As the applications of GANs expand, developing techniques that can broadly improve their generative performance becomes increasingly important.

One performance improvement technique that can be widely applied to machine learning models is to identify training instances that harm model performance.
Traditionally, statisticians manually screen a dataset for harmful instances, which leads a model to make biased predictions.
Recent \textit{influence estimation} methods \cite{Khanna2019,Hara2019} automated the screening of large datasets for deep learning settings.
Influence estimation enables efficient screening by estimating the effect of removing an individual training instance on a model's prediction without the computationally prohibitive model retraining.

Although previous studies have succeeded in identifying the harmful instances in supervised settings~\cite{Koh2017,Hara2019}, the extension of their approaches to GANs is non-trivial due to two requirements of previous approaches that do not apply to GANs.
Previous approaches require that (i) the existence or absence of a training instance directly affects the model parameters and that (ii) the decrease in the loss value represents the harmfulness of the removed training instance.
In GAN training, however, neither of the requirements is satisfied;
(i) training instances only indirectly affect the generator's parameters as they are only fed into the discriminator,  and (ii) the change in the loss of GAN does not necessarily represent how the removed instance harms the generative performance.
This is because the ability of the loss to evaluate the generator is highly dependent on the performance of the discriminator.

To this end, first, (i) we propose incorporating the Jacobian of the gradient of the discriminator's loss with respect to the generator's parameters (and vice versa) to trace how the absence of an instance in the discriminator's training affects the generator's parameters.
Using such a Jacobian, we derive two influence estimation methods that comprehensively examine the possible extensions of previous approaches.
We also provide theoretical guarantees on their estimation errors.
Second, (ii) we propose a scheme that evaluates the harmfulness of a given training instance based on its \textit{influence on GAN evaluation metric}, that is, how a GAN evaluation metric (e.g., Inception score \cite{inception}) changes after retraining where the training instance is removed from the dataset.
Using our influence estimation methods, we estimate the influence on the GAN evaluation metric without actual retraining.

Finally, we verify that the proposed influence estimation methods accurately estimate the influence on GAN evaluation metrics across different dataset settings, model architectures, and GAN evaluation metrics.
We also demonstrate that the removal of harmful instances identified by the proposed method effectively improves various GAN evaluation metrics.

\IEEEpubidadjcol

Our contributions are summarized as follows:
    \begin{itemize}
        \item We propose two different influence estimation methods that use the Jacobian of the gradient of the discriminator's loss with respect to the generator's parameters (and vice versa), which traces how the absence of an instance in the discriminator's training affects the generator's parameters. We also provide theoretical guarantees for their estimation error bounds.
        \item We propose an evaluation scheme that judges the harmfulness of a training instance based on the influence on GAN evaluation metrics. We show that our influence estimation methods enable efficient approximation of influence on GAN evaluation metrics.
        \item We demonstrate that removing harmful instances suggested by the proposed method effectively improves the generative performance with respect to various GAN evaluation metrics.
\end{itemize}

This paper extends our previous work \cite{terashita2021influence}.
The key improvements include:
\begin{itemize}
    \item Proposing an alternative influence estimator in \cref{sec:influence_estimation_aid,sec:aid_eigem}, which is more memory efficient than the previously proposed estimator presented in \cref{sec:influence_estimation_itd,sec:itd_eigem}.
    \item Providing theoretical error bounds for the previous influence estimator as well as the newly proposed alternative estimator~(\cref{sec:est_error}).
    \item Revealing pros and cons of two estimators both analytically~(\cref{sec:complexity}) and empirically~(\cref{sec:exps}).
\end{itemize}

\section*{Notation}
\noindent For a vector function $\boldsymbol{\phi}: \mathbb{R}^{m} \to \mathbb{R}^{n} $, its partial derivative is denoted by $\partial_{\boldsymbol{x}}\boldsymbol{\phi}(\boldsymbol{x}) \in \mathbb{R}^{n\times m}$.
Let $h: \mathbb{R}^{m} \times \mathbb{R}^{s} \to \mathbb{R}$ be a real-valued function.
Its partial derivative with respect to the first argument is denoted by $\nabla_{\boldsymbol{x}}h(\boldsymbol{x},\boldsymbol{y})\in\mathbb{R}^{m}$, and the Jacobians of  $\nabla_{\boldsymbol{x}}h(\boldsymbol{x},\boldsymbol{y})$ with respect to $\boldsymbol{x}$ and $\boldsymbol{y}$ are denoted by $\nabla_{\boldsymbol{x}}^2{h}(\boldsymbol{x},\boldsymbol{y}) \in\mathbb{R}^{m\times m} $ and $\nabla_{\boldsymbol{x}\boldsymbol{y}}^2{h}(\boldsymbol{x},\boldsymbol{y}) \in\mathbb{R}^{m\times s} $, respectively.
We use \(\left\Vert \cdot \right\Vert\) to denote the spectral norm for a matrix and the \(\ell_2\)-norm for a vector. 

\section{Preliminaries}

\subsection{Generative Adversarial Networks (GANs)}
\noindent We consider an unconditional GAN that consists of the generator \(G\left(\boldsymbol{\varphi},\boldsymbol{z}\right) \in \mathbb{R}^{d_{{\boldsymbol{x}}}}\) and the discriminator \(D\left(\boldsymbol{\psi}, \boldsymbol{x}\right) \in   \mathbb{R}\), where \({\boldsymbol{z}} \in \mathbb{R}^{d_{{\boldsymbol{z}}}}\) and \({\boldsymbol{x}}\in \mathbb{R}^{d_{{x}}}\) are random variables that represent a latent variable and data instance, respectively.
The parameters of generator \({{\boldsymbol{\varphi}}}\in \mathbb{R}^{d_{\varphi}}\) and discriminator \({\boldsymbol{\psi}}_{}\in \mathbb{R}^{d_{\psi}}\) are typically learned through adversarial training; \(G\) tries to generate realistic data while \(D\) tries to identify whether the data is real or generated.

\subsection{Minimax Problem for GANs}
\noindent This section presents the minimax problem for GANs.

We introduce sets of \(N_x\) training instances and \(N_z\) sampled latent variables denoted by \(\mathcal{X}\coloneqq \{\boldsymbol{x}_i\in \mathbb{R}^{d_{{\boldsymbol{x}}}}\mid i=1,2,\ldots,N_x\}\) and \(\mathcal{Z} \coloneqq \{ \boldsymbol{z}_i  \xleftarrow{\mathrm{iid}}\mathcal{N}\left(\boldsymbol{0}\in \mathbb{R}^{d_{{\boldsymbol{z}}}},\boldsymbol{I}\right) \mid i=1,2,\ldots,N_z \}\), respectively.
Here, \(\mathcal{N}\left(\boldsymbol{0},\boldsymbol{I}\right)\) denotes the multivariate normal distribution whose mean vector is the zero vector and covariance matrix is the identity matrix.
We then introduce two empirical losses that take \(\mathcal{X}\) and \(\mathcal{Z}\) as their inputs, denoted as
\begin{gather*}
    \overline{f}(\boldsymbol{\psi}, \mathcal{X}) \coloneqq  \frac{1}{\left\vert {\mathcal{X}} \right\vert} \sum_{\boldsymbol{x}\in{\mathcal{X}}} f\left(D\left(\boldsymbol{\psi},\boldsymbol{x}\right)\right), \\
    \overline{g}(\boldsymbol{\varphi}, \boldsymbol{\psi}, \mathcal{Z}) \coloneqq  \frac{1}{\left\vert {\mathcal{Z}} \right\vert} \sum_{\boldsymbol{z}\in{\mathcal{Z}}} g\left(D\left(\boldsymbol{\psi},G\left(\boldsymbol{\varphi},\boldsymbol{z}\right)\right)\right),
\end{gather*}
respectively.
Here, \(f: \mathbb{R}\to \mathbb{R}\) and \(g: \mathbb{R}\to \mathbb{R}\) denote concave functions.
Using those losses, we formulate the minimax problem for GAN as
\begin{gather}
{{\boldsymbol{\varphi}}}^{*},{{\boldsymbol{\psi}}}^{*}\in \mathrm{arg}\mathop{\min}_{ {{\boldsymbol{\varphi}}}}\mathop{\max}_{{\boldsymbol{\psi}}} V\left({{\boldsymbol{\varphi}}},{\boldsymbol{\psi}}\right), \label{eq:opt} \\
\text{where}~
V\left({{\boldsymbol{\varphi}}},{\boldsymbol{\psi}}\right) 
\coloneqq \overline{f}(\boldsymbol{\psi}, \mathcal{X}) + \overline{g}(\boldsymbol{\varphi}, \boldsymbol{\psi}, \mathcal{Z}) + R\left(\boldsymbol{\varphi}, \boldsymbol{\psi}\right)  \label{eq:def_V}. 
\end{gather}
Here, \(R(\boldsymbol{\varphi}, \boldsymbol{\psi}) \in \mathbb{R}\) is a regularizer that is strongly convex in \(\boldsymbol{\varphi}\) for any \( \boldsymbol{\psi} \), and strongly concave in \(\boldsymbol{\psi}\) for any \( \boldsymbol{\varphi} \), e.g., \(R(\boldsymbol{\varphi}, \boldsymbol{\psi}) = \frac{1}{2} (\lambda_1 \left\Vert\boldsymbol{\varphi}\right\Vert_2^2 - \lambda_2\left\Vert\boldsymbol{\psi}\right\Vert_2^2)\) with \(\lambda_1, \lambda_2 >0\).
We assume the set of solutions of \cref{eq:opt} is non-empty.
\cref{eq:opt}  is general enough to cover most formulations of GANs; the original minimax objective \cite{goodfellow2014generative} can be recovered by choosing \(f(a) = -\log  \left(1/\left(1+\exp\left(-a\right)\right) \right)\) and \(g(a) = f(-a)\), and Wasserstein GAN~\cite{arjovsky2017wasserstein} is also a case of \cref{eq:opt} where \(f(a)=a\) and \(g(a)=-a\).

\subsection{Adversarial Gradient Descent (AGD)}
\noindent We suppose that \cref{eq:opt} is solved by the gradient descent, which we call \textit{adversarial gradient descent} (\textit{AGD}).

For simplicity, this paper considers simultaneous and full-batch training; the generator and discriminator are simultaneously updated at a single step using all elements in \(\mathcal{X} \) and \(\mathcal{Z}\).
Letting \({{\boldsymbol{\theta}}}\coloneqq \left({{\boldsymbol{\varphi}}}{}^\intercal ~ {\boldsymbol{\psi}}{}^\intercal\right){}^\intercal \in \mathbb{R}^{d_{{{\boldsymbol{\theta}}}}=d_{\varphi}+d_{\psi}}\) be the concatenated parameter,  we formulate AGD as the sequence of the gradient descent step,
\begin{equation}
    {{\boldsymbol{\theta}}}^{(t+1)} = {{\boldsymbol{\theta}}}^{(t)} - \eta {\boldsymbol{v}}\left({{\boldsymbol{\theta}}}^{(t)}\right), \label{eq:update}
\end{equation}
for \(t=0,\ldots,T-1\).     
Here, \(\eta\in\mathbb{R}^+\) denotes the learning rate, and \({\boldsymbol{v}}\left({{\boldsymbol{\theta}}}\right)\) denotes a concatenated gradient defined as
 \begin{equation}
  {\boldsymbol{v}}\left({{\boldsymbol{\theta}}}\right) \coloneqq  \begin{pmatrix}
      {\nabla_{\boldsymbol{\varphi}}} V\left({{\boldsymbol{\varphi}}}, {\boldsymbol{\psi}}\right) \\ -{\nabla_{\boldsymbol{\psi}}} V\left({{\boldsymbol{\varphi}}}, {\boldsymbol{\psi}}\right) \label{eq:def_grad}
  \end{pmatrix}.
 \end{equation}
 
\section{Estimating Influence on Parameters of GANs} \label{sec:param_estimator}
\noindent This section explains the first contribution of our paper: proposing influence estimation methods using the Jacobian of gradients, which represents the indirect effect between the discriminator and generator.
Firstly, \cref{sec:def_infl} defines \textit{influence on parameters}, which represents how the removal of a training instance changes the parameters after the retraining.
We then derive two influence estimation methods in \cref{sec:infl_est}   as the extensions from \cite{Hara2019} and \cite{Koh2017}.
Finally,  \cref{sec:est_error} provides the theoretical evaluation and comparison of their estimation error bounds.

\subsection{Influence on Parameters} \label{sec:def_infl}
\noindent To define our notion of influence, we introduce another minimax problem in which a training instance indexed as \(j \in \{1,2,\ldots,N_x\}\) is removed from the dataset:
\begin{gather} 
    {{\boldsymbol{\varphi}}}_{-j}^{*}, {\boldsymbol{\psi}}_{-j}^{*} \in \mathrm{arg}\mathop{\min}_{ {{\boldsymbol{\varphi}}}}\mathop{\max}_{{\boldsymbol{\psi}}} V_{-j}\left({{\boldsymbol{\varphi}}},{\boldsymbol{\psi}}\right),  \label{eq:opt_cf} \\
\text{where}\quad V_{-j}\left({{\boldsymbol{\varphi}}},{\boldsymbol{\psi}}\right)  \coloneqq  V\left({{\boldsymbol{\varphi}}},{\boldsymbol{\psi}}\right)- \frac{1}{\left\vert {\mathcal{X}} \right\vert} f\left(D\left({{\boldsymbol{\psi}}}, \boldsymbol{x}_j\right)\right). \label{eq:def_cfV}
\end{gather}
We then define \textit{counterfactual AGD} as a gradient descent that solves \cref{eq:opt_cf}.
Counterfactual AGD starts its iteration from \({{\boldsymbol{\theta}}}^{(0)}_{-j}={{\boldsymbol{\theta}}}^{(0)}_{}\) and runs the following update step for \(t=0,\ldots,T-1\):
\begin{gather}
            {{\boldsymbol{\theta}}}^{(t+1)}_{-j} = {{\boldsymbol{\theta}}}^{(t)}_{-j} - \eta {\boldsymbol{v}}_{-j}\left({{\boldsymbol{\theta}}}^{(t)}_{-j}\right), \label{eq:update_cf}  \\
        \text{where}\quad    {\boldsymbol{v}}_{-j}\left({{\boldsymbol{\theta}}}\right)  \coloneqq  \begin{pmatrix} {\nabla_{\boldsymbol{\varphi}}} V_{-j}\left({{\boldsymbol{\varphi}}}, {\boldsymbol{\psi}}\right) \\ -{\nabla_{\boldsymbol{\psi}}} V_{-j}\left({{\boldsymbol{\varphi}}}, {\boldsymbol{\psi}}\right) \end{pmatrix}. \label{eq:def_cfgrad}
\end{gather}

We finally introduce our definition of influence on parameters as follows.
\begin{definition}
 Influence on parameters refers to \(\Delta {{\boldsymbol{\theta}}}^{(T)}_{-j} \coloneqq  {{\boldsymbol{\theta}}}^{(T)}_{-j}-{{\boldsymbol{\theta}}}^{(T)}\), denoting the changes in the concatenated parameter at the \(T\)-th step of AGD from the counterfactual AGD.
\end{definition}
\noindent In the next section, we propose two methods that estimate the influence on parameters without evaluating \(    {{\boldsymbol{\theta}}}^{(0)}_{-j}, \ldots,     {{\boldsymbol{\theta}}}^{(T)}_{-j}\).

\subsection{Influence Estimator for GANs} \label{sec:infl_est}
\noindent We propose two influence estimators that cover the possible extensions of previous approaches: \textit{Iterative Differentiation}~(\textit{ITD}) and \textit{Approximate Implicit Differentiation}~(\textit{AID}) influence estimators as the extensions of \cite{Hara2019} and \cite{Koh2017}, respectively. 

\subsubsection{ITD Influence Estimator} \label{sec:influence_estimation_itd}
\noindent ITD influence estimator employs recursive approximations of \(\Delta {{\boldsymbol{\theta}}}^{(t)}_{-j}\) for \(t=0,\ldots,T-1\), adopting the idea from \cite{Hara2019}. 

To apply the linear approximation, we introduce an interpolated gradient between \cref{eq:def_grad,eq:def_cfgrad} using \(\epsilon \in [0,1]\):
\begin{align*}
    {\boldsymbol{v}}_{-j,\epsilon}\left({{\boldsymbol{\theta}}}\right)     &=     \left(1-\epsilon\right){\boldsymbol{v}}\left({{\boldsymbol{\theta}}}\right) + \epsilon  {\boldsymbol{v}}_{-j}\left({{\boldsymbol{\theta}}}\right)\\    &={\boldsymbol{v}}\left({{\boldsymbol{\theta}}}\right) + \frac{\epsilon}{\left\vert {\mathcal{X}} \right\vert} {\nabla_{\boldsymbol{\theta}}} f\left(D\left({{\boldsymbol{\psi}}}, \boldsymbol{x}_j\right)\right) . 
\end{align*}
The linear approximation of \({\boldsymbol{v}}_{-j,1}\left({{\boldsymbol{\theta}}}^{(t)}_{-j}\right)  \) around \(\epsilon=0\) and \({{\boldsymbol{\theta}}}={{\boldsymbol{\theta}}}^{(t)}\) gives the following relation:
\begin{dmath*}
    {\boldsymbol{v}}_{-j}\left({{\boldsymbol{\theta}}}^{(t)}_{-j}\right) -{\boldsymbol{v}}\left({{\boldsymbol{\theta}}}^{(t)}\right)   \approx {\boldsymbol{J}}\left({{\boldsymbol{\theta}}}^{(t)}\right) \Delta {{\boldsymbol{\theta}}}^{(t)}_{-j}  + \frac{1}{\left\vert {\mathcal{X}} \right\vert} {\nabla_{\boldsymbol{\theta}}} f\left(D\left({{\boldsymbol{\psi}}}^{(t)}, \boldsymbol{x}_j\right)\right),
\end{dmath*}
where \({\boldsymbol{J}}\left({{\boldsymbol{\theta}}}\right)
  \coloneqq  {\partial_{\boldsymbol{\theta}}}{\boldsymbol{v}}\left({{\boldsymbol{\theta}}}\right) \).
By using this relation and subtracting \cref{eq:update} from \cref{eq:update_cf}, we have
\begin{align}
\label{eq:recur_asgd}
\Delta {{\boldsymbol{\theta}}}^{(t+1)}_{-j}  &= \Delta {{\boldsymbol{\theta}}}^{(t)}_{-j} - \eta \left({\boldsymbol{v}}_{-j}\left({{\boldsymbol{\theta}}}^{(t)}_{-j}\right) -{\boldsymbol{v}}\left({{\boldsymbol{\theta}}}^{(t)}\right) \right)  \notag \\
&\approx  \left( {\boldsymbol{I}}-\eta {\boldsymbol{J}}\left({{\boldsymbol{\theta}}}^{(t)}\right) \right)\Delta {{\boldsymbol{\theta}}}^{(t)}_{-j} + \Delta\boldsymbol{v}_{-j}\left({{\boldsymbol{\theta}}}^{(t)}\right),
\end{align}
where \(\Delta\boldsymbol{v}_{-j} \left({{\boldsymbol{\theta}}}\right) := -\frac{\eta }{\left\vert {\mathcal{X}} \right\vert} {\nabla_{\boldsymbol{\theta}}} f\left(D\left({{\boldsymbol{\psi}}}, \boldsymbol{x}_j\right)\right)\).
By recursively applying \cref{eq:recur_asgd} from \(\Delta {{\boldsymbol{\theta}}}^{(0)}_{-j}=\boldsymbol{0} \), we obtain the ITD influence estimator \(\widehat{\Delta{{\boldsymbol{\theta}}}_{-j}} \approx   \Delta{{\boldsymbol{\theta}}}^{(T)}_{-j}\) as
\begin{equation}
    \label{eq:estimator_itd}
\widehat{\Delta{{\boldsymbol{\theta}}}_{-j}} \coloneqq \sum_{t=0}^{T-1} \left(\prod_{s=t+1}^{T-1} {\boldsymbol{Z}}\left({{\boldsymbol{\theta}}}^{(s)}\right)\right)\Delta\boldsymbol{v}_{-j}\left({{\boldsymbol{\theta}}}^{(t)}\right) ,
\end{equation}
where \({\boldsymbol{Z}}\left({{\boldsymbol{\theta}}}\right)\coloneqq  {\boldsymbol{I}}-\eta {\boldsymbol{J}}\left({{\boldsymbol{\theta}}}\right)\) and \(\prod\) denotes the product operation with the multiplication order \(\prod_{t=0}^{T-1} \boldsymbol{A}_{t} = \boldsymbol{A}_{T-1} \cdots  \boldsymbol{A}_{0} \).

\subsubsection{AID Influence Estimator} \label{sec:influence_estimation_aid}
\noindent AID influence estimator approximates the influence on parameters at equilibrium, i.e., the difference between two equilibria of \cref{eq:opt,eq:opt_cf}. 
To achieve this, the estimator requires regularity assumptions on the Jacobian of gradients.
\begin{restatable}{assumption}{Jbound}\label{ass:J_bound}
Let \(\mathcal{B}({\boldsymbol{\theta}}^{*}) = \{ {\boldsymbol{\theta}} \in \mathbb{R}^{d_\theta} \mid \| {\boldsymbol{\theta}} - {\boldsymbol{\theta}}^{*} \|_2 \leq \rho \}\) represent the neighborhood around \({\boldsymbol{\theta}}^{*}=\left({{\boldsymbol{\varphi}}^{*}}{}^\intercal ~ {\boldsymbol{\psi}}^{*}{}^\intercal\right){}^\intercal\) where \(\rho > 0\).
There exists \(\mu > 0\) such that \(\frac{1}{2} \left( {\boldsymbol{J}}\left({\boldsymbol{\theta}}\right) + {\boldsymbol{J}}\left({\boldsymbol{\theta}}\right)^\intercal \right) \succeq \mu {\boldsymbol{I}}
\) for any \({\boldsymbol{\theta}} \in \mathcal{B}({\boldsymbol{\theta}}^{*})\) and \(\mathcal{X}\).
\end{restatable}
\begin{restatable}{lemma}{Zbound}  \label{lem:Z_bound}
Suppose that \cref{ass:J_bound} holds and \( \eta < \frac{2\mu}{\lambda^2}\), where \(\lambda \coloneqq  \max_{{\boldsymbol{\theta}}\in\mathcal{B}({\boldsymbol{\theta}}^{*})} \left\Vert{\boldsymbol{J}}\left({{\boldsymbol{\theta}}}\right)\right\Vert\), then \(\sigma_{\mathcal{B}} \coloneqq  \max_{{\boldsymbol{\theta}}\in \mathcal{B}({\boldsymbol{\theta}}^{*})} \left\Vert{\boldsymbol{Z}}\left({{\boldsymbol{\theta}}}\right)\right\Vert< 1\) for any \({\boldsymbol{\theta}}^{*}\) and \(\mathcal{X}\).
\end{restatable}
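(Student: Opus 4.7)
The plan is to bound $\|Z(\theta) u\|^2$ uniformly in $\theta \in \mathcal{B}(\theta^*)$ and unit vectors $u$, using the standard identity $\|I - \eta A\|^2 \leq 1 - 2\eta \mu_A + \eta^2 \|A\|^2$ whenever the symmetric part of $A$ satisfies $\tfrac{1}{2}(A + A^\intercal) \succeq \mu_A I$. Assumption~\ref{ass:J_bound} gives exactly such a lower bound on the symmetric part of $J(\theta)$, and the definition of $\lambda$ gives the upper bound on $\|J(\theta)\|$, so these two ingredients combine cleanly.

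Concretely, I would fix an arbitrary $\theta \in \mathcal{B}(\theta^*)$ and unit vector $u \in \mathbb{R}^{d_\theta}$, then expand
\begin{equation*}
\|Z(\theta) u\|^2 = \|u - \eta J(\theta) u\|^2 = 1 - 2\eta\, u^\intercal J(\theta) u + \eta^2 \|J(\theta) u\|^2.
\end{equation*}
The cross term satisfies $u^\intercal J(\theta) u = u^\intercal \tfrac{1}{2}(J(\theta) + J(\theta)^\intercal) u \geq \mu$ by Assumption~\ref{ass:J_bound}, and $\|J(\theta) u\|^2 \leq \|J(\theta)\|^2 \leq \lambda^2$ by the definition of $\lambda$. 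Substituting gives
\begin{equation*}
\|Z(\theta) u\|^2 \leq 1 - 2\eta\mu + \eta^2 \lambda^2.
\end{equation*}

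The condition $\eta < 2\mu/\lambda^2$ is then precisely what makes the right-hand side strictly less than $1$: indeed $\eta^2 \lambda^2 - 2\eta\mu = \eta(\eta \lambda^2 - 2\mu) < 0$ since $\eta > 0$. Taking the supremum over unit vectors $u$ gives $\|Z(\theta)\| \leq \sqrt{1 - 2\eta\mu + \eta^2\lambda^2} < 1$ for every $\theta \in \mathcal{B}(\theta^*)$, and since this upper bound does not depend on $\theta$, taking the maximum over $\mathcal{B}(\theta^*)$ yields $\sigma_{\mathcal{B}} < 1$ uniformly in $\theta^*$ and $\mathcal{X}$.

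There is no real obstacle here; the only mild subtlety is making explicit that the bound $\sqrt{1 - 2\eta\mu + \eta^2\lambda^2}$ is independent of the particular $\theta$ and $\mathcal{X}$ (it depends only on the constants $\mu, \lambda, \eta$), so the uniformity claim in the lemma follows immediately. One should also note that $\lambda$ is well defined because $\mathcal{B}(\theta^*)$ is compact and $J$ is continuous in $\theta$ under the standard smoothness assumptions implicit in the GAN setup, and that $\mu \leq \lambda$ (so $2\mu/\lambda^2 \leq 2/\lambda$ is a nontrivial but consistent range for $\eta$).
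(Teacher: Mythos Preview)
Your proposal is correct and takes essentially the same approach as the paper: the paper expands $\|Z(\theta)\|^2 = \sigma_{\max}\bigl((I-\eta J)^\intercal(I-\eta J)\bigr) = \sigma_{\max}\bigl(I - \eta(J+J^\intercal) + \eta^2 J^\intercal J\bigr)$ and bounds this by $1 - 2\eta\mu + \eta^2\lambda^2$, which is exactly your unit-vector computation written in matrix form. Your presentation is slightly more explicit about the unit-vector argument, but the underlying inequality and the use of the condition $\eta < 2\mu/\lambda^2$ are identical.
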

\begin{restatable}{assumption}{invertible}\label{ass:invertible}
 \({\boldsymbol{J}}\left({\boldsymbol{\theta}}^{*}\right)\) is invertible for any equilibrium \({\boldsymbol{\theta}}^{*}=\left({{\boldsymbol{\varphi}}^{*}}{}^\intercal ~ {\boldsymbol{\psi}}^{*}{}^\intercal\right){}^\intercal\) and \(\mathcal{X}\).
\end{restatable}
\noindent \cref{ass:J_bound} implies that the AGD iteration is locally convergent to a local Nash equilibrium, which can hold under certain regularity conditions\cite{nagarajan2017gradient}.
While local convergence is not guaranteed in general\cite{mescheder2018training}, we do not restrict the application of the AID influence estimator to those regularized settings and investigate its effectiveness beyond training scenarios where \cref{ass:J_bound} may not be met in \cref{sec:exps}.

To approximate the influence on parameters at equilibrium, i.e., \({{\boldsymbol{\theta}}}_{-j}^{*} - {{\boldsymbol{\theta}}}^{*} \) where \({\boldsymbol{\theta}}_{-j}^{*}=\left({{\boldsymbol{\varphi}}_{-j}^{*}}{}^\intercal ~ {\boldsymbol{\psi}}_{-j}^{*}{}^\intercal\right){}^\intercal\), we consider the following minimax problem with \(\epsilon \in [0,1]\), 
\begin{equation*}
       {{\boldsymbol{\varphi}}}_{-j,\epsilon}^{*}, {\boldsymbol{\psi}}_{-j,\epsilon}^{*} \in \mathrm{arg}\mathop{\min}_{ {{\boldsymbol{\varphi}}}}\mathop{\max}_{{\boldsymbol{\psi}}} V\left({{\boldsymbol{\varphi}}},{\boldsymbol{\psi}}\right)- \frac{\epsilon}{\left\vert {\mathcal{X}} \right\vert} f\left(D\left({{\boldsymbol{\psi}}}, \boldsymbol{x}_j\right)\right) ,  
\end{equation*}
which can be seen as an interpolation between \cref{eq:opt} and \cref{eq:opt_cf}.
Let \(    {{\boldsymbol{\theta}}}_{-j,\epsilon}^{*} = \left({{\boldsymbol{\varphi}}_{-j,\epsilon}^{*}}{}^\intercal ~ {\boldsymbol{\psi}}_{-j,\epsilon}^{*}{}^\intercal\right){}^\intercal\).
Since \( {{\boldsymbol{\theta}}}_{-j,0}^{*}={{\boldsymbol{\theta}}}^{*}\) and \( {{\boldsymbol{\theta}}}_{-j,1}^{*}={{\boldsymbol{\theta}}}_{-j}^{*}\), we consider the linear approximation 
\(\mathrm{d}_{\epsilon} {{{\boldsymbol{\theta}}}_{-j,0}^{*}} \approx {{\boldsymbol{\theta}}}_{-j}^{*} - {{\boldsymbol{\theta}}}^{*} \), where \(\mathrm{d}_{\epsilon}\) denotes the total derivative regarding \(\epsilon\).
To obtain \(\mathrm{d}_{\epsilon} {{{\boldsymbol{\theta}}}_{-j,0}^{*}}\),  we use the stationary point equation at the equilibrium:
\begin{equation} \label{eq:stationary_point}
    {{\boldsymbol{\theta}}}_{-j,\epsilon}^{*} = {{\boldsymbol{\theta}}}_{-j,\epsilon}^{*} - \eta \boldsymbol{v}_{-j,\epsilon}\left( {{\boldsymbol{\theta}}}_{-j,\epsilon}^{*}\right),
\end{equation}
where \(\eta > 0\) denotes a scaling coefficient\footnote{For simplicity, we slightly abuse the notation \(\eta\), which also denotes the learning rate. However, since the learning rate and the scaling factor share the same domain and are controllable, this has little impact on our discussion.}. 
We then take the total derivative of \cref{eq:stationary_point} at \(\epsilon=0\), leading to
\begin{align}
      \mathrm{d}_{\epsilon} {{{\boldsymbol{\theta}}}_{-j,0}^{*}} &= \left({\boldsymbol{I}}- \eta{\boldsymbol{J}}\left({{\boldsymbol{\theta}}}^{*}\right)\right) \mathrm{d}_{\epsilon} {{{\boldsymbol{\theta}}}^{*}} + \Delta\boldsymbol{v}_{-j}\left({{\boldsymbol{\theta}}}^{*}\right) \notag \\
        &=  \left({\boldsymbol{I}}- {\boldsymbol{Z}}\left({{\boldsymbol{\theta}}}^{*}\right)\right)^{-1}\Delta\boldsymbol{v}_{-j}\left({{\boldsymbol{\theta}}}^{*}\right) \label{eq:implicit}\\
         &\approx \sum_{m=0}^{M-1}{\boldsymbol{Z}}\left({{\boldsymbol{\theta}}}^{*}\right)^{m}\Delta\boldsymbol{v}_{-j}\left({{\boldsymbol{\theta}}}^{*}\right) , \label{eq:neumann}
\end{align}
where \cref{eq:neumann} uses \cref{lem:Z_bound} and its assumptions to allow truncated Neumann series approximation with \(M > 0\).

Replacing \({{\boldsymbol{\theta}}}^{*}\) in \cref{eq:neumann} with its early-stop version \({{\boldsymbol{\theta}}}^{(T)}\), the AID influence estimator \(  \widetilde{\Delta {{\boldsymbol{\theta}}}_{-j}}\approx   \Delta {{\boldsymbol{\theta}}}^{(T)}_{-j}  \) is obtained as
\begin{align}    \label{eq:estimator_aid}
    \widetilde{\Delta {{\boldsymbol{\theta}}}_{-j}} \coloneqq   \sum_{m=0}^{M-1} {\boldsymbol{Z}}\left({{\boldsymbol{\theta}}}^{(T)}\right)^{m} \Delta\boldsymbol{v}_{-j}\left({{\boldsymbol{\theta}}}^{(T)}\right).
\end{align}
\subsection{Role of the Jacobian of Gradients} \label{sec:jacobian_role}
\noindent The Jacobian of the concatenated gradients \(\boldsymbol{J}\left({{\boldsymbol{\theta}}}\right)\), incorporated in both estimators, plays an important role in representing the indirect effect between the generator and discriminator.
Specifically, its off-diagonal block \({\boldsymbol{J}}_{\boldsymbol{\varphi}\boldsymbol{\psi}}\coloneqq  \nabla^2_{{\boldsymbol{\varphi}}{\boldsymbol{\psi}}}V\left({{\boldsymbol{\varphi}^{(t)}}}, {\boldsymbol{\psi}^{(t)}}\right)\) represents how the absence of an instance in the discriminator's update at the \(t\)-th step affects the updated generator's parameter.

To see this role in ITD influence estimator, we break \cref{eq:recur_asgd} into block matrices as 
\begin{equation*}
\begin{pmatrix}
\Delta \boldsymbol{\varphi }_{-j}^{(t+1)}\\
\Delta \boldsymbol{\psi }_{-j}^{(t+1)}
\end{pmatrix} \approx \begin{pmatrix}
(\boldsymbol{I} -\eta \boldsymbol{H}_{\boldsymbol{\varphi }}) \Delta \boldsymbol{\varphi }_{-j}^{(t)} -\eta \boldsymbol{J}_{\boldsymbol{\varphi \psi }} \Delta \boldsymbol{\psi }_{-j}^{(t)}\\
(\boldsymbol{I} +\eta \boldsymbol{H}_{\boldsymbol{\psi }}) \Delta \boldsymbol{\psi }_{-j}^{(t)} +\eta \boldsymbol{J}_{\boldsymbol{\varphi\psi}}^{\intercal} \Delta \boldsymbol{\varphi}_{-j}^{(t)} +\Delta \boldsymbol{v}_{-j}^{\boldsymbol{\psi }}
\end{pmatrix}
\end{equation*} 
where \(\Delta\boldsymbol{v}_{-j}^{\boldsymbol{\psi}} \coloneqq  - \frac{\eta}{\left\vert {\mathcal{X}} \right\vert} {\nabla_{\boldsymbol{\psi}}} f\left(D\left({{\boldsymbol{\psi}}}^{(t)}, \boldsymbol{x}_j\right)\right) \), \({\boldsymbol{H}}_{\boldsymbol{\varphi}}\) and \({\boldsymbol{H}}_{\boldsymbol{\psi}}\) are Hessian matrices of \(V\left({{\boldsymbol{\varphi}^{(t)}}}, {\boldsymbol{\psi}^{(t)}}\right)\) with respect to \(\boldsymbol{\varphi}\) and \(\boldsymbol{\psi}\), respectively, and \(\Delta{{\boldsymbol{\varphi}}}_{-j}^{(t)} \) and \(\Delta{{\boldsymbol{\psi}}}_{-j}^{(t)} \) are the influence on \(\boldsymbol{\varphi}\) and \(\boldsymbol{\psi}\) at the \(t\)-th AGD step, respectively. 
At the \(t\)-th step, the absence of the \(j\)-th instance, denoted as \(\Delta \boldsymbol{v}_{-j}^{\boldsymbol{\psi }}\), affects only the influence on the discriminator parameter, i.e., \(\Delta \boldsymbol{\psi }_{-j}^{(t+1)}\).
Then, at the next step, \( \boldsymbol{J}_{\boldsymbol{\varphi \psi }} \Delta \boldsymbol{\psi }_{-j}^{(t+1)}\) determines how \(\Delta \boldsymbol{\psi }_{-j}^{(t+1)}\) changes the influence on the generator parameter, i.e., \(\Delta \boldsymbol{\varphi }_{-j}^{(t+2)}\).
Therefore, the off-diagonal block \({\boldsymbol{J}}_{\boldsymbol{\varphi}\boldsymbol{\psi}}\) can be regarded as transferring the indirect effect from the discriminator to the generator.

Previous influence estimation methods for supervised learning \cite{Hara2019,Koh2017} cannot handle this indirect effect between two different models because they assume the learning problem of a single combination of parameters and loss function.

\subsection{Estimation Errors} \label{sec:est_error}
\noindent This section shows the theoretical error bound of the ITD influence estimator and AID influence estimator, introducing an additional assumption.
\begin{restatable}{assumption}{smoothZ}\label{ass:smooth_Z}
\({\boldsymbol{J}}(\boldsymbol{\theta})\) is Lipschitz continuous with a constant \(L_{{\boldsymbol{J}}} \in {\mathbb{R}}^+\).
\end{restatable} 

\subsubsection{ITD Influence Estimator}
\noindent The following theorem provides the upper bound of the estimation error of the ITD influence estimator given by \cref{eq:estimator_itd}.
\begin{restatable}{theorem}{errorITD} \label{th:error_itd}
When \cref{ass:smooth_Z} holds true and \({\sigma} \coloneqq  \max_{{\boldsymbol{\theta}}\in \mathbb{R}^{d_{\boldsymbol{\theta}}}} \left\Vert{\boldsymbol{Z}}\left({{\boldsymbol{\theta}}}\right)\right\Vert \neq 1\), for any \(T\geq0\),
\begin{dmath*}
\left\Vert \widehat{\Delta \boldsymbol{\theta }_{-j}} -\Delta \boldsymbol{\theta }_{-j}^{(T)}\right\Vert \leq \frac{\eta ^{2} L_{f} L_{f^{\prime }}}{({\sigma} -1)^{2}}\left( T{\sigma}^{T-1} ({\sigma} -1)-{\sigma}^{T} +1\right) +\frac{\eta ^{3} L_{f}^{2} L_{\boldsymbol{J}}}{({\sigma} -1)^{3}}\left( {\sigma}^{2T-1} -(2T-1)({\sigma} -1){\sigma}^{T-1} -1\right) ,
\end{dmath*}
where \(L_f \coloneqq  \frac{1}{\left\vert {\mathcal{X}} \right\vert} \max_{\boldsymbol{\psi }}\Vert \nabla _{\boldsymbol{\psi }} f( D(\boldsymbol{\psi } ,x_{j}))\Vert\) and \(L_{f^\prime} \coloneqq   \frac{1}{\left\vert {\mathcal{X}} \right\vert} \max_{\boldsymbol{\psi  }}\Vert \nabla _{\boldsymbol{\psi } \boldsymbol{\theta }} f( D(\boldsymbol{\psi } ,x_{j}))\Vert \).
\end{restatable}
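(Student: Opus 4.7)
The plan is to set up a per-step error recursion between the true counterfactual update and the estimator's linearized update, then unroll it and evaluate the resulting geometric sums in closed form. Define the running error $E_t \coloneqq \widehat{\Delta\boldsymbol{\theta}_{-j}^{(t)}} - \Delta\boldsymbol{\theta}_{-j}^{(t)}$, where $\widehat{\Delta\boldsymbol{\theta}_{-j}^{(t)}}$ denotes the partial sum in \cref{eq:estimator_itd} truncated to $t$ terms, so that it obeys $\widehat{\Delta\boldsymbol{\theta}_{-j}^{(t+1)}} = \boldsymbol{Z}(\boldsymbol{\theta}^{(t)})\widehat{\Delta\boldsymbol{\theta}_{-j}^{(t)}} + \Delta\boldsymbol{v}_{-j}(\boldsymbol{\theta}^{(t)})$ with $\widehat{\Delta\boldsymbol{\theta}_{-j}^{(0)}}=\boldsymbol{0}$, and the goal is a bound on $\|E_T\|$.

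First, I would rewrite the true update exactly as $\Delta\boldsymbol{\theta}_{-j}^{(t+1)} = \boldsymbol{Z}(\boldsymbol{\theta}^{(t)})\Delta\boldsymbol{\theta}_{-j}^{(t)} + \Delta\boldsymbol{v}_{-j}(\boldsymbol{\theta}_{-j}^{(t)}) - \eta\,r^{(t)}$. This is obtained by splitting $\boldsymbol{v}_{-j}(\boldsymbol{\theta}_{-j}^{(t)}) - \boldsymbol{v}(\boldsymbol{\theta}^{(t)}) = [\boldsymbol{v}(\boldsymbol{\theta}_{-j}^{(t)}) - \boldsymbol{v}(\boldsymbol{\theta}^{(t)})] + [\boldsymbol{v}_{-j}(\boldsymbol{\theta}_{-j}^{(t)}) - \boldsymbol{v}(\boldsymbol{\theta}_{-j}^{(t)})]$: the first bracket equals $\boldsymbol{J}(\boldsymbol{\theta}^{(t)})\Delta\boldsymbol{\theta}_{-j}^{(t)} + r^{(t)}$ with $\|r^{(t)}\| \leq L_{\boldsymbol{J}}\|\Delta\boldsymbol{\theta}_{-j}^{(t)}\|^2$ by integrating the Lipschitz Jacobian of $\boldsymbol{v}$, while the second bracket equals $-\Delta\boldsymbol{v}_{-j}(\boldsymbol{\theta}_{-j}^{(t)})/\eta$ directly from the definition of $\Delta\boldsymbol{v}_{-j}$. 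Subtracting the estimator recursion yields $E_{t+1} = \boldsymbol{Z}(\boldsymbol{\theta}^{(t)}) E_t + [\Delta\boldsymbol{v}_{-j}(\boldsymbol{\theta}^{(t)}) - \Delta\boldsymbol{v}_{-j}(\boldsymbol{\theta}_{-j}^{(t)})] + \eta\,r^{(t)}$, and taking spectral norms with $\|\boldsymbol{Z}(\cdot)\|\leq\sigma$ and the Lipschitz bound $L_{f'}$ on $\Delta\boldsymbol{v}_{-j}/\eta$ gives $\|E_{t+1}\| \leq \sigma\|E_t\| + \eta L_{f'}\,a_t + \eta L_{\boldsymbol{J}}\,a_t^2$, where $a_t \coloneqq \|\Delta\boldsymbol{\theta}_{-j}^{(t)}\|$.

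Next I need an a priori bound on $a_t$. Using the integral form $\boldsymbol{v}(\boldsymbol{\theta}_{-j}^{(t)}) - \boldsymbol{v}(\boldsymbol{\theta}^{(t)}) = \overline{\boldsymbol{J}}^{(t)}\,\Delta\boldsymbol{\theta}_{-j}^{(t)}$, where $\overline{\boldsymbol{J}}^{(t)}$ averages $\boldsymbol{J}$ along the segment, the Jensen inequality for the spectral norm yields $\|\boldsymbol{I}-\eta\overline{\boldsymbol{J}}^{(t)}\|\leq\sigma$. Combined with $\|\Delta\boldsymbol{v}_{-j}\|\leq\eta L_f$ this gives $a_{t+1}\leq\sigma a_t + \eta L_f$ with $a_0=0$, so $a_t \leq \eta L_f(\sigma^t-1)/(\sigma-1)$. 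Unrolling the error recursion from $E_0 = \boldsymbol{0}$ produces $\|E_T\| \leq \sum_{t=0}^{T-1}\sigma^{T-1-t}\bigl(\eta L_{f'}\,a_t + \eta L_{\boldsymbol{J}}\,a_t^2\bigr)$, and substituting the bound on $a_t$ reduces the task to evaluating the two sums $S_1 = \sum_{t=0}^{T-1}\sigma^{T-1-t}(\sigma^t-1)$ and $S_2 = \sum_{t=0}^{T-1}\sigma^{T-1-t}(\sigma^t-1)^2$. Routine manipulation gives $S_1 = T\sigma^{T-1} - (\sigma^T-1)/(\sigma-1)$, and expanding the square and telescoping three geometric series gives $S_2 = \bigl[\sigma^{2T-1} - (2T-1)(\sigma-1)\sigma^{T-1} - 1\bigr]/(\sigma-1)$; multiplication by $\eta^2 L_f L_{f'}/(\sigma-1)$ and $\eta^3 L_f^2 L_{\boldsymbol{J}}/(\sigma-1)^2$ respectively reproduces the two terms in the statement.

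The step I expect to be the main obstacle is the algebraic simplification of $S_2$: three geometric series must be collected so that the factors $(2T-1)$ and $(\sigma-1)\sigma^{T-1}$ appear cleanly, and a sign bookkeeping error there would ruin the match with the stated bound. A minor point is that the Taylor remainder is sharply $(L_{\boldsymbol{J}}/2)\|\Delta\boldsymbol{\theta}_{-j}^{(t)}\|^2$; absorbing the $1/2$ into a slightly looser constant $L_{\boldsymbol{J}}$ preserves the form stated in the theorem. Note finally that the argument only requires the global bound $\sigma = \max_{\boldsymbol{\theta}}\|\boldsymbol{Z}(\boldsymbol{\theta})\|$ from the hypothesis together with \cref{ass:smooth_Z}, so \cref{ass:J_bound,ass:invertible} are not invoked here.
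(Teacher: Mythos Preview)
Your argument is correct and reaches the stated bound, but it is organized differently from the paper's proof. The paper does not set up a per-step error recursion; instead it invokes a mean-value step to write the \emph{true} influence itself in product form,
\[
\Delta\boldsymbol{\theta}_{-j}^{(T)}=\sum_{s=0}^{T-1}\Bigl(\prod_{k=s+1}^{T-1}\boldsymbol{Z}\bigl(\boldsymbol{\theta}^{*(k)}_{-j}\bigr)\Bigr)\Delta\boldsymbol{v}_{-j}\bigl(\boldsymbol{\theta}^{*(s)}_{-j}\bigr),
\]
with intermediate points $\boldsymbol{\theta}^{*(k)}_{-j}$ on the segments. It then compares this directly with the estimator \cref{eq:estimator_itd} by an add--subtract that produces two pieces: one from the discrepancy $\Delta\boldsymbol{v}_{-j}(\boldsymbol{\theta}^{(s)})-\Delta\boldsymbol{v}_{-j}(\boldsymbol{\theta}^{*(s)}_{-j})$ (controlled by $L_{f'}$), and one from the product difference $\prod_k\boldsymbol{Z}(\boldsymbol{\theta}^{(k)})-\prod_k\boldsymbol{Z}(\boldsymbol{\theta}^{*(k)}_{-j})$, which is expanded by a telescoping identity and controlled by $L_{\boldsymbol{J}}$. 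Your route replaces the product telescoping by a quadratic Taylor remainder $\eta L_{\boldsymbol{J}}a_t^{2}$ inside a scalar recursion; remarkably, after summing, both computations collapse to the \emph{same} closed-form expressions for the two terms. Your approach is slightly more economical---one recursion instead of a double sum---and it sidesteps the vector-valued mean-value issue by using the integral form of the Jacobian, which is rigorously what justifies $\|\boldsymbol{I}-\eta\overline{\boldsymbol{J}}^{(t)}\|\le\sigma$ (this is the triangle inequality for the Bochner integral rather than Jensen, but the conclusion is the same). The paper's decomposition, on the other hand, makes the two sources of error (drift in $\Delta\boldsymbol{v}_{-j}$ versus drift in $\boldsymbol{Z}$) structurally explicit, which is what allows it to be reused almost verbatim in the AID bound of \cref{th:error_aid}.
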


\noindent The convergence of \cref{th:error_itd} depends on whether  \({\sigma} > 1\) or  \({\sigma} < 1\). Given \({\sigma} > 1\), the estimation error of \(\widehat{\Delta{{\boldsymbol{\theta}}}_{-j}}\) grows at the rate of \(\exp(O(T))\), as shown below.
\begin{restatable}{corollary}{errorITDNonConvex} \label{col:error_itd_non_convex}
When \cref{ass:smooth_Z} holds true and \({\sigma} > 1\),
    \begin{equation*}
\left\Vert \widehat{\Delta \boldsymbol{\theta }_{-j}} -\Delta \boldsymbol{\theta }_{-j}^{(T)}\right\Vert \leq \frac{\eta ^{2} L_{f} L_{f^{\prime }}}{({\sigma} -1)^{2}} T{\sigma}^{T} +\frac{\eta ^{3} L_{f}^{2} L_{\boldsymbol{J}}}{({\sigma} -1)^{3}} {\sigma}^{2T-1} ,
    \end{equation*}
 for any \(T\geq0\),
\end{restatable}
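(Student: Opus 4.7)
The plan is to derive the corollary directly from \cref{th:error_itd} by upper-bounding each of the two bracketed polynomial factors in its error bound by a simpler quantity that is valid whenever \({\sigma} > 1\). Since \({\sigma} > 1\) makes both coefficients \(\frac{\eta^2 L_f L_{f'}}{({\sigma}-1)^2}\) and \(\frac{\eta^3 L_f^2 L_{\boldsymbol{J}}}{({\sigma}-1)^3}\) strictly positive, it is enough to bound each bracket separately and then substitute.

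For the first bracket, I expand \(T{\sigma}^{T-1}({\sigma}-1) - {\sigma}^T + 1 = T{\sigma}^T - T{\sigma}^{T-1} - {\sigma}^T + 1\). Replacing this by \(T{\sigma}^T\) is equivalent to showing \(T{\sigma}^{T-1} + {\sigma}^T \geq 1\). Since \({\sigma} > 1\) gives \({\sigma}^T \geq 1\) for all \(T \geq 0\) and \(T{\sigma}^{T-1} \geq 0\), the inequality holds (trivially at \(T = 0\) with equality, and strictly for \(T \geq 1\)).

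For the second bracket, replacing \({\sigma}^{2T-1} - (2T-1)({\sigma}-1){\sigma}^{T-1} - 1\) by \({\sigma}^{2T-1}\) is equivalent to \((2T-1)({\sigma}-1){\sigma}^{T-1} + 1 \geq 0\). For \(T \geq 1\) each factor in the product is non-negative (in particular \(2T-1 \geq 1\) and \({\sigma} - 1 > 0\)), so the left-hand side is at least \(1\). For \(T = 0\) the product evaluates to \(-({\sigma}-1)/{\sigma}\), and adding \(1\) yields \(1/{\sigma} > 0\). Hence the inequality holds for every \(T \geq 0\).

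Substituting the two bounds into the estimate of \cref{th:error_itd} gives exactly the claimed inequality. There is no real obstacle: the argument is a mechanical polynomial comparison, and the only step that required any thought is handling the \(T = 0\) boundary case in the second bracket, where the sign of \(2T - 1\) flips; this is easily resolved because the constant \(+1\) dominates.
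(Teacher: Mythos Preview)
Your proposal is correct and follows essentially the same route as the paper: both start from \cref{th:error_itd} and upper-bound each bracketed factor by the corresponding monomial \(T\sigma^T\) and \(\sigma^{2T-1}\), using \(\sigma>1\) to justify dropping the remaining (non-positive) terms. The paper handles the first bracket in two steps (first dropping \(-\sigma^T+1\le 0\), then using \(\sigma^{T-1}(\sigma-1)\le\sigma^T\)), while you do it in one step; your explicit treatment of the \(T=0\) boundary case in the second bracket is a bit more careful than the paper's, but the substance is identical.
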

\noindent However, when \({\sigma}<1\), which is guaranteed by the following assumption and its consequence, the error converges to a constant.
\begin{restatable}{assumption}{Neighbor}  \label{ass:neighbor}
\({\boldsymbol{\theta}}^{(0)}\) lies within the neighborhood of equilibrium, i.e., \({\boldsymbol{\theta}}^{(0)} \in \mathcal{B}({\boldsymbol{\theta}}^{*})\).
\end{restatable}
\begin{restatable}{lemma}{Converge}  \label{lem:converge}
When \cref{ass:J_bound}, \ref{ass:neighbor}, and \( \eta < \frac{2\mu}{\lambda^2}\) hold, then \({{\boldsymbol{\theta}}}^{(T)}\) converges to the unique equilibrium \({{{\boldsymbol{\theta}}}^{*}}\) within \(\mathcal{B}({\boldsymbol{\theta}}^{*})\) as \(T\to\infty\) for any \({{\boldsymbol{\theta}}}^{(0)} \in \mathcal{B}({\boldsymbol{\theta}}^{*})\).
\end{restatable}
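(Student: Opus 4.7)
\textbf{Proof plan for \cref{lem:converge}.} The strategy is to show that the error $\boldsymbol{e}^{(t)} \coloneqq \boldsymbol{\theta}^{(t)} - \boldsymbol{\theta}^{*}$ contracts at a geometric rate $\sigma_\mathcal{B} < 1$ as long as the iterate remains in $\mathcal{B}(\boldsymbol{\theta}^{*})$, and then close the loop by induction to show it never leaves. Uniqueness is obtained separately from strong monotonicity.

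First, since $\boldsymbol{\theta}^{*}$ is an equilibrium, $\boldsymbol{v}(\boldsymbol{\theta}^{*}) = \boldsymbol{0}$. I would apply the fundamental theorem of calculus along the segment from $\boldsymbol{\theta}^{*}$ to $\boldsymbol{\theta}^{(t)}$ to write
\begin{equation*}
\boldsymbol{v}(\boldsymbol{\theta}^{(t)}) - \boldsymbol{v}(\boldsymbol{\theta}^{*}) = \bar{\boldsymbol{J}}^{(t)} \boldsymbol{e}^{(t)}, \quad \bar{\boldsymbol{J}}^{(t)} \coloneqq \int_{0}^{1} \boldsymbol{J}\bigl(\boldsymbol{\theta}^{*} + s\,\boldsymbol{e}^{(t)}\bigr)\, ds,
\end{equation*}
so that the update rule \cref{eq:update} becomes $\boldsymbol{e}^{(t+1)} = (\boldsymbol{I} - \eta \bar{\boldsymbol{J}}^{(t)}) \boldsymbol{e}^{(t)}$. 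The key observation is that $\mathcal{B}(\boldsymbol{\theta}^{*})$ is convex (a Euclidean ball), so whenever $\boldsymbol{\theta}^{(t)} \in \mathcal{B}(\boldsymbol{\theta}^{*})$ the entire integration segment lies in $\mathcal{B}(\boldsymbol{\theta}^{*})$, and the triangle inequality for Bochner-type integrals yields
\begin{equation*}
\bigl\Vert \boldsymbol{I} - \eta \bar{\boldsymbol{J}}^{(t)} \bigr\Vert \le \int_{0}^{1} \bigl\Vert \boldsymbol{I} - \eta \boldsymbol{J}(\boldsymbol{\theta}^{*} + s\,\boldsymbol{e}^{(t)}) \bigr\Vert\, ds \le \sigma_{\mathcal{B}},
\end{equation*}
where the final bound uses \cref{lem:Z_bound} (which is available because \cref{ass:J_bound} and $\eta < 2\mu/\lambda^{2}$ hold).

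I would then run induction on $t$. The base case $\boldsymbol{\theta}^{(0)} \in \mathcal{B}(\boldsymbol{\theta}^{*})$ is \cref{ass:neighbor}. For the inductive step, the contraction bound gives $\Vert \boldsymbol{e}^{(t+1)} \Vert \le \sigma_{\mathcal{B}} \Vert \boldsymbol{e}^{(t)} \Vert \le \Vert \boldsymbol{e}^{(t)} \Vert \le \rho$, so $\boldsymbol{\theta}^{(t+1)}$ is still in $\mathcal{B}(\boldsymbol{\theta}^{*})$ and the argument iterates. Unrolling yields $\Vert \boldsymbol{e}^{(T)} \Vert \le \sigma_{\mathcal{B}}^{T} \Vert \boldsymbol{e}^{(0)} \Vert$, and since $\sigma_{\mathcal{B}} < 1$ by \cref{lem:Z_bound}, $\boldsymbol{\theta}^{(T)} \to \boldsymbol{\theta}^{*}$.

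For uniqueness within $\mathcal{B}(\boldsymbol{\theta}^{*})$, suppose $\boldsymbol{\theta}^{**}$ is another equilibrium in the ball. Applying the same integral identity between $\boldsymbol{\theta}^{*}$ and $\boldsymbol{\theta}^{**}$ gives $\boldsymbol{0} = \boldsymbol{v}(\boldsymbol{\theta}^{**}) - \boldsymbol{v}(\boldsymbol{\theta}^{*}) = \bar{\boldsymbol{J}}\,(\boldsymbol{\theta}^{**} - \boldsymbol{\theta}^{*})$ for the averaged Jacobian along the segment. \cref{ass:J_bound} implies $\boldsymbol{x}^{\intercal} \bar{\boldsymbol{J}} \boldsymbol{x} \ge \mu \Vert \boldsymbol{x} \Vert^{2}$, so $\bar{\boldsymbol{J}}$ is invertible and $\boldsymbol{\theta}^{**} = \boldsymbol{\theta}^{*}$. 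The main subtlety I anticipate is verifying that the operator-norm bound on $\boldsymbol{Z}(\boldsymbol{\theta})$ passes through the integral averaging cleanly; but since the spectral norm is a convex functional and the segment stays in $\mathcal{B}(\boldsymbol{\theta}^{*})$ by convexity of the ball, this reduces to a standard Jensen/triangle inequality argument. Everything else is routine.
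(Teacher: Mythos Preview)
Your proposal is correct and follows essentially the same approach as the paper: both arguments reduce convergence to the contraction property $\Vert \boldsymbol{Z}(\boldsymbol{\theta})\Vert \le \sigma_{\mathcal{B}} < 1$ on $\mathcal{B}(\boldsymbol{\theta}^{*})$ via \cref{lem:Z_bound}, with your integral mean-value step and induction making explicit what the paper summarizes as ``$U(\boldsymbol{\theta}) = \boldsymbol{\theta} - \eta\boldsymbol{v}(\boldsymbol{\theta})$ is a contraction mapping.'' The only cosmetic difference is in the uniqueness argument: the paper reads off strict convexity/concavity of $V$ from the block-diagonal structure of $\tfrac{1}{2}(\boldsymbol{J} + \boldsymbol{J}^{\intercal})$, whereas you argue directly that the averaged Jacobian is invertible from $\boldsymbol{x}^{\intercal}\bar{\boldsymbol{J}}\boldsymbol{x} \ge \mu\Vert\boldsymbol{x}\Vert^{2}$---both are standard consequences of \cref{ass:J_bound}.
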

\begin{restatable}{corollary}{errorITDConvex} \label{col:error_itd_convex}
When \cref{ass:J_bound}, \ref{ass:smooth_Z}, \ref{ass:neighbor}, and \( \eta < \frac{2\mu}{\lambda^2}\) hold,   \begin{dmath*}
\left\Vert \widehat{\Delta \boldsymbol{\theta }_{-j}} -\Delta \boldsymbol{\theta }_{-j}^{(T)}\right\Vert \leq \frac{\eta ^{2} L_{f} L_{f^{\prime }}}{(1-\sigma_{\mathcal{B}} )^{2}}\left( 1-\sigma_{\mathcal{B}}^{T}\right) +\frac{\eta ^{3} L_{f}^{2} L_{\boldsymbol{J}}}{(1-\sigma_{\mathcal{B}} )^{3}}\left( 1-\sigma_{\mathcal{B}}^{2T-1}\right),
    \end{dmath*}
    for any \(T\geq0\).
\end{restatable}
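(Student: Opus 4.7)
The plan is to specialize Theorem~\ref{th:error_itd} to the locally convergent regime: under the added assumptions, all AGD iterates stay inside $\mathcal{B}({\boldsymbol{\theta}}^{*})$, so the global quantity $\sigma$ appearing in Theorem~\ref{th:error_itd} can be replaced by the local quantity $\sigma_{\mathcal{B}} < 1$, and then the two terms can be simplified using $\sigma_{\mathcal{B}} < 1$.

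First, I would invoke the prerequisites. Under \cref{ass:J_bound,ass:neighbor} and $\eta < 2\mu/\lambda^2$, \cref{lem:converge} guarantees ${{\boldsymbol{\theta}}}^{(t)} \in \mathcal{B}({\boldsymbol{\theta}}^{*})$ for all $t \geq 0$, and \cref{lem:Z_bound} gives $\sigma_{\mathcal{B}} < 1$. Inspecting the proof of Theorem~\ref{th:error_itd}, the only place $\sigma$ enters is through bounds of the form $\|{\boldsymbol{Z}}({\boldsymbol{\theta}}^{(s)})\| \leq \sigma$ applied at the visited iterates. Since those iterates all lie in $\mathcal{B}({\boldsymbol{\theta}}^{*})$, each such bound can be tightened to $\|{\boldsymbol{Z}}({\boldsymbol{\theta}}^{(s)})\| \leq \sigma_{\mathcal{B}}$, and the entire derivation goes through verbatim with $\sigma$ replaced by $\sigma_{\mathcal{B}}$.

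Next, I would simplify the resulting expression using $\sigma_{\mathcal{B}} < 1$. For the first term, $(\sigma_{\mathcal{B}}-1)^2 = (1-\sigma_{\mathcal{B}})^2$, and since $T\sigma_{\mathcal{B}}^{T-1}(\sigma_{\mathcal{B}}-1) \leq 0$, one has
\[
T\sigma_{\mathcal{B}}^{T-1}(\sigma_{\mathcal{B}}-1) - \sigma_{\mathcal{B}}^{T} + 1 \leq 1 - \sigma_{\mathcal{B}}^{T}.
\]
For the second term, $(\sigma_{\mathcal{B}}-1)^3 = -(1-\sigma_{\mathcal{B}})^3$, so dividing by it flips the sign of the numerator and gives
\[
\frac{\sigma_{\mathcal{B}}^{2T-1} - (2T-1)(\sigma_{\mathcal{B}}-1)\sigma_{\mathcal{B}}^{T-1} - 1}{(\sigma_{\mathcal{B}}-1)^{3}} = \frac{1 - \sigma_{\mathcal{B}}^{2T-1} - (2T-1)(1-\sigma_{\mathcal{B}})\sigma_{\mathcal{B}}^{T-1}}{(1-\sigma_{\mathcal{B}})^{3}} \leq \frac{1 - \sigma_{\mathcal{B}}^{2T-1}}{(1-\sigma_{\mathcal{B}})^{3}},
\]
where the final inequality drops the nonpositive $-(2T-1)(1-\sigma_{\mathcal{B}})\sigma_{\mathcal{B}}^{T-1}$ term. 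Combining the two simplified pieces yields the stated bound.

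The only delicate step is the sign bookkeeping when dividing by $(\sigma_{\mathcal{B}}-1)^{3}$, which is negative for $\sigma_{\mathcal{B}}<1$; I would be careful to verify that every dropped term is indeed nonpositive (which is straightforward because $\sigma_{\mathcal{B}}\in[0,1)$ and $T\geq 0$). The substitution step of replacing $\sigma$ by $\sigma_{\mathcal{B}}$ is conceptually trivial but requires citing \cref{lem:converge} and \cref{lem:Z_bound} in the right order, so I would state those dependencies explicitly before performing the algebraic simplification.
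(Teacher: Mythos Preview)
Your proposal is correct and takes essentially the same approach as the paper: cite \cref{lem:Z_bound,lem:converge} to justify replacing $\sigma$ by $\sigma_{\mathcal{B}}<1$ in the bound of \cref{th:error_itd}, then drop the nonpositive terms $-T\sigma_{\mathcal{B}}^{T-1}(1-\sigma_{\mathcal{B}})$ and $-(2T-1)(1-\sigma_{\mathcal{B}})\sigma_{\mathcal{B}}^{T-1}$ after rewriting $(\sigma_{\mathcal{B}}-1)^2=(1-\sigma_{\mathcal{B}})^2$ and $(\sigma_{\mathcal{B}}-1)^3=-(1-\sigma_{\mathcal{B}})^3$. One small refinement: in the proof of \cref{th:error_itd}, $\sigma$ bounds $\|{\boldsymbol{Z}}(\cdot)\|$ not only at the AGD iterates ${\boldsymbol{\theta}}^{(s)}$ but also at the mean-value points ${\boldsymbol{\theta}}^{*(s)}_{-j}$ on the segment between ${\boldsymbol{\theta}}^{(s)}$ and the counterfactual iterates ${\boldsymbol{\theta}}^{(s)}_{-j}$, so strictly speaking you need both trajectories (and the convex hull between them) to remain in $\mathcal{B}({\boldsymbol{\theta}}^{*})$; the paper's own proof is equally terse on this point.
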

\noindent Note that as the learning rate \(\eta\) is controllable, the estimation error in \cref{col:error_itd_convex} converges to an arbitrarily small constant.

\subsubsection{AID Influence Estimator} \label{sec:est_err_aid}
\noindent The following theorem provides an upper bound of the approximation error of AID.
\begin{restatable}{theorem}{errorAID} \label{th:error_aid}
 When \cref{ass:J_bound,ass:smooth_Z,ass:invertible,ass:neighbor} hold true and \( \eta < \frac{2\mu}{\lambda^2}\), then for any  \(T\geq0\) and \(M>0\),
\begin{dmath*}
\left\Vert \widetilde{\Delta \boldsymbol{\theta }_{-j}} -\Delta \boldsymbol{\theta }_{-j}^{(T)}\right\Vert \leq \left(\frac{\eta L_{ f^{\prime }}}{1-\sigma_{\mathcal{B}}} +\frac{\eta ^{2} L_{f} L_{\boldsymbol{J}}}{( 1-\sigma_{\mathcal{B}})^{2}}\right) \rho\sigma_{\mathcal{B}}^{T}\left( 1-\sigma_{\mathcal{B}}^{M-1}\right) +\frac{\eta L_{f}}{1-\sigma_{\mathcal{B}}} \sigma_{\mathcal{B}}^{M} +2\rho\sigma_{\mathcal{B}}^{T} +\frac{\eta ^{3} L_{f}^{2} L_{\boldsymbol{J}}}{( 1-\sigma_{\mathcal{B}})^{3}} +\frac{\eta ^{2} L_{f} L_{f^{\prime }}}{( 1-\sigma_{\mathcal{B}})^{2}} .
\end{dmath*}
\end{restatable}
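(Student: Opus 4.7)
The plan is to decompose the error \(\|\widetilde{\Delta\boldsymbol{\theta}_{-j}} - \Delta\boldsymbol{\theta}_{-j}^{(T)}\|\) via triangle inequalities into four conceptually independent pieces, one for each approximation appearing between \cref{eq:stationary_point} and \cref{eq:estimator_aid}. Writing \(\boldsymbol{A}^{*} := ({\boldsymbol{I}}-\boldsymbol{Z}({\boldsymbol{\theta}}^{*}))^{-1}\Delta\boldsymbol{v}_{-j}({\boldsymbol{\theta}}^{*})\) and \(\boldsymbol{S}^{*}_{M} := \sum_{m=0}^{M-1}\boldsymbol{Z}({\boldsymbol{\theta}}^{*})^{m}\Delta\boldsymbol{v}_{-j}({\boldsymbol{\theta}}^{*})\), I would insert \(\boldsymbol{S}^{*}_{M}\), \(\boldsymbol{A}^{*}\), and \({\boldsymbol{\theta}}^{*}_{-j}-{\boldsymbol{\theta}}^{*}\) as intermediates, producing the pieces: (i) \(\|\widetilde{\Delta\boldsymbol{\theta}_{-j}}-\boldsymbol{S}^{*}_{M}\|\) (shift of the Neumann-sum evaluation point from \({\boldsymbol{\theta}}^{(T)}\) to \({\boldsymbol{\theta}}^{*}\)); (ii) \(\|\boldsymbol{S}^{*}_{M}-\boldsymbol{A}^{*}\|\) (Neumann truncation); (iii) \(\|\boldsymbol{A}^{*}-({\boldsymbol{\theta}}_{-j}^{*}-{\boldsymbol{\theta}}^{*})\|\) (implicit-function linearization error); and (iv) \(\|({\boldsymbol{\theta}}_{-j}^{*}-{\boldsymbol{\theta}}^{*})-({\boldsymbol{\theta}}_{-j}^{(T)}-{\boldsymbol{\theta}}^{(T)})\|\) (finite-training gap). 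Each of the five summands in the stated bound will trace back to exactly one of these pieces.

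Pieces (ii) and (iv) are the quick ones. For (ii), \cref{lem:Z_bound} gives \(\|\boldsymbol{Z}({\boldsymbol{\theta}}^{*})\|\le\sigma_{\mathcal{B}}\) and \(\|\Delta\boldsymbol{v}_{-j}({\boldsymbol{\theta}}^{*})\|\le \eta L_{f}\), so the tail geometric series yields \(\tfrac{\eta L_{f}}{1-\sigma_{\mathcal{B}}}\sigma_{\mathcal{B}}^{M}\). For (iv), \cref{lem:converge} applied to both the AGD and counterfactual-AGD iterates (both starting at \({\boldsymbol{\theta}}^{(0)}\in\mathcal{B}({\boldsymbol{\theta}}^{*})\) by \cref{ass:neighbor}) gives contraction toward the respective equilibria at geometric rate \(\sigma_{\mathcal{B}}\), so each gap is at most \(\rho\sigma_{\mathcal{B}}^{T}\) and the piece is bounded by \(2\rho\sigma_{\mathcal{B}}^{T}\) via triangle inequality.

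Piece (i) is handled by splitting each summand as \([\boldsymbol{Z}({\boldsymbol{\theta}}^{(T)})^{m}-\boldsymbol{Z}({\boldsymbol{\theta}}^{*})^{m}]\Delta\boldsymbol{v}_{-j}({\boldsymbol{\theta}}^{(T)}) + \boldsymbol{Z}({\boldsymbol{\theta}}^{*})^{m}[\Delta\boldsymbol{v}_{-j}({\boldsymbol{\theta}}^{(T)})-\Delta\boldsymbol{v}_{-j}({\boldsymbol{\theta}}^{*})]\). The telescoping identity \(A^{m}-B^{m}=\sum_{k=0}^{m-1}A^{k}(A-B)B^{m-1-k}\) plus \cref{ass:smooth_Z} gives \(\|\boldsymbol{Z}({\boldsymbol{\theta}}^{(T)})^{m}-\boldsymbol{Z}({\boldsymbol{\theta}}^{*})^{m}\|\le m\sigma_{\mathcal{B}}^{m-1}\eta L_{\boldsymbol{J}}\|{\boldsymbol{\theta}}^{(T)}-{\boldsymbol{\theta}}^{*}\|\), while the definition of \(L_{f'}\) yields \(\|\Delta\boldsymbol{v}_{-j}({\boldsymbol{\theta}}^{(T)})-\Delta\boldsymbol{v}_{-j}({\boldsymbol{\theta}}^{*})\|\le \eta L_{f'}\|{\boldsymbol{\theta}}^{(T)}-{\boldsymbol{\theta}}^{*}\|\). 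Substituting the contraction bound \(\|{\boldsymbol{\theta}}^{(T)}-{\boldsymbol{\theta}}^{*}\|\le\rho\sigma_{\mathcal{B}}^{T}\) from (iv) and summing the geometric and arithmetic-geometric partial sums over \(m=0,\dots,M-1\) via standard closed forms produces the bracketed factor \(\bigl(\tfrac{\eta L_{f'}}{1-\sigma_{\mathcal{B}}}+\tfrac{\eta^{2}L_{f}L_{\boldsymbol{J}}}{(1-\sigma_{\mathcal{B}})^{2}}\bigr)\rho\sigma_{\mathcal{B}}^{T}(1-\sigma_{\mathcal{B}}^{M-1})\) up to constant simplification.

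Piece (iii) is the main obstacle and must account for the two residual summands \(\tfrac{\eta^{3}L_{f}^{2}L_{\boldsymbol{J}}}{(1-\sigma_{\mathcal{B}})^{3}}\) and \(\tfrac{\eta^{2}L_{f}L_{f'}}{(1-\sigma_{\mathcal{B}})^{2}}\). I would treat \(\gamma(\epsilon):={\boldsymbol{\theta}}_{-j,\epsilon}^{*}\) as an implicit function satisfying \(\boldsymbol{v}_{-j,\epsilon}(\gamma(\epsilon))=\boldsymbol{0}\), with \(\gamma(0)={\boldsymbol{\theta}}^{*}\), \(\gamma(1)={\boldsymbol{\theta}}_{-j}^{*}\), and \(\gamma'(0)=\boldsymbol{A}^{*}\) by the derivation of \cref{eq:implicit}, and apply a second-order Taylor expansion to get \(\|\gamma(1)-\gamma(0)-\gamma'(0)\|\le\tfrac{1}{2}\sup_{\epsilon\in[0,1]}\|\gamma''(\epsilon)\|\). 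Differentiating the stationarity identity twice expresses \(\gamma''\) as a combination of \(M(\epsilon)^{-1}:=[\boldsymbol{J}(\gamma(\epsilon))+\epsilon\,\partial_{\boldsymbol{\theta}} h(\gamma(\epsilon))]^{-1}\), \(h({\boldsymbol{\theta}}):=|\mathcal{X}|^{-1}\nabla_{\boldsymbol{\theta}} f(D(\boldsymbol{\psi},x_{j}))\), and their first \(\epsilon\)-derivatives; then the bounds \(\|M(\epsilon)^{-1}\|\le\tfrac{1}{\eta(1-\sigma_{\mathcal{B}})}\) (from \cref{lem:Z_bound,ass:invertible}), \(\|h\|\le L_{f}\), \(\|\partial_{\boldsymbol{\theta}} h\|\le L_{f'}\), and \(\|\partial_{\boldsymbol{\theta}}\boldsymbol{J}\|\le L_{\boldsymbol{J}}\) from \cref{ass:smooth_Z} yield a uniform bound on \(\|\gamma''\|\) whose two dominant contributions match the stated constants. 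The delicate subpoint is verifying that \(\gamma(\epsilon)\in\mathcal{B}({\boldsymbol{\theta}}^{*})\) for every \(\epsilon\in[0,1]\) so that these local bounds are legitimately applicable along the interpolation path; I would argue this by a continuity/bootstrapping argument, shrinking \(\rho\) if necessary. Collecting the four pieces then reproduces the inequality asserted by \cref{th:error_aid}.
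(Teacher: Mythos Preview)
Your four-piece decomposition (i)--(iv) is exactly the paper's $\clubsuit$--$\spadesuit$--$\heartsuit$--$\diamondsuit$ split, and your treatment of pieces (i), (ii), (iv) is essentially identical to the paper's (same telescoping for the $\boldsymbol{Z}^{m}$ difference, same geometric tail, same contraction bound on both trajectories).

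The genuine difference is in piece (iii). You propose a second-order Taylor expansion of $\gamma(\epsilon)={\boldsymbol{\theta}}_{-j,\epsilon}^{*}$ and a uniform bound on $\gamma''$. The paper instead stays at first order: it invokes the mean-value form $\Delta{\boldsymbol{\theta}}_{-j}^{*}=({\boldsymbol{I}}-\boldsymbol{Z}({\boldsymbol{\theta}}_{-j,r}^{*}))^{-1}\Delta\boldsymbol{v}_{-j}({\boldsymbol{\theta}}_{-j,r}^{*})$ for some $r\in[0,1]$, subtracts the same expression evaluated at ${\boldsymbol{\theta}}^{*}$, and bounds the two resulting differences using the auxiliary estimate $\|{\boldsymbol{\theta}}_{-j}^{*}-{\boldsymbol{\theta}}^{*}\|\le \eta L_{f}/(1-\sigma_{\mathcal{B}})$ (their Lemma~\ref{lem:influence}). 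This first-order route needs only Lipschitz continuity of $\boldsymbol{J}$ (exactly \cref{ass:smooth_Z}), whereas your $\gamma''$ argument implicitly requires $\boldsymbol{J}$ to be differentiable so that $\gamma$ is $C^{2}$---a regularity not granted by the stated assumptions. Your route would recover the same constants (possibly with an extra $1/2$) if you either strengthen \cref{ass:smooth_Z} to $\boldsymbol{J}\in C^{1}$ or replace the pointwise second derivative by an integral-remainder / finite-difference argument; but as written, the paper's approach is the one that matches the hypotheses.
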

\noindent Since \(\sigma_{\mathcal{B}}<1\) from \cref{lem:Z_bound}, \cref{th:error_aid} suggests that a larger \(T\) is preferred for the smaller error. 
In addition, \cref{th:error_aid} also indicates that the optimal \(M\) may depend on \(T\); when \(T\) is small, the first term favors a small \(M\), yet when \(T\) is so large that the first term is negligible, \(M\) should be set large to suppress the second term.
This nature is actually observed in our experiment (\cref{sec:exp1}).

\section{Estimating Influence on GAN Evaluation Metrics} \label{sec:ganmetric}  
\noindent This section explains our evaluation scheme that judges the harmfulness of a given instance.
\cref{sec:def_igem} defines \textit{influence on GAN evaluation metric}, whose sign classifies whether the instance is harmful or not.
We then propose its estimators~(\cref{sec:est_eigem}) as well as their computation algorithms~(\cref{sec:alg_eigem}), incorporating the proposed ITD and AID influence estimators.

\subsection{Influence on GAN Evaluation Metric} \label{sec:def_igem}
\noindent This section defines our measure of harmfulness, which we call influence on GAN evaluation metric.

We begin with formulating the GAN evaluation metric.
Since the GAN evaluation metric typically takes a set of generated instances as its input~\cite{proscons}, we define it as a scalar function \( E : \mathbb{R}^{d_x \times N_{z^\prime}} \to \mathbb{R} \), where \( N_{z^\prime} \) is a number of generated instances.
With this definition, the GAN evaluation metric computed on a generator \(\boldsymbol{\varphi}\) is expressed as \( E (\mathcal{X}_{G}^{\prime}(\boldsymbol{\varphi}))\), where \(\mathcal{X}_{G}^{\prime}(\boldsymbol{\varphi})\coloneqq \{G(\boldsymbol{\varphi}, \boldsymbol{z}_i^\prime) \mid  \boldsymbol{z}_i^\prime 
\xleftarrow{\mathrm{iid}}\mathcal{N}\left(\boldsymbol{0},\boldsymbol{I}\right), i=1,2,\ldots,N_{z^\prime}\}\) denotes instances produced by the generator \(\boldsymbol{\varphi}\).
We suppose the latent variables \(\boldsymbol{z}_{i}^\prime\) are sampled independently from those used during the training.

We finally define the influence on GAN evaluation metric as follows:
\begin{definition}
Influence on GAN evaluation metric refers to \(\Delta E_{-j} \coloneqq  E\left(\mathcal{X}_{G}^{\prime}\left(\boldsymbol{\varphi}^{(T)}_{-j}\right)\right) -  E\left(\mathcal{X}_{G}^{\prime}\left(\boldsymbol{\varphi}^{(T)}\right)\right) \), which represents the change in the GAN evaluation metric caused by the retraining with the \(j\)-th instance removed.
\end{definition}

Our evaluation scheme judges whether a given instance is harmful or not based on the sign of its influence on GAN evaluation metric.
For instance, if larger \( E\) indicates better generative performance and \(\Delta E_{-j}\) is positive, the \(j\)-th instance is regarded as harmful.
This is because positive \(\Delta E_{-j}\) indicates that removing the \(j\)-th instance increases the GAN evaluation metric and, is thus interpreted as the presence of the \(j\)-th instance harming the generative performance.

\subsection{Estimators: ITD-EIGEM and AID-EIGEM} \label{sec:est_eigem}
\noindent This section introduces estimators of \( \Delta E_{-j} \), which we call  ITD-  and AID-based Estimator  of Influence on GAN Evaluation Metric (ITD-EIGEM and AID-EIGEM), which incorporate influence estimators \(\widehat{\Delta{{\boldsymbol{\theta}}}_{-j}}\) and \(\widetilde{\Delta{{\boldsymbol{\theta}}}_{-j}}\), respectively.

In the following, we assume that \(E\) is differentiable\footnote{E.g., IS has form of \(E (\mathcal{X}^\prime)  = \mathrm{exp}(\frac{1}{\vert \mathcal{X}^\prime \vert} \sum_{\boldsymbol{x} \in\mathcal{\mathcal{X}^\prime}}\mathbb{KL}(p_c(y \,\vert\,\boldsymbol{x})\,\Vert \,p_c(y))\), where \(p_c\) is a distribution of class label \(y\) drawn by a pre-trained learning classifier.
If \(p_c\) is differentiable, which holds in practical scenarios where classifiers are deep learning models, \(E\) is differentiable.}, which holds over common evaluation metrics, including Inception Score (IS) \cite{inception} and Fréchet inception distance (FID) \cite{fid}.
Using the differentiability of \(E\), the influence on GAN evaluation metric can be linearly approximated as 
\begin{equation} \label{eq:infl_approx}
    \Delta E_{-j} \approx  \nabla E^\intercal \Delta{{\boldsymbol{\theta}}}^{(T)}_{-j},
\end{equation}
where \(\nabla E : = ({\nabla_{\boldsymbol{\varphi}}} E(\mathcal{X}_{G}^{\prime}(\boldsymbol{\varphi}^{(T)}))^\intercal ~~ \boldsymbol{0}^\intercal)^\intercal \). 
We finally obtain our estimators ITD-EIGEM and AID-EIGEM by replacing \(\Delta{{\boldsymbol{\theta}}}^{(T)}_{-j}\) in \cref{eq:infl_approx} by its estimations \( \widehat{\Delta {{\boldsymbol{\theta}}}_{-j}}\) and \( \widetilde{\Delta {{\boldsymbol{\theta}}}_{-j}}\), respectively: 
\begin{subequations}  \label{eq:def_eigem}
\begin{empheq}[left={\Delta E_{-j} \approx\empheqlbrace}]{align}
 \nabla E^\intercal \widehat{\Delta {{\boldsymbol{\theta}}}_{-j}} & \eqqcolon   \widehat{\Delta E_{-j}} & (\text{ITD-EIGEM}), \label{eq:def_eigem_itd} \\
\nabla E^\intercal \widetilde{\Delta {{\boldsymbol{\theta}}}_{-j}} & \eqqcolon    \widetilde{\Delta E_{-j}} & (\text{AID-EIGEM}) . \label{eq:def_eigem_aid}
\end{empheq}
\end{subequations} 

\subsection{Algorithms} \label{sec:alg_eigem}
\noindent This section presents algorithms for computing \cref{eq:def_eigem}.

\subsubsection{ITD-EIGEM} \label{sec:itd_eigem}
\begin{algorithm}[t]
\caption{ITD-EIGEM}
\label{alg:lie_itd}
\begin{algorithmic}[1]
\REQUIRE{\(\boldsymbol{\theta}^{(0)},\ldots,\boldsymbol{\theta}^{(T)}\)}
\STATE{Initialize \(\boldsymbol{u} \leftarrow \nabla E  \) and \(\widehat{\Delta E_{-j}} \leftarrow 0\)}
\FOR{\(t = T-1, T-2, \ldots, 0\)}
\STATE{\(\widehat{\Delta E_{-j}} \leftarrow \widehat{\Delta E_{-j}} + \Delta\boldsymbol{v}_{-j}\left({{\boldsymbol{\theta}}}^{(t)}\right) ^\intercal\boldsymbol{u}\)} \label{line:update_infl_itd}
\STATE{\(\boldsymbol{u} \leftarrow \boldsymbol{u} - \eta \boldsymbol{J}\left(\boldsymbol{\theta}^{(t)}\right)^\intercal \boldsymbol{u}\) }  \label{line:update_u_itd} 
\ENDFOR
\RETURN{\(\widehat{\Delta E_{-j}}\)}
\end{algorithmic}
\end{algorithm}
\cref{alg:lie_itd} shows the algorithm for computing \( \widehat{\Delta E_{-j}}\), which is based on the recursive computation similar to \cite{Hara2019}.
From \cref{eq:estimator_itd,eq:def_eigem_itd}, we have
\begin{equation}
            \widehat{\Delta E_{-j}} = \sum_{t=0}^{T-1}\Delta\boldsymbol{v}_{-j}\left({{\boldsymbol{\theta}}}^{(t)}\right) ^\intercal \left(\prod_{s=t+1}^{T-1} {\boldsymbol{Z}}\left({{\boldsymbol{\theta}}}^{(s)}\right)\right)^\intercal
   \nabla E . \label{eq:infloneval_estimator_itd}
\end{equation}
\noindent When we introduce \(\boldsymbol{u}^{(t)}\coloneqq \left(\prod_{s=t+1}^{T-1} {\boldsymbol{Z}}\left({{\boldsymbol{\theta}}}^{(s)}\right)\right)^\intercal
   \nabla E \) and \(\widehat{\Delta E_{-j}^{(t)}}\coloneqq \sum_{t^\prime=t+1}^{T-1} \Delta\boldsymbol{v}_{-j}\left({{\boldsymbol{\theta}}}^{(t^\prime)}\right) ^\intercal\boldsymbol{u}^{(t^\prime)}\), \cref{eq:infloneval_estimator_itd} can be written as \(  \widehat{\Delta E_{-j}} =\widehat{\Delta E_{-j}^{(-1)}}\).
We use the fact that both \(\boldsymbol{u}^{(t)}\) and \(\widehat{\Delta E_{-j}^{(t)}}\) can be recursively computed for \(t=T-1,\ldots,0\):
\begin{equation*}
\begin{cases}
    \boldsymbol{u}^{(t-1)} ={\boldsymbol{Z}}\left({{\boldsymbol{\theta}}}^{(t)}\right)^\intercal \boldsymbol{u}^{(t)} ,  \\
    \widehat{\Delta E_{-j}^{(t-1)}}=  \widehat{\Delta E_{-j}^{(t)}} +  \Delta\boldsymbol{v}_{-j}\left({{\boldsymbol{\theta}}}^{(t)}\right) ^\intercal\boldsymbol{u}^{(t)}.
\end{cases}
\end{equation*}
With initializations \(\boldsymbol{u}^{(T-1)} = \nabla E  \) and \(\widehat{\Delta E_{-j}^{(T-1)}} = 0\), we obtain \cref{alg:lie_itd}.

\subsubsection{AID-EIGEM} \label{sec:aid_eigem}
\begin{algorithm}[t]
\caption{AID-EIGEM}
\label{alg:lie_aid}
\begin{algorithmic}[1]
\REQUIRE{\(\boldsymbol{\theta}^{(T)}\)}
\STATE{Initialize \(\boldsymbol{w} \leftarrow \nabla E \)}
\FOR{\(m = 0, 1, \ldots, M-1\)}
\STATE{\(\boldsymbol{w} \leftarrow \boldsymbol{w}   - \eta \boldsymbol{J}\left(\boldsymbol{\theta}^{(T)}\right)^\intercal \boldsymbol{w} + \nabla E\) }  \label{line:update_w_aid}
\ENDFOR
\STATE{\(\widetilde{\Delta E_{-j}} \leftarrow \Delta\boldsymbol{v}_{-j}\left({{\boldsymbol{\theta}}}^{(T)}\right) ^\intercal \boldsymbol{w} \)}  \label{line:update_infl_aid}
\RETURN{\(\widetilde{\Delta E_{-j}}\)}
\end{algorithmic}
\end{algorithm}
\noindent \cref{alg:lie_aid} for \( \widetilde{\Delta E_{-j}}\) utilizes a recursive computation that is analogous to \cite[Stochastic estimation]{Koh2017}.
Combining \cref{eq:estimator_aid,eq:def_eigem_aid}, we have
\begin{equation}
           \widetilde{\Delta E_{-j}} =  \Delta\boldsymbol{v}_{-j}\left({{\boldsymbol{\theta}}}^{(T)}\right)^\intercal 
 \sum_{m=0}^{M-1} \left({\boldsymbol{Z}}\left({{\boldsymbol{\theta}}^{(T)}}\right) ^{\intercal}\right)^{m} \label{eq:infloneval_estimator_aid}
   \nabla E.
\end{equation}
By introducing \(\boldsymbol{w}^{(m)}=\sum_{m^\prime=0}^{m-1}\left({\boldsymbol{Z}}\left({{\boldsymbol{\theta}}}^{(T)}\right)^{\intercal} \right)^{m^\prime}   \nabla E\), \cref{eq:infloneval_estimator_aid} can be rewritten as \(  \widetilde{\Delta E_{-j}} = \Delta\boldsymbol{v}_{-j}\left({{\boldsymbol{\theta}}}^{(t)}\right) ^\intercal\boldsymbol{w}^{(M)}\).
We obtain \cref{alg:lie_aid} by tracing the following recursive relation for \(m=0,\ldots,M-1\) from  \( \boldsymbol{w}^{(0)}= \nabla E \):
\begin{equation*}
    \boldsymbol{w}^{(m+1)}={\boldsymbol{Z}}\left({{\boldsymbol{\theta}}}^{(T)}\right)^\intercal \boldsymbol{w}^{(m)} + \nabla E.
\end{equation*}

\begin{table*}[t]
\caption{Comparison of Our Influence Estimation Methods\label{table:comparison}}
\centering
\begin{tabular}{lcccc}
\toprule
 \multirow{2}{*}{Method}  &  \multicolumn{2}{c}{Estimator for influence on parameters}    &  \multicolumn{2}{c}{Algorithm for influence on GAN evaluation metric}    \\ \cmidrule(lr){2-3} \cmidrule(lr){4-5}
& Assumption on the problem & Preferred AGD steps (\(T\)) for small error& Time complexity & Space complexity \\
\midrule
ITD  & None & Small &  \(O(Td_{{\theta}})\) & \(O(Td_{{\theta}})\) \\
AID & \cref{ass:J_bound,ass:invertible} & Large & \(O(Md_{{\theta}})\) & \(O(d_{{\theta}})\)  \\
\bottomrule
\end{tabular}
\end{table*}

\subsubsection{Time and Space Complexities} \label{sec:complexity}
\noindent This section compares the time and space complexity of \cref{alg:lie_itd,alg:lie_aid} with a strong emphasis on the dependency on the number of parameters \(d_{{\theta}}\) and AGD steps \(T\). 

The time complexities of \cref{alg:lie_itd,alg:lie_aid} are \(O(Td_{{\theta}})\) and \(O(Md_{{\theta}})\), respectively.
Notably, both algorithms can avoid \(O\left(d_{{\theta}}^2\right)\), which is required to explicitly compute \( \boldsymbol{J}\left(\boldsymbol{\theta}\right)\).
This is achieved by directly computing \( \boldsymbol{J}\left(\boldsymbol{\theta}\right)^\intercal \boldsymbol{u}\) and \( \boldsymbol{J}\left(\boldsymbol{\theta}\right)^\intercal \boldsymbol{w}\) utilizing the Jacobian-vector-product technique, also known as the forward mode automatic differentiation~\cite{baydin2018automatic}. 

The space complexities of \cref{alg:lie_itd,alg:lie_aid} are \(O(Td_{{\theta}})\) and \(O(d_{{\theta}})\), respectively.
\cref{alg:lie_aid} is more reasonable because it only requires maintaining vectors \(\boldsymbol{\theta}^{(T)}\), \(\boldsymbol{w}\), and \(\nabla E\), while \cref{alg:lie_itd} requires storing \(\boldsymbol{\theta}^{(0)},\ldots,\boldsymbol{\theta}^{(T)}\) during AGD steps, resulting in a space complexity being scaled by \(T\).

Considering these complexities and the number of AGD steps preferred for the ITD and AID influence estimators discussed in \cref{sec:est_error}, one can see that ITD-EIGEM and AID-EIGEM favor different training scenarios.
That is, ITD-EIGEM is effective in terms of both complexity and estimation error when \(T\) is small, while AID is advantageous in these aspects when \(T\) is large, as summarized in \cref{table:comparison}.

\subsubsection{Stability for Large Singular Values} \label{sec:stability}
\noindent The proposed ITD-EIGEM and AID-EIGEM are not guaranteed to converge when \(\max_{{\boldsymbol{\theta}}}\left\Vert \boldsymbol{I} - \eta\boldsymbol{J}(\boldsymbol{\theta})\right\Vert > 1\), as indicated in \cref{col:error_itd_non_convex,th:error_aid}, respectively.
When faced with the non-convergent, we suppress \(\sigma_{\mathcal{B}}\) using an alternative Jacobian \(\boldsymbol{J}(\boldsymbol{\theta})+\gamma\boldsymbol{I}\) when running \cref{alg:lie_itd} or \cref{alg:lie_aid}, without additional computational complexity.
Note that this modification corresponds to simply adding a regularization term \(\frac{1}{2} \gamma\left\Vert \boldsymbol{\theta} \right\Vert^2 \) to the cost function.

\subsubsection{Estimating Influence of Multiple Training Instances} \label{sec:group}
\noindent For data screening purposes, one may want to evaluate the influence on GAN evaluation metrics of all \(N_x\) instances in the dataset.
Fortunately, we have a more efficient way to perform such an evaluation than repeatedly applying \cref{alg:lie_itd} or \cref{alg:lie_aid} for every \(j=1,\ldots,N_x\).
Focusing on \cref{alg:lie_itd}, the line \ref{line:update_u_itd} is not dependent on \(j\), which means that, for each \(t\), the same \(\boldsymbol{u}\) applies to all instances.
Therefore, by modifying the line \ref{line:update_infl_itd} to update \(\widehat{\Delta E_{-j}}\) for every \(j=1,\ldots,N_x\), the influence of all instances can be estimated in one shot.
A similar implementation is also applicable to \cref{alg:lie_aid}.

\section{Related Work: Influence Estimation for Supervised Learning}  \label{sec:related_work}
\noindent This section compares our approach with the previous influence estimation methods for supervised learning.

\subsection{Base Methods of ITD- and AID-Influence Estimators}
We firstly compare our approaches with their base methods \cite{Hara2019,Koh2017}, highlighting how we tackle two assumptions in supervised learning that do not apply to GANs: the absence of a training instance directly changes the whole model parameters and the loss represents the task performance.

To see how the first assumption is used in the previous methods, we formulate previous influence estimators as special cases of ours.
When \(\mathbb{R}^{d_{\varphi}}=0\), \(g\left(\cdot\right) = 0\), and \(f\left(D\left(\boldsymbol{\psi},\boldsymbol{x}\right)\right)\) is the negated loss for supervised learning (e.g., the cross-entropy loss), \(\widehat{\Delta {{\boldsymbol{\theta}}}_{-j}}\) and \(\widetilde{\Delta {{\boldsymbol{\theta}}}_{-j}}\) are equivalent to the estimators proposed by \cite{Hara2019} and \cite{Koh2017}, respectively\footnote{We consider the full-batch version of SGD-Influence in \cite{Hara2019} and employ gradient \textit{ascent} since the problem for \(\boldsymbol{\psi}\) is the maximization problem, different from the minimization problem in \cite{Hara2019}.}. 
For instance, the recursive estimation of \cite{Hara2019} can be recovered as a case of \cref{eq:recur_asgd}:
\begin{equation}\label{eq:recur_hara}
\Delta \boldsymbol{\psi }_{-j}^{(t+1)} \approx (\boldsymbol{I} +\eta \boldsymbol{H}_{\boldsymbol{\psi }}) \Delta \boldsymbol{\psi }_{-j}^{(t)} +\Delta \boldsymbol{v}_{-j}^{\boldsymbol{\psi }} ,
\end{equation}
where we used notations introduced in \cref{sec:jacobian_role}.
\cref{eq:recur_hara} indicates that the effect of the absence of the \(j\)-th instance \( \Delta\boldsymbol{v}_{-j}^{\boldsymbol{\psi}}\) directly affects the whole model parameter \(   \Delta{{\boldsymbol{\psi}}}_{-j}^{(t+1)}\), which is not a case of ITD influence estimator for GANs.
Our ITD- and AID-influence estimators address this issue by incorporating the Jacobian of the gradient of the discriminator's loss regarding the generator's parameters (and vice versa), as explained in \cref{sec:jacobian_role}.

Regarding the second assumption, \cite{Hara2019} and \cite{Koh2017} compute the influence on the loss \(f\left(D\left(\boldsymbol{\psi},\boldsymbol{x}\right)\right)\) to evaluate the harmfulness of the training instance.
This is based on the assumption that the loss represents the task performance, which is not always true for the training of GANs.
We alleviate this problem by employing influence on GAN evaluation metrics and by using their differentiability.

\subsection{Hessian-free Influence Estimation Methods}
Another line of work is faster influence estimation methods, including \cite{chen2021hydra,pruthi2020estimating}, which have shown that rough approximations of influence are possible without considering second-order derivatives, i.e., Hessian matrices.
In contrast, the second-order derivative is essential in the influence estimation for GANs.
This is because the influence between the discriminator and the generator is measured only by the off-diagonal components of the Jacobian \(\boldsymbol{J}\left(\boldsymbol{\theta}\right)\), as explained in \cref{sec:jacobian_role}.
Thus, simple extensions of these methods would not be able to address influence estimation for GANs.

\section{Experiments} \label{sec:exps}
\noindent We evaluate the proposed method from two aspects: the accuracy of influence estimation on GAN evaluation metrics (\cref{sec:exp1}) and the performance of data cleansing using our estimation (\cref{sec:exp2}).
See our appendix for detailed settings and results.

\subsection{General Setup} 
\noindent To simulate three scenarios, one satisfies \cref{ass:J_bound,ass:invertible,ass:neighbor} and the others may not, we set up three generation tasks: Linear Quadratic GAN (LQGAN)\cite{nagarajan2017gradient} trained for 1-dimensional normal distribution (1D-Normal), Deep Convolutional GAN (DCGAN)\cite{dcgan} trained for MNIST~\cite{mnist}, and StyleGAN\cite{stylegan} fine-tuned for Animal Faces-HQ\cite{afhq}.
For each task, we chose suitable GAN evaluation metrics from the average log-likelihood (ALL), Inception score (IS)~\cite{inception}, and Fréchet inception distance (FID) \cite{fid}.

\subsubsection{LQGAN Trained for 1D-Normal with ALL Evaluation}
\noindent LQGAN is a simple GAN whose discriminator and generator are formulated as linear quadratic forms.
In our case, they are \(D(\boldsymbol{\psi }, \boldsymbol{x}) = \psi_1 x^2+\psi_2 x\) and  \(G(\boldsymbol{\varphi }, \boldsymbol{z}) =\varphi_1 z +\varphi_2\), in which both \(\boldsymbol{z}\) and \(\boldsymbol{x}\) are 1-dimensional.
LQGAN with the original minimax loss~\cite{goodfellow2014generative} ensures \cref{ass:J_bound,ass:neighbor} to hold \cite{nagarajan2017gradient}.
We also empirically verified that \({{\boldsymbol{\theta}}}^{(T)}\) converges to the analytical solution given in \cite[Theorem D.1]{nagarajan2017gradient}, which satisfies \cref{ass:invertible}.

We utilize ALL to evaluate and compute the influence on the generative performance of LQGAN.
ALL measures the likelihood of the true data under the distribution which is estimated from generated data using kernel density estimation.

\subsubsection{DCGAN Trained for MNIST with IS/FID Evaluation}
\noindent To test our methods in more practical settings where \cref{ass:J_bound} is not guaranteed, we employ DCGAN which consists of multiple convolution layers.
We train DCGAN to generate images of MNIST using the modified minimax loss proposed in \cite{goodfellow2014generative}.
We also make both AGD training and influence estimation for DCGAN more practical algorithms, namely, we used a stochastic version of AGD and influence estimation where minibatch indices and latent variables are sampled at every \(t\)-th step of AGD \cref{eq:update} and \cref{alg:lie_itd}, and \(m\)-th step of \cref{alg:lie_aid}.
Our estimator and algorithm derived on minibatch settings can be found in our appendix.

We employ IS and FID both for evaluation and influence estimation on GAN evaluation metrics.
IS utilizes the class probabilities produced by a pre-trained classifier to gauge the distinctness and variation in the classification of the generated images.
FID measures Fréchet distance between two sets of feature vectors of real images and those of generated images.
Since IS and FID require class distribution and feature vectors, respectively, we trained a CNN classifier using a validation MNIST dataset.

\subsubsection{StyleGAN Fine-tuned for Animal Faces-HQ with FID Evaluation}
\noindent We employ StyleGAN\cite{stylegan} to test our methods in a more complex setting.
StyleGAN incorporates a wide range of techniques, such as the style-based generator, mixing regularization, and noise inputs at different layers, allowing for more flexible and high-quality image generation.

We consider evaluating the influence of instances used for the fine-tuning, that updates StyleGAN pre-trained on Flickr-Faces-HQ\cite{stylegan} to generate images of the cat category from Animal Faces-HG~dataset\cite{afhq}, which we call AFHQ-CAT.
Recent studies have demonstrated that fine-tuning the generator can be achieved effectively by training a small set of parameters using Low-Rank-Adaptation (LoRA)\cite{hu2021lora}. 
In this study, we train LoRA parameters for both the generator and discriminator, and we treat these LoRA parameters as \(\boldsymbol{\varphi}\) and \(\boldsymbol{\psi }\) in our formulation.
To perform influence estimation and evaluate the performance of the StyleGAN, we employ FID by extracting features from InceptionV3 \cite{inceptionnet} following the original definition\cite{fid}.
    
Apart from the architectural complexity, our approach must address a more complicated training setup.
Recent GANs commonly employ various optimization techniques, including the moving average of the generator \cite{yaz2018unusual} and momentum-based optimizers, such as RMSProp and Adam\cite{kingma2014adam}.
Our ITD influence estimator is based on the assumption that GANs are trained using the vanilla gradient descent, requiring an adjustment to align with these optimization techniques. 
Hence, we have introduced a more practical ITD-influence estimator derived from the training iterations with the above techniques. 
Detailed implementation of the introduced estimator is provided in our appendix.

\begin{figure*}[t]
    \centering
        \begin{minipage}{0.90\linewidth}
        \centering
    \includegraphics[clip,width=0.9\linewidth]{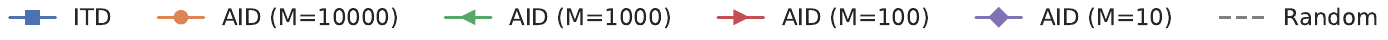}
    \end{minipage}
    
    \begin{minipage}{0.49\linewidth}
    \centering
            \subfloat[][Linear Quadratic GAN]{
        \includegraphics[trim={.0cm .7cm .5cm 1cm},clip,width=\linewidth]{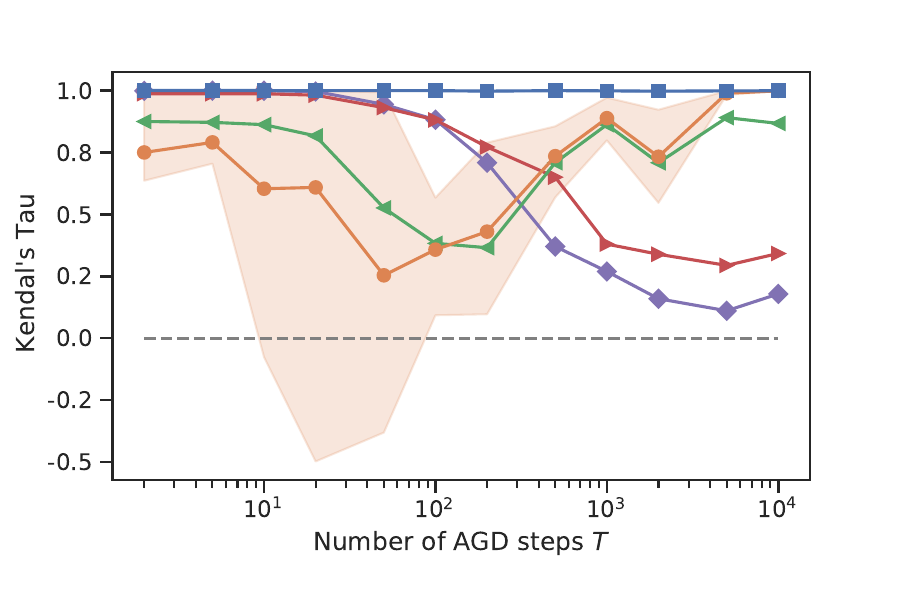}
    \label{sub:toy_valid}}
    \end{minipage}
    \begin{minipage}{0.49\linewidth}
    \centering
    \subfloat[][DCGAN]{
               \includegraphics[trim={.0cm .7cm .5cm 1cm},clip,width=\linewidth]{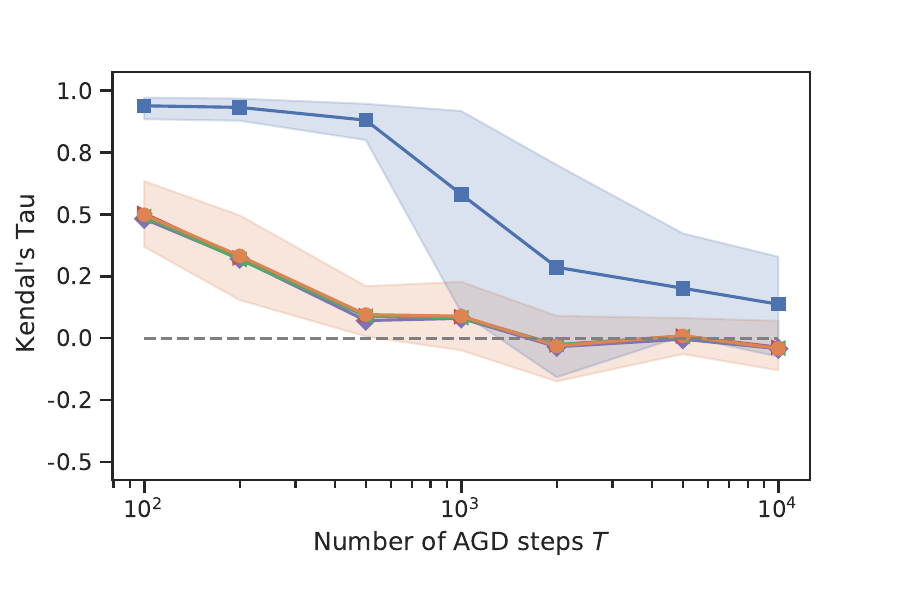}
    \label{sub:mnist_valid_is}} 
    \end{minipage}

    \caption[]{
The average Kendall's Tau calculated from the true and estimated influence values on ALL for LQGAN~\subref{sub:toy_valid} and IS for DCGAN~\subref{sub:mnist_valid_is} of 100 instances.
The error bars show the 10\% and 90\% percentiles of Kendall's Tau obtained from iterative experiments.
To enhance visibility, we excluded the error bars of AID (M=1,10,100,1000).
}
    \label{fig:valid}
\end{figure*}

\subsection{Experiment 1: Estimation Accuracy} \label{sec:exp1}
\noindent This section empirically evaluates how accurately our ITD-EIGEM and AID-EIGEM can estimate the influence on GAN evaluation metrics.
Moreover, we evaluate how selections of AGD steps \(T\) and the depth of Neuman series approximation \(M\) affect the estimation.

    \begin{figure*}[t!]
        \centering
        \begin{minipage}{0.45\linewidth}
            \subfloat[][Linear Quadratic GAN]{
            \includegraphics[trim={.0cm .8cm 1.5cm 1cm},clip,width=\linewidth]{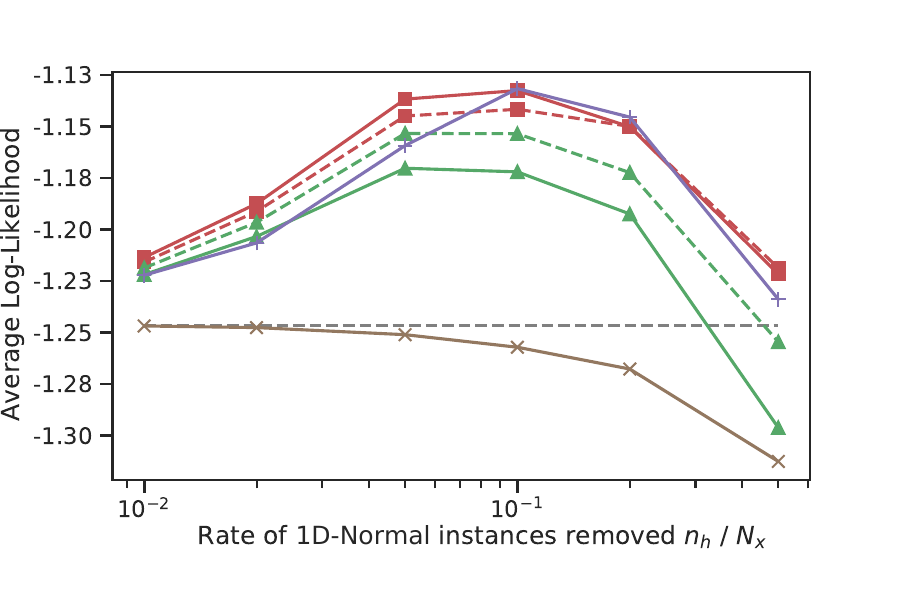}
            \label{sub:clean_ll}
        }
        \end{minipage}
         \hspace{0.5cm}
        \begin{minipage}{0.45\linewidth}
            \centering
            \includegraphics[trim={.0cm -1.5cm .0cm .0cm},clip,width=0.6\linewidth]{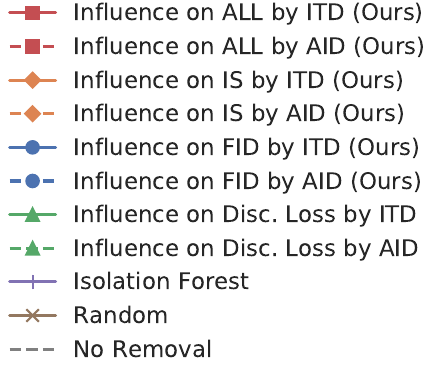}
        \end{minipage}
        \newline
        \begin{minipage}{0.45\linewidth}
        \centering
        \subfloat[][DCGAN (tested for IS \& full-epoch retraining)]{
            \includegraphics[trim={.0cm .8cm 1.5cm 1cm},clip,width=\linewidth]{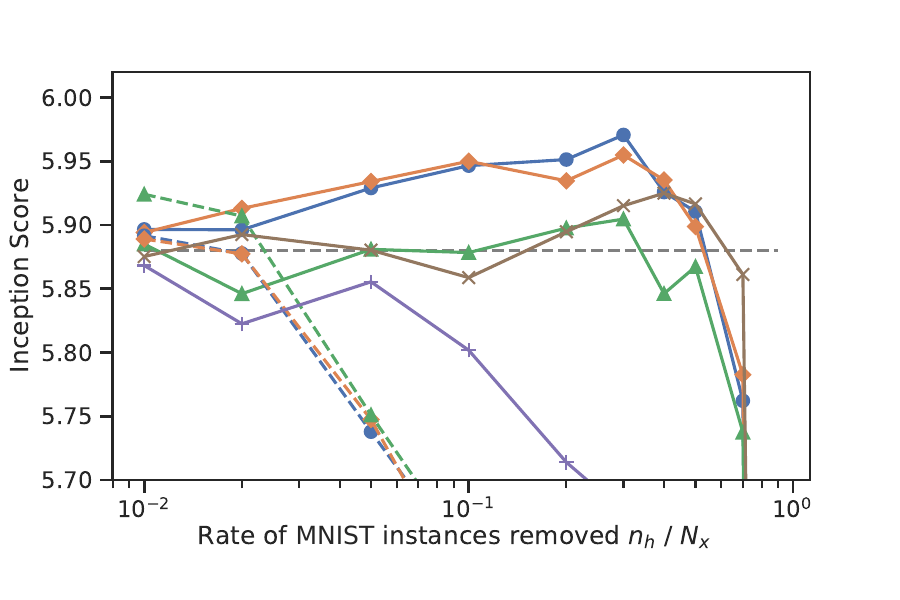}
            \label{sub:clean_is}
        }
        \end{minipage}
        \hspace{0.5cm}
        \begin{minipage}{0.45\linewidth}
        \centering
        \subfloat[][DCGAN (tested for IS \& one-epoch retraining)]{
            \includegraphics[trim={.0cm .8cm 1.5cm 1cm},clip,width=\linewidth]{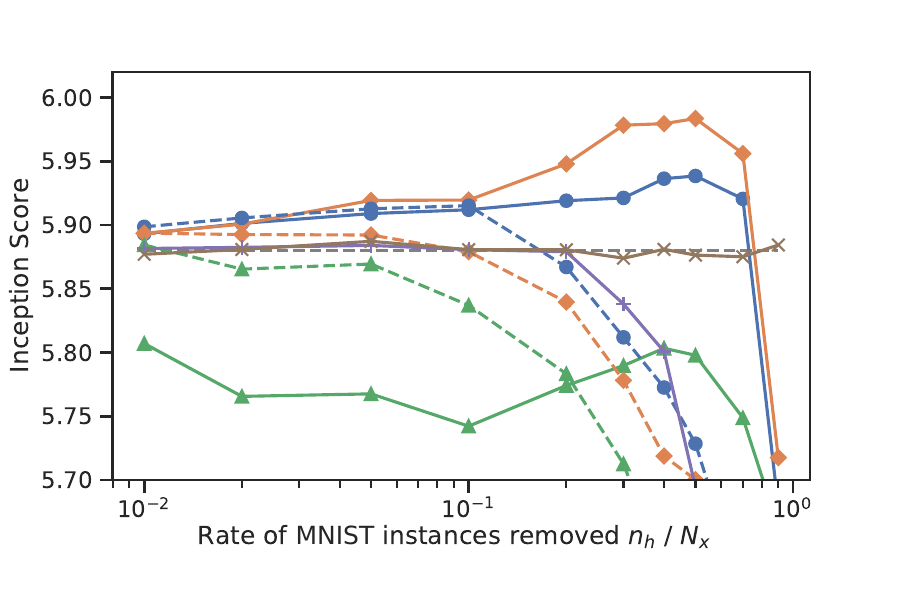}
            \label{sub:clean_is_1epoch}
        }
        \end{minipage}
        \newline
        \begin{minipage}{0.45\linewidth}
        \centering
        \subfloat[][DCGAN (tested for FID \& full-epoch retraining)]{
            \includegraphics[trim={.0cm .8cm 1.5cm 1cm},clip,width=\linewidth]{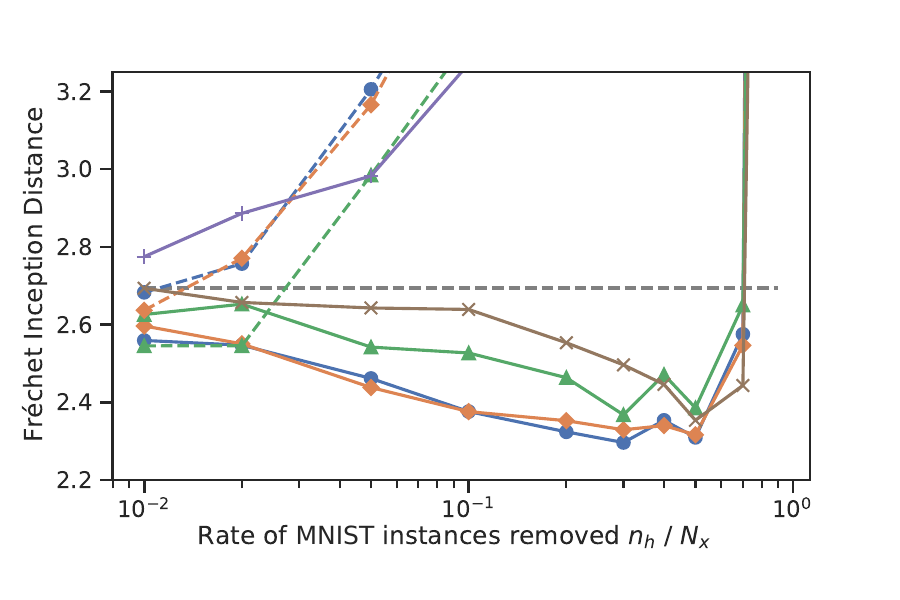}
            \label{sub:clean_fid}
        }
        \end{minipage}
        \hspace{0.52cm}
        \begin{minipage}{0.45\linewidth}
        \centering
        \subfloat[][DCGAN (tested for FID \& one-epoch retraining)]{
            \includegraphics[trim={.0cm .8cm 1.5cm 1cm},clip,width=\linewidth]{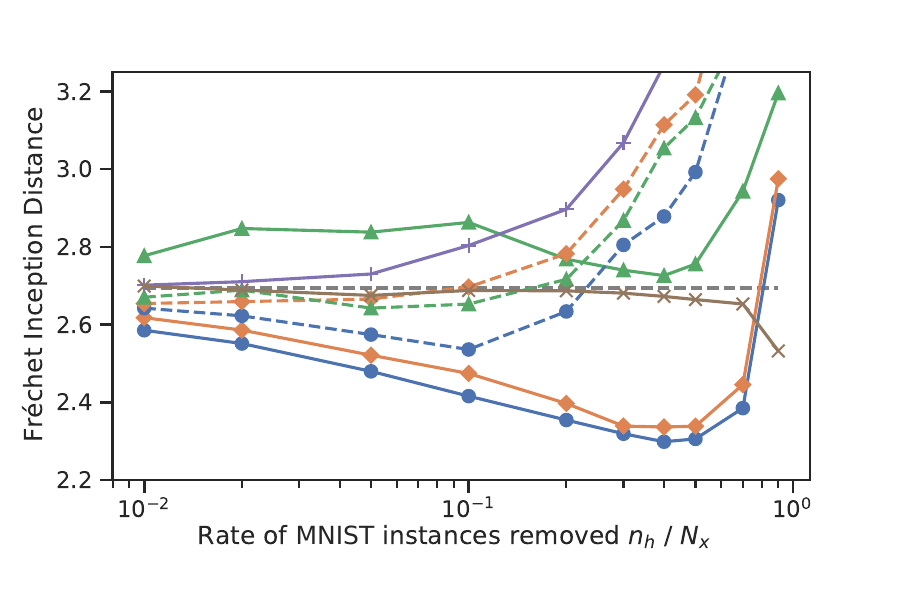}
            \label{sub:clean_fid_1epoch}
    }
        \end{minipage} 
           \hspace{0.2cm}
        \begin{minipage}{0.45\linewidth}
        \centering
        \subfloat[][StyleGAN (tested for FID \& full-epoch retraining)]{
            \includegraphics[trim={.0cm .8cm 1.5cm 1cm},clip,width=\linewidth]{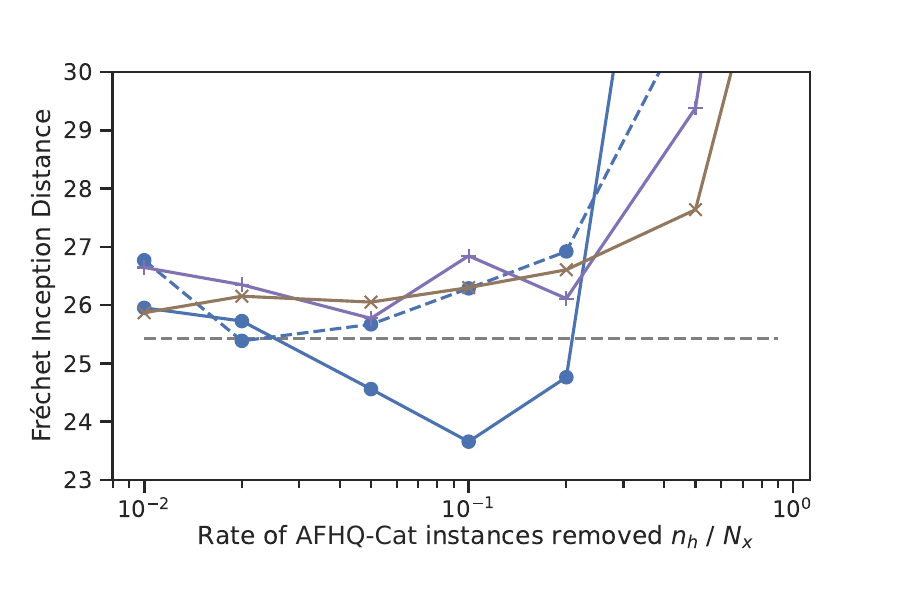}
            \label{sub:clean_stylegan}
        }
        \end{minipage}
        \hspace{0.52cm}
        \begin{minipage}{0.45\linewidth}
        \centering
        \subfloat[][StyleGAN (tested for FID \& one-epoch retraining)]{
            \includegraphics[trim={.0cm .8cm 1.5cm 1cm},clip,width=\linewidth]{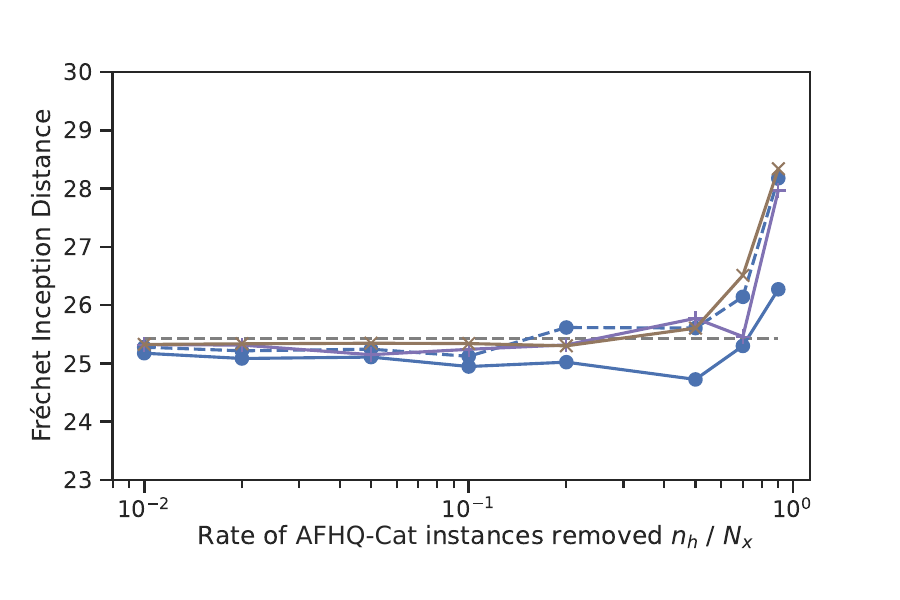}
            \label{sub:clean_stylegan_1epoch}
    }
    \end{minipage}
    \caption[]{
The average test GAN evaluation metrics after the data cleansing.
A higher value in \subref{sub:clean_ll}-\subref{sub:clean_is_1epoch} and a lower value in other plots \subref{sub:clean_fid}-\subref{sub:clean_stylegan_1epoch} indicate better generative performance, respectively.
We left out error bars and extreme values for clarity.
}
    \label{fig:clean}
\end{figure*}

\subsubsection{Setup}
\noindent We ran \cref{alg:lie_itd,alg:lie_aid} to estimate the influence on ALL  for LQGAN and the influence on IS for DCGAN to compare them with their true values.
We excluded the StyleGAN setting from this experiment due to its large computation in computing the true influence.

For both LQGAN and DCGAN, we performed the same procedure below unless otherwise noted.
\begin{enumerate}[label=\roman*)]
\item     \underline{Dataset preparation}: We used \(x\sim\mathcal{N}(1,1)\) to construct the 1D-Normal training dataset with 1,000 instances for AGD training and the validation dataset with 1,000 instances for computing ALL. For MNIST, we randomly selected 10,000 instances for AGD training and 10,000 validation instances for computing IS. 
\item \underline{Training}: LQGAN and DCGAN were trained through \(T\) steps of AGD. The MNIST classifier used for computing IS was also trained using the validation dataset.
\item \underline{Selection of removed instances}: We randomly selected 100 target instances from the training dataset. 
    \item \underline{Estimating \(\Delta E_{-j}\)}: To estimate the influence on GAN evaluation metric, we performed \cref{alg:lie_itd} and \cref{alg:lie_aid} for the target instances.
\item \underline{Computing true \(\Delta E_{-j}\)}: The true influence on GAN evaluation metric of each target instance was computed by running the counterfactual AGD.
    \item \underline{Evaluation}: Estimation accuracy was evaluated by Kendall's tau, which measures the ordinal correlation between estimated and true influence on GAN evaluation metrics, as adopted in the previous study~\cite{Hara2019}. This is because, for data cleansing purposes, the ranking of the harmfulness is considered more important than the estimation error of individual instances.
\end{enumerate}
We ran the procedure above ten times by changing the random seeds and excluded two cases where the AGD training of DCGAN did not converge.
We also varied the number of AGD steps \(T\) and the depth of Neuman series approximation \(M\) to see their effects on estimation errors.

\subsubsection{Estimation of Influence on ALL with LQGAN}
\noindent \cref{fig:valid} shows the average Kendal's Tau of the repeated experiments.

\cref{fig:valid}\subref{sub:toy_valid} illustrates that the ITD-EIGEM was able to provide accurate estimates for all \(T\).
This suggests the constant error remaining after infinite \(T\), mentioned in \cref{col:error_itd_convex}, has little effect on
the ranking of the instance's harmfulness.

AID-EIGEM provided a precise estimation when both \(T\) and \(M\) are sufficiently large, aligning with \cref{th:error_aid}.
Moreover, the results of AID-EIGEM with different \(M\) are consistent with our observation in \cref{sec:est_err_aid}; smaller \(M\) yielded a smaller error when \(T\) is small, while a large \(M\) achieved better result when \(T\) is large.

\subsubsection{Estimation of Influence on IS with DCGAN} 
\noindent The result in \cref{fig:valid}\subref{sub:mnist_valid_is} is noisier than the LQGAN setting since precise estimations are more challenging in this case, where \cref{ass:J_bound} is not guaranteed.
We thus evaluated the errors of our estimates by checking whether Kendal's tau is statistically significantly larger than that of random ranking with p-values \(<0.05\).

Despite the difficulties in problem setting, ITD-EIGEM demonstrated statistically significantly better than the random ranking~(\cref{fig:valid}\subref{sub:mnist_valid_is}), albeit with the exception of the result at \(T=1000\).
Likewise, AID-EIGEM demonstrated statistically significantly superior performance compared to the random approach at \(T=100, 200, 500\).
The results of AID-EIGEM with different \(M\) also suggest that large $M$ does not contribute to better estimation in this setting.
Although ITD-EIGEM outperforms AID-EIGEM in estimation accuracy, AID-EIGEM remains a promising approach because of its significantly smaller memory cost as remarked in \cref{table:comparison}.

\subsection{Experiment 2: Data Cleansing}\label{sec:exp2}
\noindent We finally investigate whether the identified harmful instances are helpful in \textit{data cleansing}.
We define data cleansing as a task to improve GAN evaluation metrics by identifying a set of harmful instances and retraining without using them.

\subsubsection{Setup}
\noindent We will begin by outlining the configuration of the datasets used in the experiments.
 involved in preparing the datasets utilized in our experiments.
For 1D-Normal used to train LQGAN, we prepared a mixture Gaussian distribution consisting of two 1-dimensional normal distributions to simulate the situation where the training dataset includes harmful instances.
We used \(x \sim b \mathcal{N}(1, 0.5) + (1-b) \mathcal{N}(-2, 0.5) \) with \(b \sim \mathrm{Bernoulli}(0.95)\), considering samples from  \(\mathcal{N}(-2,0.5)\) to be harmful instances.
When computing the influence on ALL, we used a validation dataset generated only from \(\mathcal{N}(1,0.5)\), simulating the situation where a developer can create a small validation dataset with no harmful instances by their inspection.
For MNIST and AFHQ-CAT, we simply split the original training dataset into our training and validation datasets, considering the original dataset already includes some harmful instances, as suggested in \cite{khanna2019interpreting,Hara2019}.

Next, we detail our approaches and baselines used to identify harmful instances and the criteria for determining their harmfulness.
We identified harmful instances in the 1D-Normal training dataset using estimated influence on ALL, IS, and FID for every applicable setup.
We regarded a training instance harmful when it had a negative (positive) influence on FID (ALL or IS).
We also selected instances using baseline approaches for both setups: anomaly detection method, influence on the discriminator's loss, and random values.
For anomaly detection, we adopted Isolation Forest \cite{isolation}.
Isolation Forest fitted its model using the raw training data points in the 1D-Normal setting and feature vectors of the training datasets drawn by the pre-trained classifier in the MNIST setting.
We also tested an instance selection using the influence on the discriminator's loss to verify our claim that the influence on the loss does not represent the harmfulness of the instances.
Influence on the discriminator's loss was calculated on \(V\left({{\boldsymbol{\varphi}}}^{(T)},{\boldsymbol{\psi}}^{(T)}\right)\) using validation instances and newly sampled latent variables.
We considered instances with a negative influence on the discriminator's loss to be harmful.

Our experiments consist of the following five steps:
\begin{enumerate}[label=\roman*)]
    \item \underline{Preparing datasets}: We sampled 1D-Normal instances to construct the training dataset with 1,000 instances for AGD training and the validation dataset with 1,000 instances for computing ALL. For MNIST, we randomly selected 50,000 instances for AGD training and 10,000 validation instances for computing IS and FID and training the classifier for IS and FID.
    AFHQ-CAT dataset was randomly split into 3,336 instances for AGD training and 1,111 instances for generating InceptionV3 features used in the instance selection.
    \item \underline{Scoring harmfulness}: After the AGD training, we scored the harmfulness of all training instances using our approaches and baselines.
    \item \underline{Selecting instances to be removed}: We selected the top \(n_h < N\) harmful instances according to the computed harmful scores, testing with various \(n_h\).
    \item \underline{Retraining}: We then retrained the model with the selected harmful instances excluded. When retraining, we tested two strategies: 
    \begin{itemize}
        \item \textit{Full-epoch retraining} runs complete \(T\) steps of counterfactual AGD from the initial parameter \(\boldsymbol{\theta}^{(0)}\).
        \item \textit{One-epoch retraining} runs counterfactual AGD starting from the trained parameter at the one-epoch behind the final step, similarly adopted by \cite{Hara2019}. When using this strategy, in step (ii), \cref{alg:lie_itd} runs iterations only for the last epoch.
    \end{itemize}
    \item \underline{Evaluation}: Finally, we evaluated the performance of retrained models by ALL for LQGAN, IS/FID for DCGAN, and FID for StyleGAN using the test dataset and newly sampled test latent variables. 
    The test datasets of 1D-Normal, MNIST, and AFHQ-CAT contain 1,000, 10,000, and 1,111 instances, respectively.
\end{enumerate}
\noindent We ran the experiments 20 times using different random seeds for LQGAN and DCGAN, excluding three trials in which DCGAN failed to converge.
As for StyleGAN, we only ran the experiment once because of its significant computational expenses involved in both influence estimation and training.

To thoroughly evaluate the data cleansing, we examine the efficacy of multiple instance selection approaches (\cref{sec:compare_methods}), the impact on different retraining strategies (\cref{sec:compare_retrain}), the enhancements of general generative performance (\cref{sec:coverage}), and visual analysis of harmful instances and generated samples (\cref{sec:quality}).
 
\subsubsection{Overall Performance}
\noindent \cref{fig:clean} shows the average test GAN evaluation metrics of the repeated experiments for each instance selection approach.

\cref{fig:clean}\subref{sub:clean_ll} indicates that the data cleansing by the influence on ALL by ITD-EIGEM and the Isolation Forest resulted in the best improvements across all methods.

For the MNIST with DCGAN setup, our selection approach with ITD-EIGEM showed the best FID and IS improvements, regardless of the choice of GAN evaluation metric used to judge harmfulness and the retraining strategy, i.e., full-epoch or one-epoch retraining~(\cref{fig:clean}\subref{sub:clean_is}-\ref{fig:clean}\subref{sub:clean_fid_1epoch}).

For the AFHQ-CAT with StyleGAN setup, our selection approach with ITD-EIGEM showed the best FID improvements both in the full- or one-epoch retraining~(\cref{fig:clean}\subref{sub:clean_stylegan}-\ref{fig:clean}\subref{sub:clean_stylegan_1epoch}).

\subsubsection{Comparison of Instance Selection Approaches} \label{sec:compare_methods}
\noindent Overall, ITD-EIGEM outperformed AID-EIGEM, especially in the DCGAN and StyleGAN settings.
This is likely because AID-EIGEM relies on a \cref{ass:J_bound} that is not applicable in deep learning settings.
Nevertheless, AID-EIGEM remains a valuable option because of its memory efficiency and improvements of the test IS and FID in the one-epoch retraining settings~(\cref{fig:clean}\subref{sub:clean_is_1epoch} and \ref{fig:clean}\subref{sub:clean_fid_1epoch}).

Regarding the baselines, Isolation Forest was effective for data cleansing in the simple setting with LQGAN, yet this worsened the performance in the other cases (\cref{fig:clean}\subref{sub:clean_is}-\subref{sub:clean_stylegan_1epoch})
Data cleansing based on the influence on the discriminator's loss failed to improve the GAN evaluation metrics, although small improvements were observed in \ref{fig:clean}\subref{sub:clean_fid}.
This result supports our hypothesis that the discriminator's loss is not a reliable metric of generative performance, and thus, the influence on the discriminator's loss cannot accurately measure the harmfulness of instances.
Randomly removing instances unexpectedly enhanced the test FID and IS in the full-epoch setting~(\cref{fig:clean}\subref{sub:clean_is} and \ref{fig:clean}\subref{sub:clean_fid}).
We hypothesize that this is because the random removal, which scales down the gradient, worked similarly to the learning rate tuning.
However, it should be noted that the improvements in our approaches do not solely stem from this ``pseudo'' learning rate tuning; the t-test with p-values \(<0.05\) confirmed that the improvements achieved with ITD-EIGEM were statistically significantly better than those attained through random selection. 

\subsubsection{Full-epoch v.s. One-epoch Retraining} \label{sec:compare_retrain}
\noindent Surprisingly, the ITD-EIGEM data cleansing with one-epoch retraining demonstrated a competitive performance compared to the full-epoch retraining\footnote{
The small improvements observed in the one-epoch retraining of StyleGAN (\cref{fig:clean}\subref{sub:clean_stylegan_1epoch}) appear to stem from the nature of the moving averaged generator; the data cleansing performed to a single epoch only partially changes the final averaged generator.
}.
This suggests that considering the influence during the last epoch is informative enough for data cleansing.
The superior performance of the one-epoch retraining demonstrated its practical effectiveness, namely, the significantly smaller computational cost of ITD-EIGEM and retraining compared to the full-epoch retraining.

\subsubsection{Can Enhancing One Metric Lead to Overall Improvements?} \label{sec:coverage}
\noindent Because all the current GAN evaluation metrics have their own weaknesses~\cite{proscons}, our data cleansing may ``overfit'' that metric, sacrificing some aspects of generative performance. 
We checked if such an overfit occurs by running the data cleansing using the influence estimation on a given metric and by evaluating the cleansed model using different metrics.

In the MNIST case, \cref{fig:clean}\subref{sub:clean_is}-\ref{fig:clean}\subref{sub:clean_fid_1epoch} indicates that data cleansing based on the influence on a specific GAN evaluation metric improves another metric that is not used for the selection; removing harmful instances based on the influence on IS improved test FID (and vice versa). 

For AFHQ-Cat, we evaluated the cleansed model using density and coverage metrics \cite{naeem2020reliable}, which correspond to the quality and diversity of the generated images, respectively.
\cref{table:density_coverage} presents the test density and coverage for the cleansed models obtained from each method, with the removal rate chosen based on their best validation FID. 

It is generally expected that instance removal, which inherently reduces dataset diversity, would enhance the quality of generation since the model can focus on a limited set of instance patterns. Such a quality improvement is actually confirmed by the improved density of all approaches (\cref{table:density_coverage}). 
However, our results also demonstrate a counterintuitive finding: the ITD-EIGEM-based data cleansing significantly improved the coverage of the generated samples without compromising density (\cref{table:density_coverage}). 
In \cref{sec:quality}, we will investigate the underlying mechanism of this phenomenon.

\begin{table}[t]
\caption{Test Density and Coverage of StyleGAN after the Data Cleansing\label{table:density_coverage}}
\centering
\begin{tabular}{lcc}
\toprule
 & Density & Coverage \\
\midrule
No Removal & 0.738 (+0.000) & 0.696 (+0.000) \\
\textbf{Influence on FID by ITD} & 0.778 (+0.040) & \textbf{0.717 (+0.021)}\\
\textbf{Influence on FID by AID} & 0.822 (+0.084) & 0.702 (+0.006) \\
Isolation Forest & 0.790 (+0.052) & 0.693 (-0.003) \\
Random & 0.772 (+0.034) & 0.691 (-0.005) \\
\bottomrule
\end{tabular}
\end{table}

\subsubsection{Qualitative Study of Harmful Instances and Generated Samples} \label{sec:quality}

\begin{figure*}[t]
\begin{minipage}{0.48\linewidth}
\centering
\begin{minipage}{\linewidth}
\centering
\subfloat[Harmful training instances]{
\includegraphics[width=0.78\linewidth,trim=-0.5cm 0.5cm 0.4cm 0.4cm, clip]{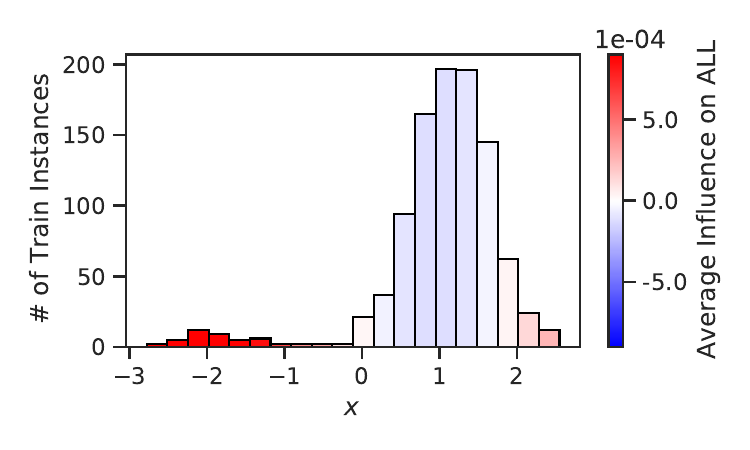}
    \label{sub:visual_gauss_harmful}}
\end{minipage} 
\begin{minipage}{\linewidth}
    \centering
    \subfloat[Change in generator distribution by data cleansing]{
    \includegraphics[width=\linewidth,trim=-1.5cm 0.45cm -2.5cm 0.1cm, clip]{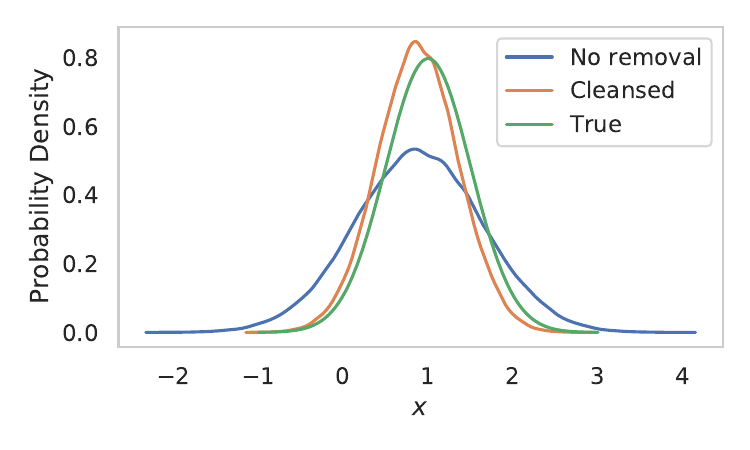}
    \label{sub:visual_gauss_cleansed}}
\end{minipage}  
\caption[]{
\noindent Influence on ALL representing harmfulness of 1D-Normal training instances~\subref{sub:visual_gauss_harmful} and generator's distributions before and after the data cleansing~\subref{sub:visual_gauss_cleansed}.
\subref{sub:visual_gauss_harmful} presents the histogram of the training instances, with each segment colored according to the average influence on ALL calculated over the instances within the belonging bin.
\subref{sub:visual_gauss_cleansed} shows the kernel density estimates of the true distribution (``True'') and generator's distributions before (``No removal'') and after (``Cleansed'') the data cleansing.
}
\label{fig:visual_2d_main}
\end{minipage}
\hfill
\begin{minipage}{0.48\linewidth}
\centering
\begin{minipage}[c]{0.98\linewidth}
\centering
\subfloat[Total influence on FID of harmful instances]{
\includegraphics[width=\linewidth]{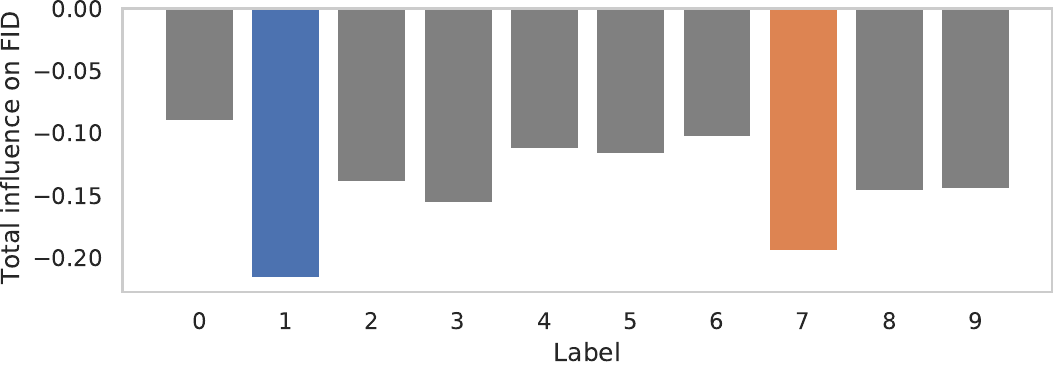}
\label{sub:harmful_label}}
\end{minipage} 
\begin{minipage}[c]{0.48\linewidth}
\centering
\subfloat[No removal]{
\includegraphics[width=\linewidth]{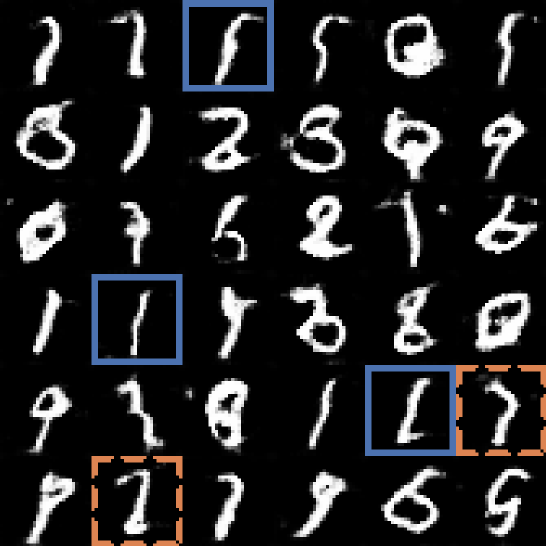}
\label{sub:visual_mnist_no_removal}}
\end{minipage} 
\begin{minipage}[c]{0.48\linewidth}
\centering
\subfloat[Cleansed]{
\includegraphics[width=\linewidth]{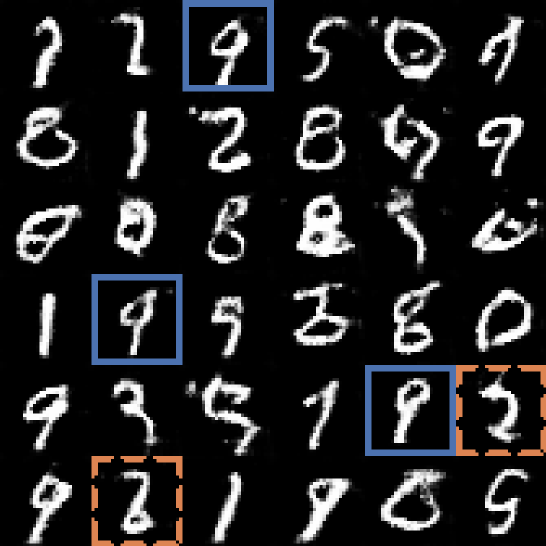}
\label{sub:visual_mnist_cleansed}}
\end{minipage} 

\caption[]{
Label-wise total influence on FID of estimated harmful instances~\subref{sub:harmful_label} and generated instances before~\subref{sub:visual_mnist_no_removal} and after~\subref{sub:visual_mnist_cleansed} the data cleansing.
Both \subref{sub:visual_mnist_no_removal} and \subref{sub:visual_mnist_cleansed} use the same series of test latent variables.
As seen in \subref{sub:harmful_label}, instances labeled as digits 1 and 7 were suggested to be the most harmful.
\subref{sub:visual_mnist_no_removal} and \subref{sub:visual_mnist_cleansed} indicate that their exclusion increased the diversity of generated instances by assigning latent variables that had been associated with the digits 1 (blue solid line) and 7 (orange dotted line) to other digits.
}
\label{fig:visual_mnist_main}
\end{minipage}
\end{figure*}
\begin{figure*}[t!]
\begin{minipage}{\linewidth}
\centering
\begin{minipage}{\linewidth}
\centering
\subfloat[Harmful instances]{
\includegraphics[width=0.98\linewidth,clip]{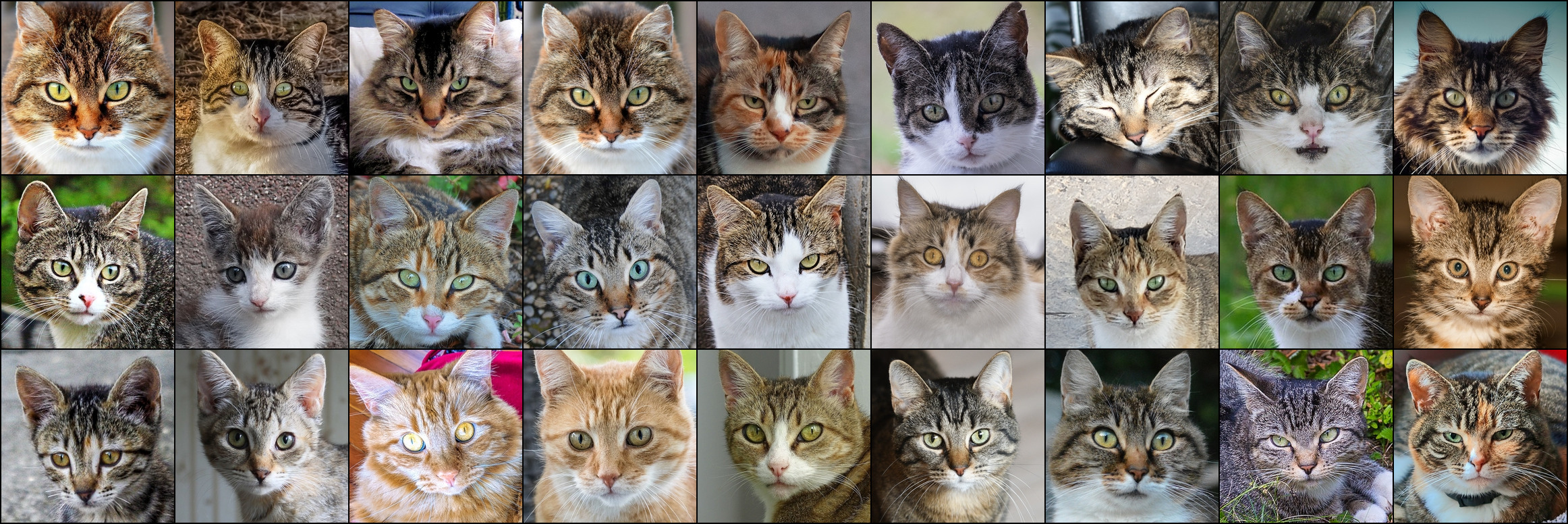}
    \label{sub:harmful_stylegan}}
\end{minipage} 

\vspace{4pt}

\begin{minipage}{\linewidth}
\centering
\subfloat[Helpful instances]{
\includegraphics[width=0.98\linewidth,clip]{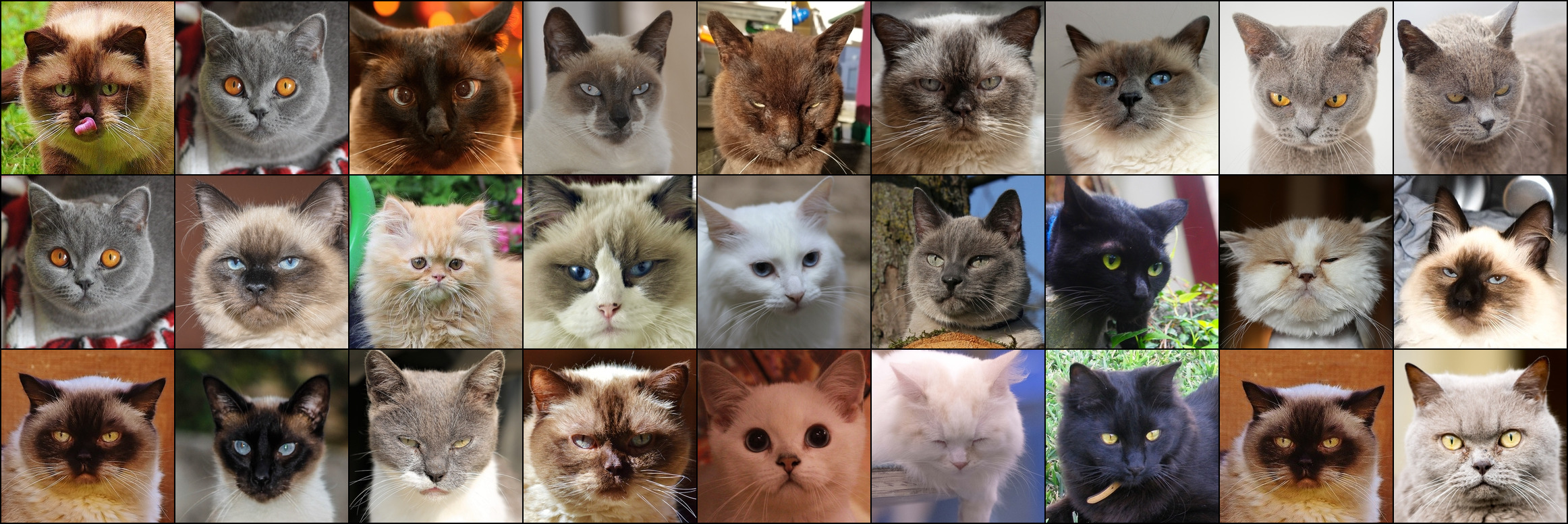}
    \label{sub:helpful_stylegan}}
\end{minipage} 

\vspace{4pt}

\begin{minipage}{\linewidth}
\centering
\subfloat[Randomly selected instances]{
\includegraphics[width=0.98\linewidth,clip]{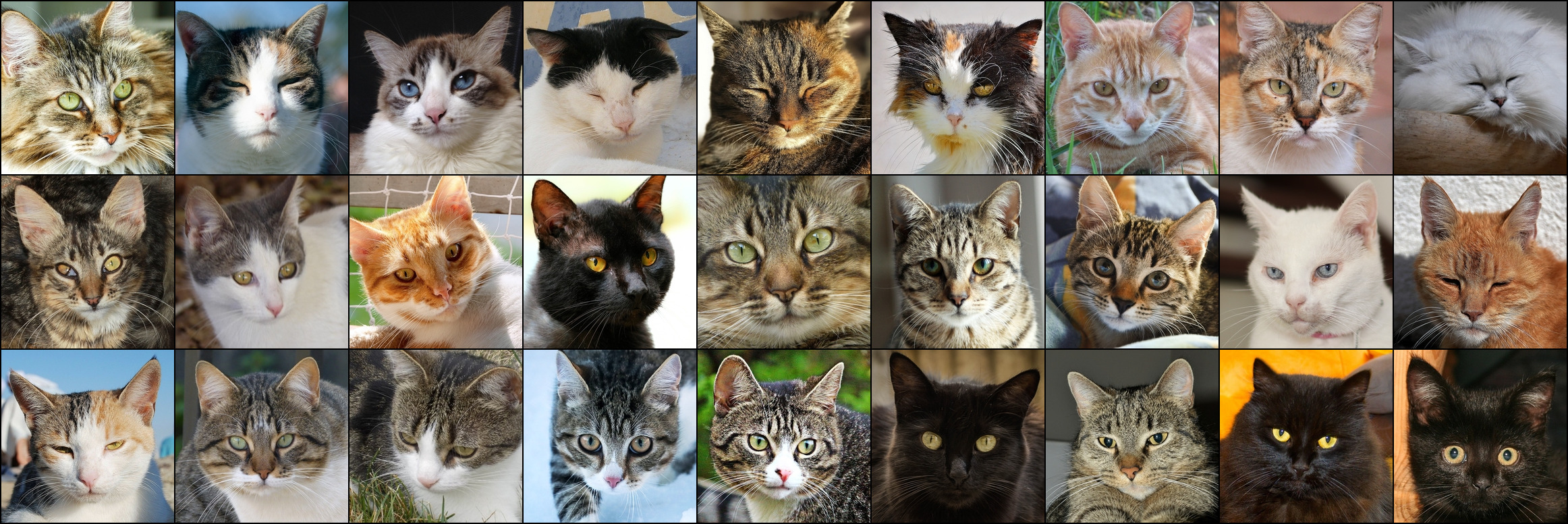}
    \label{sub:random_stylegan}}
\end{minipage} 
\caption[]{
\noindent Top-27 harmful~\subref{sub:harmful_stylegan} and helpful training~\subref{sub:helpful_stylegan} instances suggested by our ITD-EIGEM performed over entire training steps, and randomly selected instances from the dataset~\subref{sub:random_stylegan}.
}
\label{fig:influencial_cats}
\end{minipage}
\end{figure*}
\def\widthcat{0.195}
\begin{figure*}[t!]
\begin{minipage}{\linewidth}
\centering
\begin{minipage}{\widthcat\linewidth}
\centering
\subfloat[No removal]{
\includegraphics[width=\linewidth,clip]{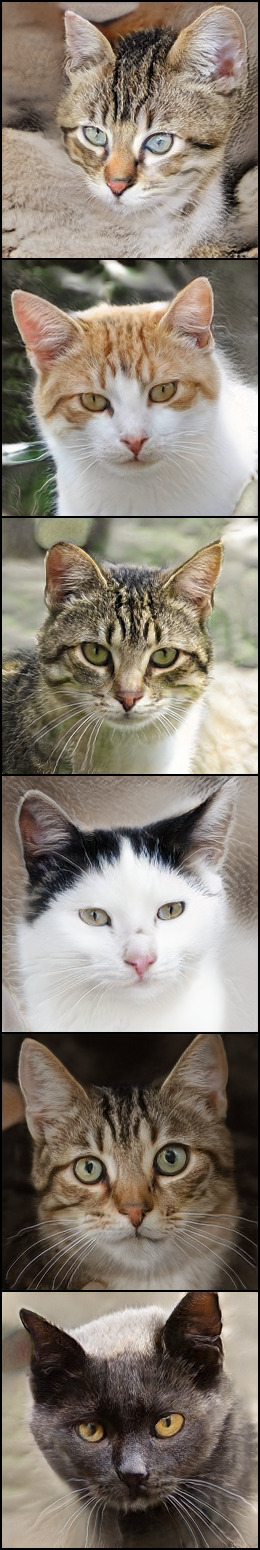}
    \label{sub:visual_cat_no_removal}}
\end{minipage} 
\hfill
\begin{minipage}{\widthcat\linewidth}
\centering
\subfloat[Infl. on FID by ITD]{
\includegraphics[width=\linewidth,clip]{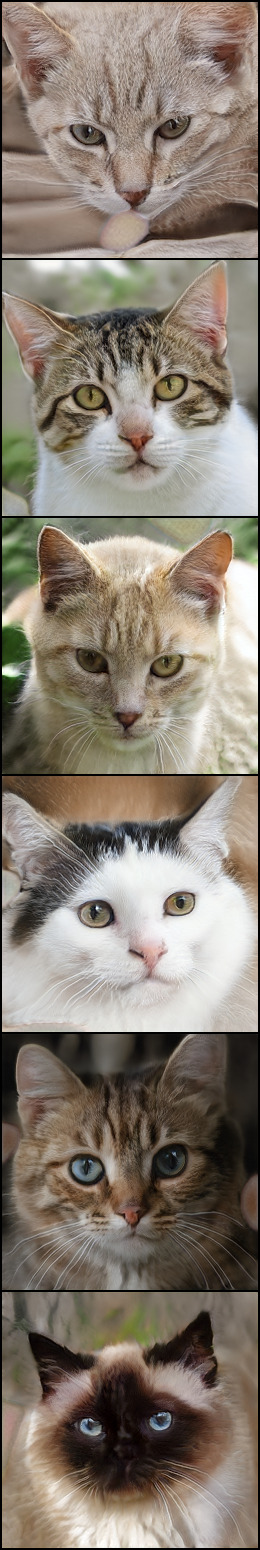}
    \label{sub:visual_cat_fid_itd}}
\end{minipage} 
\hfill
\begin{minipage}{\widthcat\linewidth}
\centering
\subfloat[Infl. on FID by AID]{
\includegraphics[width=\linewidth,clip]{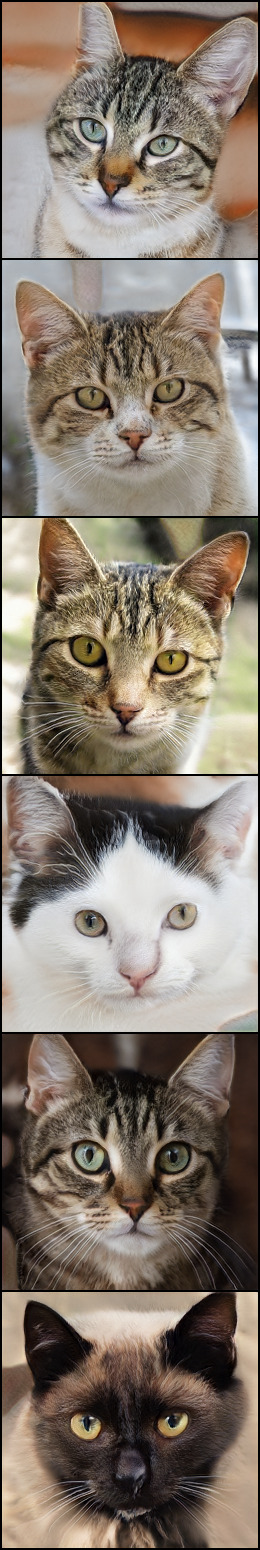}
    \label{sub:visual_cat_fid_aid}}
\end{minipage} 
\hfill
\begin{minipage}{\widthcat\linewidth}
\centering
\subfloat[Isolation Forest]{
\includegraphics[width=\linewidth,clip]{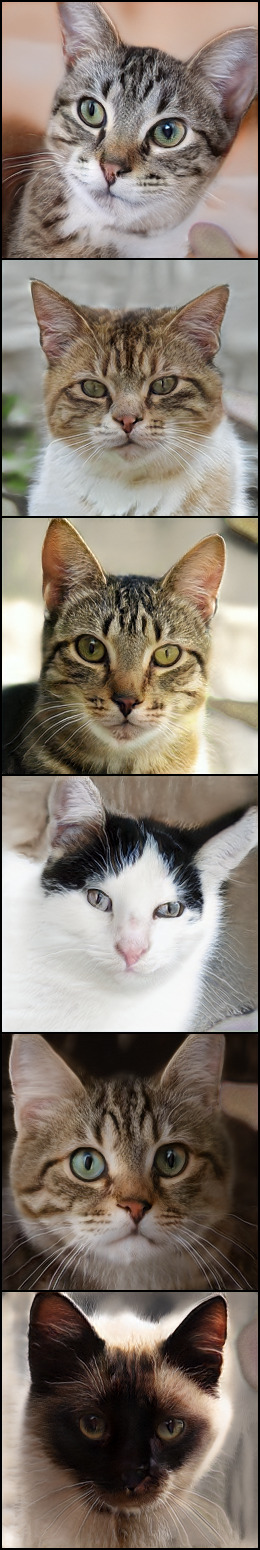}
    \label{sub:visual_cat_if}}
\end{minipage} 
\hfill
\begin{minipage}{\widthcat\linewidth}
\centering
\subfloat[Random]{
\includegraphics[width=\linewidth,clip]{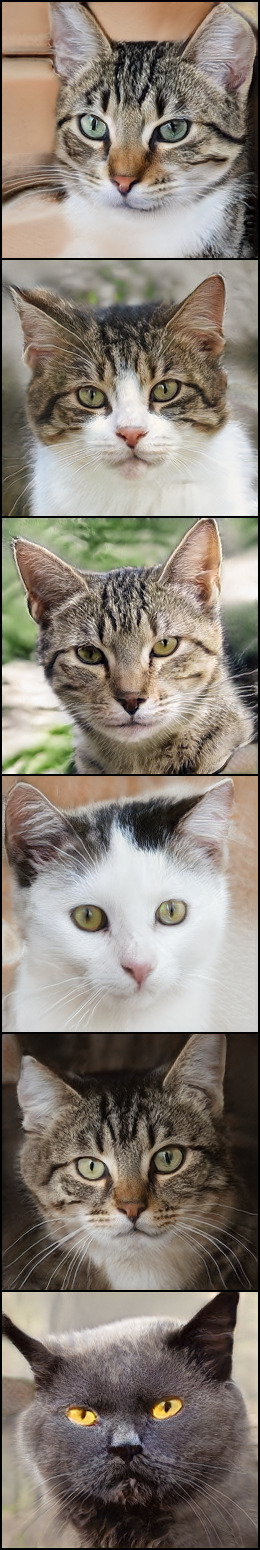}
    \label{sub:visual_cat_random}}
\end{minipage} 
\caption[]{Generated images before and after the data cleansing. For every method, we chose the model that yielded the best validation FID. All the images in the same row use the same test latent variable.}
\label{fig:visual_cat_main}
\end{minipage}
\end{figure*}

Our data cleansing presented visual improvements in generated samples.
Furthermore, we found that suggested harmful instances tend to belong to the oversampling region of the generator's distribution.

\cref{fig:visual_2d_main}\subref{sub:visual_gauss_harmful} shows that the 1D-Normal training instances drawn from \(\mathcal{N}(-2,0.5)\), which is added to simulate harmful instances, were correctly identified as harmful.
As a result, the distribution of the cleansed generator~(``Cleansed' in \cref{fig:visual_2d_main}\subref{sub:visual_gauss_cleansed}) moved closer to the desired \(\mathcal{N}(1,0.5)\)~(``True'' in \cref{fig:visual_2d_main}\subref{sub:visual_gauss_cleansed}). 

In the case of MNIST, \cref{fig:visual_mnist_main}\subref{sub:harmful_label} indicates that a large part of the harmful instances were labeled as digits 1 and 7, likely because the original generator produced digits 1 and 7 too frequently than the other digits~(\cref{fig:visual_mnist_main}\subref{sub:visual_mnist_no_removal}).
By removing them, the samples generated from the same latent variables changed from the images of digits 1 and 7 to those of other digits~ (highlighted samples in \cref{fig:visual_mnist_main}\subref{sub:visual_mnist_no_removal} and \ref{fig:visual_mnist_main}\subref{sub:visual_mnist_cleansed}).
This implies that a certain amount of density that had been over-allocated to the digit 1 shifted to the regions of other digits.
We suppose this effect improved the diversity in the generated samples, resulting in better FID and IS.

In the case of AFHQ-CAT, we observed similar results. 
\cref{fig:influencial_cats} categorizes training instances into three groups: harmful instances \subref{sub:harmful_stylegan}, predicted to negatively influence FID via ITD-EIGEM; helpful instances \subref{sub:helpful_stylegan}, predicted to have positive influences; and randomly sampled instances \subref{sub:random_stylegan}.
\cref{fig:visual_cat_main} shows images generated by the original and cleansed models, using the same latent variables in each column.

From \cref{fig:influencial_cats}, harmful and helpful instances are distinguished by common and rare patterns, respectively.
A significant portion of the harmful instances (\cref{fig:influencial_cats}\subref{sub:harmful_stylegan}) were yellow cats with stripes, indicating that they share common patterns frequently appearing in the dataset (\cref{fig:influencial_cats}\subref{sub:random_stylegan}). 
Conversely, many of the helpful instances (\cref{fig:influencial_cats}\subref{sub:helpful_stylegan}) consisted of cats without stripes and seal point cats\footnote{Cats having a light-colored body with dark brown points on their ears and face}, featuring rare patterns in the dataset (\cref{fig:influencial_cats}\subref{sub:random_stylegan}). 

Removing those harmful instances seemed to lead the model to generate samples with rare patterns, as evident in \cref{fig:visual_cat_main}.
The cats in the first and third rows indicate that latent variables initially linked to a common pattern (i.e., yellow cats with stripes) were re-assigned to a rare pattern (i.e., cats without stripes) in the cleansed model (\cref{fig:visual_cat_main}\subref{sub:visual_cat_fid_itd}).
Similarly, a grey cat in the bottom row of the original model became a seal point cat after our data cleansing ((\cref{fig:visual_cat_main}\subref{sub:visual_cat_fid_itd}).
These re-assignments of latent variables to rare patterns were not observed or only partially observed in other approaches (\cref{fig:visual_cat_main}\subref{sub:visual_cat_fid_aid}-\subref{sub:visual_cat_random}).

\section{Current Limitations and Future Directions}
Our method does not guarantee that instances identified as harmful for one GAN evaluation metric are equally harmful for other metrics.
This limitation stems from the nature of current GAN evaluation metrics, namely, they can only evaluate limited aspects of generative performance\cite{proscons}. 
For example, FID considers sample diversity but only partially addresses visual quality, focusing more on textures than shapes of objects \cite{karras2020analyzing}. 
\cref{sec:exp2} demonstrated that removing instances harmful to FID improved other metrics such as IS and coverage.
However, these metrics may share the same focus, and thus improving them can compromise some aspects of the generative performance, such as the accuracy of object shapes.
Moreover, since excluding instances can reduce diversity in datasets, it potentially compromises the diversity of generated samples in certain settings.

Note that this limitation does not stem from our influence estimation method itself.
Our method can be combined with any evaluation metric that is differentiable. 
This flexibility allows our approach to be integrated with future evaluation metrics, which possibly provide a more comprehensive evaluation capability.
Future work will focus on incorporating advanced GAN evaluation metrics to better understand the relationship between training instances and generative performance.

\section{Conclusion}
\noindent We proposed influence estimators for GANs that use the Jacobian of the gradient of the discriminator's loss with respect to the generator’s parameters (and vice versa), which traces how the absence of an instance in the discriminator’s training affects the generator’s parameters.
We also proposed an evaluation scheme to judge if an instance is harmful or not based on the influence on GAN evaluation metrics rather than that on the loss value.
The proposed estimators and differentiability of GAN evaluation metrics allow efficient estimation of the influence on GAN evaluation.
Empirical results showcased that the estimated influence on GAN evaluation metric well agreed with the true value.
We finally demonstrated removing identified harmful instances effectively improved the generative performance with respect to various GAN evaluation metrics.

\bibliography{main}
\bibliographystyle{IEEEtran}

\begin{bibunit}
\newpage

\onecolumn
\appendices
\section{Theoretical Results}  \label{app:proof}
\noindent Recalling the following assumptions, we will prove our \cref{th:error_itd,th:error_aid}.
\Jbound*
\invertible*
\smoothZ*
\Neighbor*

\subsection{Proof of \texorpdfstring{\cref{th:error_itd}}{Theorem 1}}
\noindent We begin providing useful lemmas with proofs.
\Zbound*
\begin{proof}
 Let \(\sigma_{\mathrm{max}}\left(\boldsymbol{A}\right)\) denote the largest singular value of a matrix \(\boldsymbol{A}\).
 From \cref{ass:J_bound}, for every \({{\boldsymbol{\theta}}} \in \mathcal{B}({\boldsymbol{\theta}}^{*})\),
 \begin{align*}
           \left\Vert \boldsymbol{Z}\left(\boldsymbol{\theta}\right)\right\Vert^2
             &= \sigma_{\mathrm{max}} \left(\left(\boldsymbol{I}- \eta  \boldsymbol{J}\left(\boldsymbol{\theta} \right)\right)^{\intercal}\left(\boldsymbol{I}- \eta  \boldsymbol{J}\left(\boldsymbol{\theta} \right)\right) \right)\\
             &= \sigma_{\mathrm{max}} \left(\boldsymbol{I}- \eta  \left(\boldsymbol{J}\left(\boldsymbol{\theta} \right) + \boldsymbol{J}\left(\boldsymbol{\theta}\right)^{\intercal}\right) + \eta^2 \boldsymbol{J}\left(\boldsymbol{\theta}\right)^{\intercal}\boldsymbol{J}\left(\boldsymbol{\theta}\right)\right)  \\
              &\leq   1 - 2 \eta \mu + \eta^2 \lambda^2 .
 \end{align*}
Since \(\eta > 0\), the sufficient condition of \(\eta\) that ensures \(\sigma_{\mathcal{B}} < 1 \) is \(\eta < \frac{2\mu}{\lambda^2}\).
\end{proof}
\Converge*
\begin{proof}
We begin with showing the uniqueness of the Nash equilibrium within \(\mathcal{B}({\boldsymbol{\theta}}^{*})\).
Nash equilibrium \({\boldsymbol{\theta}}^{*}=\left({{\boldsymbol{\varphi}}^{*}}{}^\intercal ~ {\boldsymbol{\psi}}^{*}{}^\intercal\right){}^\intercal\) needs to satisfy
\begin{align*}
V(\boldsymbol{\varphi } ,{\boldsymbol{\psi}}^{*}) &\geq V({\boldsymbol{\varphi}}^{*},{\boldsymbol{\psi}}^{*}) ,\ \forall \boldsymbol{\varphi }  \text{ s.t. } \left({{\boldsymbol{\varphi}}}{}^\intercal ~ {\boldsymbol{\psi}}^{*}{}^\intercal\right){}^\intercal \in  \mathcal{B}({\boldsymbol{\theta}}^{*}),\\
V({\boldsymbol{\varphi}}^{*} ,\boldsymbol{\psi }) &\leq V({\boldsymbol{\varphi}}^{*},{\boldsymbol{\psi}}^{*}), \ \forall \boldsymbol{\psi }  \text{ s.t. }   \left({{\boldsymbol{\varphi}}^{*}}{}^\intercal ~ {\boldsymbol{\psi}}{}^\intercal\right){}^\intercal \in  \mathcal{B}({\boldsymbol{\theta}}^{*}),
\end{align*}
or, equivalently
\begin{align}
\nabla _{\boldsymbol{\varphi }}^{2} V({\boldsymbol{\varphi }} ,{\boldsymbol{\psi}}^{*}) &\succeq 0, \ \forall  \boldsymbol{\varphi } \text{ s.t. } \left({{\boldsymbol{\varphi}}}{}^\intercal ~ {\boldsymbol{\psi}}^{*}{}^\intercal\right){}^\intercal \in  \mathcal{B}({\boldsymbol{\theta}}^{*}), \label{eq:definite_varphi}\\
-\nabla _{\boldsymbol{\psi }}^{2} V({\boldsymbol{\varphi}}^{*},{\boldsymbol{\psi }}) &\succeq 0, \ \forall \boldsymbol{\psi }   \text{ s.t. }   \left({{\boldsymbol{\varphi}}^{*}}{}^\intercal ~ {\boldsymbol{\psi}}{}^\intercal\right){}^\intercal \in  \mathcal{B}({\boldsymbol{\theta}}^{*}), \label{eq:definite_psi} \\
\nabla _{\boldsymbol{\varphi }} V({\boldsymbol{\varphi}}^{*},{\boldsymbol{\psi}}^{*}) &=\boldsymbol{0}, \label{eq:zero_grad_varphi}\\
\nabla _{\boldsymbol{\psi }} V({\boldsymbol{\varphi}}^{*},{\boldsymbol{\psi}}^{*}) &=\boldsymbol{0}. \label{eq:zero_grad_psi}
\end{align}
Recall \cref{ass:J_bound} which ensures
\begin{equation*}
    \frac{1}{2}\left(\boldsymbol{J} (\boldsymbol{\theta } )+\boldsymbol{J} (\boldsymbol{\theta } )^{\intercal }\right) =\begin{pmatrix}
\nabla _{\boldsymbol{\varphi }}^{2} V(\boldsymbol{\varphi } ,\boldsymbol{\psi }) & 0\\
0 & -\nabla _{\boldsymbol{\psi }}^{2} V(\boldsymbol{\varphi } ,\boldsymbol{\psi }) 
\end{pmatrix} \succ 0, ~\forall \left({{\boldsymbol{\varphi}}}{}^\intercal ~ {\boldsymbol{\psi}}{}^\intercal\right){}^\intercal \in  \mathcal{B}({\boldsymbol{\theta}}^{*}).
\end{equation*}
Given the strong concavity of \( V(\boldsymbol{\varphi } ,\boldsymbol{\psi }) \) with respect to \(\boldsymbol{\psi}\), the aforementioned relation implies that
\begin{equation*}
     \begin{array}{l}
\nabla _{\boldsymbol{\varphi }}^{2} V(\boldsymbol{\varphi } ,\boldsymbol{\psi }) \succ 0~\text{and} ~\nabla _{\boldsymbol{\psi }}^{2} V(\boldsymbol{\varphi } ,\boldsymbol{\psi }) \prec 0,~\forall \left({{\boldsymbol{\varphi}}}{}^\intercal ~ {\boldsymbol{\psi}}{}^\intercal\right){}^\intercal \in  \mathcal{B}({\boldsymbol{\theta}}^{*}).
\end{array}
\end{equation*}
Therefore, there exists the unique point in \(\mathcal{B}({\boldsymbol{\theta}}^{*})\) that satisfies  \cref{eq:definite_varphi,eq:definite_psi,eq:zero_grad_varphi,eq:zero_grad_psi}; $\min_{\boldsymbol{\varphi }} V(\boldsymbol{\varphi }, \boldsymbol{\psi })$ has a unique solution where \(\nabla _{\boldsymbol{\varphi }} V({\boldsymbol{\varphi}},{\boldsymbol{\psi}}) =\boldsymbol{0}\) for any $ \boldsymbol{\psi }$ lying with the neighborhood, and $\max_{\boldsymbol{\psi }} V(\boldsymbol{\varphi },\boldsymbol{\psi })$ also has a unique solution where \(\nabla _{\boldsymbol{\psi }} V({\boldsymbol{\varphi}},{\boldsymbol{\psi}}) =\boldsymbol{0}\) for any $\boldsymbol{\varphi }$ lying with the neighborhood.
Therefore, \(\boldsymbol{\theta}^{*}\in \mathcal{B}({\boldsymbol{\theta}}^{*})\) is the unique Nash equilibrium in \( \mathcal{B}({\boldsymbol{\theta}}^{*})\).

We then show that \({{\boldsymbol{\theta}}}^{(T)}\) converges to \({\boldsymbol{\theta}}^{*}\) when \(T\to\infty\).
Consider a mapping \(U(\boldsymbol{\theta }) \coloneqq \boldsymbol{\theta } -\eta \boldsymbol{v}(\boldsymbol{\theta })\) with \( \eta < \frac{2\mu}{\lambda^2}\) that is defined on \(\boldsymbol{\theta } \in \mathcal{B}({\boldsymbol{\theta}}^{*})\).
Since \(\partial _{\boldsymbol{\theta }} U(\boldsymbol{\theta }) =\boldsymbol{Z}(\boldsymbol{\theta })\), we have \(\max_{\boldsymbol{\theta }\in \mathcal{B}({\boldsymbol{\theta}}^{*})}\left\Vert \partial _{\boldsymbol{\theta }} U(\boldsymbol{\theta }) \right\Vert<1\), and thus \(U(\boldsymbol{\theta }) \) is a contraction mapping.
It is also trivially true that the Nash equilibrium in  \(\mathcal{B}({\boldsymbol{\theta}}^{*})\) is the fixed point of \(U(\boldsymbol{\theta }) \). 
Therefore, by recalling \(\boldsymbol{\theta}^{(t+1)}=U\left(\boldsymbol{\theta }^{(t)}\right) \) and \cref{ass:neighbor}, \({{\boldsymbol{\theta}}}^{(T)}\) converges to the unique stationary point \({\boldsymbol{\theta}}^{*}\) as \(T\to\infty\) for any \({{\boldsymbol{\theta}}}^{(0)} \in \mathcal{B}({\boldsymbol{\theta}}^{*})\).

\end{proof}
\begin{lemma} \label{lem:error_itd_true}
When \({\sigma} \coloneqq  \max_{{\boldsymbol{\theta}}\in \mathbb{R}^{d_{\boldsymbol{\theta}}} }\left\Vert{\boldsymbol{Z}}\left({{\boldsymbol{\theta}}}\right) \right\Vert \neq 1\), then for every \(t\geq 0\),
\begin{equation*}
     \left\Vert {{\boldsymbol{\theta}}}^{(t)}_{-j} - {{\boldsymbol{\theta}}}^{(t)} \right\Vert \leq   \frac{L_F\left(1-{\sigma}^{t}\right)}{1- {\sigma}},
\end{equation*}
where \(L_F \coloneqq  \max_{{\boldsymbol{\theta}}} \left\Vert \Delta\boldsymbol{v}_{-j}\left({{\boldsymbol{\theta}}}\right) \right\Vert\).
\end{lemma}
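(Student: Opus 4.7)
The plan is to derive a clean affine recurrence for $\Delta\boldsymbol{\theta}^{(t)}_{-j}$ whose linear coefficient has spectral norm at most $\sigma$, then iterate a geometric-series estimate from the trivial initial condition $\Delta\boldsymbol{\theta}^{(0)}_{-j}=\boldsymbol{0}$.

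First, I would subtract \cref{eq:update} from \cref{eq:update_cf} to obtain the exact recurrence
\begin{equation*}
\Delta\boldsymbol{\theta}^{(t+1)}_{-j} = \Delta\boldsymbol{\theta}^{(t)}_{-j} - \eta\bigl(\boldsymbol{v}_{-j}(\boldsymbol{\theta}^{(t)}_{-j}) - \boldsymbol{v}(\boldsymbol{\theta}^{(t)})\bigr),
\end{equation*}
and then split the bracketed term by adding and subtracting $\boldsymbol{v}(\boldsymbol{\theta}^{(t)}_{-j})$. The piece $-\eta(\boldsymbol{v}_{-j}(\boldsymbol{\theta}^{(t)}_{-j}) - \boldsymbol{v}(\boldsymbol{\theta}^{(t)}_{-j}))$ equals $\Delta\boldsymbol{v}_{-j}(\boldsymbol{\theta}^{(t)}_{-j})$ by the definitions of $\boldsymbol{v}_{-j}$ and $\Delta\boldsymbol{v}_{-j}$, while the piece $\boldsymbol{v}(\boldsymbol{\theta}^{(t)}_{-j}) - \boldsymbol{v}(\boldsymbol{\theta}^{(t)})$ can be written exactly via the fundamental theorem of calculus as $\bar{\boldsymbol{J}}^{(t)}\Delta\boldsymbol{\theta}^{(t)}_{-j}$, where $\bar{\boldsymbol{J}}^{(t)} \coloneqq \int_0^1 \boldsymbol{J}(\boldsymbol{\theta}^{(t)} + s\,\Delta\boldsymbol{\theta}^{(t)}_{-j})\,\mathrm{d}s$.

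Substituting yields the exact (not approximate) recurrence
\begin{equation*}
\Delta\boldsymbol{\theta}^{(t+1)}_{-j} = \bigl(\boldsymbol{I} - \eta \bar{\boldsymbol{J}}^{(t)}\bigr)\Delta\boldsymbol{\theta}^{(t)}_{-j} + \Delta\boldsymbol{v}_{-j}(\boldsymbol{\theta}^{(t)}_{-j}).
\end{equation*}
The key observation is then that, by the triangle inequality (Jensen's inequality) applied inside the integral,
\begin{equation*}
\bigl\Vert \boldsymbol{I} - \eta \bar{\boldsymbol{J}}^{(t)} \bigr\Vert \leq \int_0^1 \bigl\Vert \boldsymbol{I} - \eta \boldsymbol{J}(\boldsymbol{\theta}^{(t)} + s\,\Delta\boldsymbol{\theta}^{(t)}_{-j}) \bigr\Vert\,\mathrm{d}s \leq \sigma,
\end{equation*}
since the integrand is the spectral norm of $\boldsymbol{Z}$ evaluated at some parameter, and $\sigma$ is its global maximum. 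Combined with $\Vert \Delta\boldsymbol{v}_{-j}(\boldsymbol{\theta}^{(t)}_{-j})\Vert \leq L_F$, taking norms gives the scalar recurrence $\Vert \Delta\boldsymbol{\theta}^{(t+1)}_{-j}\Vert \leq \sigma\Vert \Delta\boldsymbol{\theta}^{(t)}_{-j}\Vert + L_F$.

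Finally, unrolling this recurrence from $\Vert\Delta\boldsymbol{\theta}^{(0)}_{-j}\Vert=0$ yields $\Vert\Delta\boldsymbol{\theta}^{(t)}_{-j}\Vert \leq L_F\sum_{k=0}^{t-1}\sigma^k = L_F(1-\sigma^t)/(1-\sigma)$, where the closed form is valid precisely because $\sigma\neq 1$. I expect the only subtle point to be justifying the integral form of the mean value theorem (which requires $\boldsymbol{v}$ to be $C^1$, guaranteed by the setup) and the spectral-norm Jensen step; everything else is a routine induction.
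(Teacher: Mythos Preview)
Your proof is correct and follows essentially the same strategy as the paper: derive an exact affine recurrence for $\Delta\boldsymbol{\theta}^{(t)}_{-j}$ whose linear part has spectral norm at most $\sigma$, then unroll into a geometric sum from the zero initial condition. The only technical difference is that the paper invokes a pointwise ``Lagrange mean value theorem'' to write $\boldsymbol{\theta}^{(s+1)}_{-j}-\boldsymbol{\theta}^{(s+1)} = \Delta\boldsymbol{v}_{-j}(\boldsymbol{\theta}^{*(s)}_{-j}) + \boldsymbol{Z}(\boldsymbol{\theta}^{*(s)}_{-j})(\boldsymbol{\theta}^{(s)}_{-j}-\boldsymbol{\theta}^{(s)})$ at a single intermediate point, whereas you use the integral form $\bar{\boldsymbol{J}}^{(t)}=\int_0^1\boldsymbol{J}(\cdot)\,\mathrm{d}s$ and a Jensen step to bound $\lVert\boldsymbol{I}-\eta\bar{\boldsymbol{J}}^{(t)}\rVert\leq\sigma$. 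Your version is in fact the cleaner one, since the pointwise mean value theorem does not hold as an equality for vector-valued maps; the paper's argument is fine for the norm bound it ultimately needs, but your integral formulation is the rigorous way to state the intermediate step.
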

\begin{proof}
From Lagrange's mean value theorem, there exists \(r(s) \in \left[0,1\right]\) for every \(s\geq 0\) such that for \({{\boldsymbol{\theta}}}^{*(s)}_{-j} \coloneqq  r(s) {{\boldsymbol{\theta}}}^{(s)}_{-j} + (1-r(s)){{\boldsymbol{\theta}}}^{(s)}\),
\begin{equation*}
 {{\boldsymbol{\theta}}}^{(s+1)}_{-j}- {{\boldsymbol{\theta}}}^{(s+1)}  =  \Delta\boldsymbol{v}_{-j}\left({{\boldsymbol{\theta}}}^{*(s)}_{-j}\right) + {\boldsymbol{Z}}\left( {{\boldsymbol{\theta}}}^{*(s)}_{-j} \right)\left({{\boldsymbol{\theta}}}^{(s)}_{-j}- {{\boldsymbol{\theta}}}^{(s)}\right).
\end{equation*}
By recursively applying this equation from \({{\boldsymbol{\theta}}}^{(0)}_{-j}- {{\boldsymbol{\theta}}}^{(0)}=\boldsymbol{0}\), 
\begin{equation*}
    {{\boldsymbol{\theta}}}^{(t)}_{-j}- {{\boldsymbol{\theta}}}^{(t)} 
    = \sum_{s=0}^{t-1}\left(\prod_{k=s+1}^{t-1}  {\boldsymbol{Z}}\left( {{\boldsymbol{\theta}}}^{*(k)}_{-j} \right) \right)\Delta\boldsymbol{v}_{-j}\left({{\boldsymbol{\theta}}}^{*(s)}_{-j}\right).
\end{equation*}
Recalling \( \left\Vert {{\boldsymbol{\theta}}}^{(0)}_{-j} - {{\boldsymbol{\theta}}}^{(0)} \right\Vert=0\), and since \({\sigma}\neq 1\), we obtain the desired inequality as
\begin{align*}
     \left\Vert {{\boldsymbol{\theta}}}^{(t)}_{-j} - {{\boldsymbol{\theta}}}^{(t)} \right\Vert 
     &\leq   L_F  \sum_{s=0}^{t-1}  {\sigma}^{t-1-s} \\
     &=  L_F \frac{1-{\sigma}^{t}}{1- {\sigma}} .
\end{align*}
\end{proof}

\begin{lemma}  \label{lem:sum_msigma}
When \(a \neq 1\), then for every \(M>0\),
\begin{equation}
        \sum_{m=0}^{M-1} m a^{m} = \frac{a\left(1-a^{M-1}\right)}{\left(1-a\right)^2}  - \frac{\left(M-1\right)a^M}{1-a}
\end{equation}
\end{lemma}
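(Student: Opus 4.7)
The plan is to prove this identity by the standard multiply-and-subtract (Abel summation) trick, which reduces the computation to a geometric sum. This is cleaner than differentiating $\sum a^m$ and avoids invoking any analytic justification; everything is purely algebraic and valid for $a \neq 1$. Induction on $M$ is also viable, but the direct algebraic manipulation is shorter and makes both terms on the right-hand side emerge naturally.

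Concretely, let $S_M \coloneqq \sum_{m=0}^{M-1} m a^m$. First I would form $a S_M = \sum_{m=0}^{M-1} m a^{m+1}$ and reindex by $m \mapsto m-1$ to obtain $a S_M = \sum_{m=1}^{M} (m-1) a^m$. Subtracting gives
\begin{equation*}
(1-a) S_M = \sum_{m=0}^{M-1} m a^m - \sum_{m=1}^{M} (m-1) a^m = \sum_{m=1}^{M-1} a^m - (M-1) a^M,
\end{equation*}
where the $m=0$ term vanishes (since it carries coefficient $0$) and the $m=M$ term contributes $-(M-1)a^M$. The remaining sum is geometric:
\begin{equation*}
\sum_{m=1}^{M-1} a^m = \frac{a(1-a^{M-1})}{1-a},
\end{equation*}
which holds precisely because $a \neq 1$.

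Dividing by $(1-a)$, also legitimate since $a \neq 1$, yields
\begin{equation*}
S_M = \frac{a(1-a^{M-1})}{(1-a)^2} - \frac{(M-1) a^M}{1-a},
\end{equation*}
which is the stated formula. There is no real obstacle here; the only subtlety is the careful handling of the endpoint indices ($m=0$ and $m=M$) when reindexing the shifted sum, and the trivial check that the identity is consistent at $M=1$ (both sides equal $0$). Since the identity is purely algebraic and $a \neq 1$ is assumed, no further case analysis is needed.
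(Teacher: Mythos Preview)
Your proof is correct and follows essentially the same approach as the paper: both compute $(1-a)S_M$, recognize the resulting expression as a geometric sum minus the boundary term $(M-1)a^M$, and then divide by $1-a$. You are simply more explicit about the multiply-and-subtract / reindexing step that the paper leaves implicit.
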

\begin{proof}
Since \({\sigma} \neq 1\),
     \begin{align*}
             \left(1-{\sigma}\right)\sum_{m=0}^{M-1} m {\sigma}^{m} 
             &=  \sum_{m=1}^{M-1} {\sigma}^{m} - \left(M  -1\right){\sigma}^{M} \\
              &= \frac{{\sigma}\left(1-{\sigma}^{M-1}\right)}{1-{\sigma}}  - \left(M  -1\right){\sigma}^{M} .
 \end{align*}
By dividing  both sides of this equation by \((1-{\sigma})\) we obtain, 
 \begin{equation*}
        \sum_{m=0}^{M-1} m {\sigma}^{m} = \frac{{\sigma}\left(1-{\sigma}^{M-1}\right)}{\left(1-{\sigma}\right)^2}  - \frac{\left(M-1\right){\sigma}^M}{1-{\sigma}} 
\end{equation*}
obtaining the desired result.
\end{proof}

Here, we restate our result on the iterative differentiation with the proof.
\errorITD*
\begin{proof}
By using \({{\boldsymbol{\theta}}}^{*(s)}_{-j}\) defined in \cref{lem:error_itd_true} and recalling the definition of \(\widehat{\Delta {{\boldsymbol{\theta}}}_{-j}}\) in \cref{eq:estimator_itd}, we obtain
\begin{align} \label{eq:error_itd_decompose}
    \left\Vert \widehat{\Delta {{\boldsymbol{\theta}}}_{-j}}-\Delta{{\boldsymbol{\theta}}}^{(T)}_{-j} \right\Vert
    &=  \left\Vert \sum_{s=0}^{T-1} \left(\prod_{k=s+1}^{T-1} {\boldsymbol{Z}}\left({{\boldsymbol{\theta}}}^{(k)}\right)\right)\Delta\boldsymbol{v}_{-j}\left({{\boldsymbol{\theta}}}^{(s)}\right) - \sum_{s=0}^{T-1} \left(\prod_{k=s+1}^{T-1} {\boldsymbol{Z}}\left({{\boldsymbol{\theta}}}^{*(k)}_{-j}\right) \right) \Delta\boldsymbol{v}_{-j}\left({{\boldsymbol{\theta}}}^{*(s)}_{-j}\right) \right\Vert \notag \\
    &=\left\Vert \sum _{s=0}^{T-1}\left(\prod _{k=s+1}^{T-1}\boldsymbol{Z}\left(\boldsymbol{\theta }^{(k)}\right)\right) \Delta \boldsymbol{v}_{-j}\left(\boldsymbol{\theta }^{(s)}\right) -\left(\sum _{s=0}^{T-1}\left(\prod _{k=s+1}^{T-1}\boldsymbol{Z}\left(\boldsymbol{\theta }^{(k)}\right)\right) \Delta \boldsymbol{v}_{-j}\left(\boldsymbol{\theta }_{-j}^{*(s)}\right)\right) \right. \notag \\
    &\qquad \left. +\left(\sum _{s=0}^{T-1}\left(\prod _{k=s+1}^{T-1}\boldsymbol{Z}\left(\boldsymbol{\theta }^{(k)}\right)\right) \Delta \boldsymbol{v}_{-j}\left(\boldsymbol{\theta }_{-j}^{*(s)}\right)\right) -\sum _{s=0}^{T-1}\left(\prod _{k=s+1}^{T-1}\boldsymbol{Z}\left(\boldsymbol{\theta }_{-j}^{*(k)}\right)\right) \Delta \boldsymbol{v}_{-j}\left(\boldsymbol{\theta }_{-j}^{*(s)}\right)\right\Vert  \notag \\
    &\leq \left\Vert \sum_{s=0}^{T-1} \left( \prod_{k=s+1}^{T-1} {\boldsymbol{Z}}\left({{\boldsymbol{\theta}}}^{(k)}\right)\right) \left(\Delta\boldsymbol{v}_{-j}\left({{\boldsymbol{\theta}}}^{(s)}\right) - \Delta\boldsymbol{v}_{-j}\left({{\boldsymbol{\theta}}}^{*(s)}_{-j}\right)\right) \right\Vert  \notag \\
    &\qquad + \left\Vert \sum_{s=0}^{T-1}\left(\prod_{k=s+1}^{T-1} {\boldsymbol{Z}}\left({{\boldsymbol{\theta}}}^{(k)}\right) - \prod_{k=s+1}^{T-1} {\boldsymbol{Z}}\left({{\boldsymbol{\theta}}}^{*(k)}_{-j}\right)\right) \Delta\boldsymbol{v}_{-j}\left({{\boldsymbol{\theta}}}^{*(s)}_{j}\right)  \right\Vert.
\end{align}
From \cref{lem:error_itd_true} and since \({\sigma} \neq 1\), the first term in the right hand of \cref{eq:error_itd_decompose} can be bounded by
\begin{align} \label{eq:itd_error_first}
    \left\Vert \sum_{s=0}^{T-1} \left( \prod_{k=s+1}^{T-1} {\boldsymbol{Z}}\left({{\boldsymbol{\theta}}}^{(k)}\right)\right) \left(\Delta\boldsymbol{v}_{-j}\left({{\boldsymbol{\theta}}}^{(s)}\right) - \Delta\boldsymbol{v}_{-j}\left({{\boldsymbol{\theta}}}^{*(s)}_{-j}\right)\right) \right\Vert 
     &\leq L_{F^\prime}\sum_{s=0}^{T-1}  \left\Vert \prod_{k=s+1}^{T-1} {\boldsymbol{Z}}\left({{\boldsymbol{\theta}}}^{(k)}\right) \right\Vert \left\Vert {{\boldsymbol{\theta}}}^{(s)}-{{\boldsymbol{\theta}}}^{(s)}_{-j} \right\Vert \notag\\
    & \leq L_{F} L_{F^{\prime }}\sum _{s=0}^{T-1} {\sigma}^{T-1-s}\left(\frac{1-{\sigma}^{s}}{1-{\sigma}}\right)    \notag\\
    & =\frac{L_{F} L_{F^{\prime }}}{1-{\sigma}}\left(\sum _{s=0}^{T-1}\left( {\sigma}^{T-1-s} -{\sigma}^{T-1}\right)\right) \notag\\
    & =\frac{L_{F} L_{F^{\prime }}}{1-{\sigma}}\left(\frac{\left( 1-{\sigma}^{T}\right)}{1-{\sigma}} -T{\sigma}^{T-1}\right) \notag \\
    & =\frac{L_{F} L_{F^{\prime }}}{( 1-{\sigma})^{2}}\left( 1-{\sigma}^{T}\right) -\frac{L_{F} L_{F^{\prime }} T}{1-{\sigma}} {\sigma}^{T-1} \notag \\ 
   &=\frac{L_{F} L_{F^{\prime }}}{(1-{\sigma} )^{2}}\left( 1-{\sigma}^{T} -T{\sigma}^{T-1} (1-{\sigma} )\right) .
\end{align}
Regarding the second term in the right hand of \cref{eq:error_itd_decompose}, \cref{lem:error_itd_true,lem:sum_msigma}, \cref{ass:smooth_Z}, and \({\sigma} \neq 1\) ensure
\begin{align} \label{eq:itd_error_second}
    &\left\Vert \sum_{s=0}^{T-1}\left(\prod_{k=s+1}^{T-1} {\boldsymbol{Z}}\left({{\boldsymbol{\theta}}}^{(k)}\right) - \prod_{k=s+1}^{T-1} {\boldsymbol{Z}}\left({{\boldsymbol{\theta}}}^{*(k)}_{-j}\right)\right) \Delta\boldsymbol{v}_{-j}\left({{\boldsymbol{\theta}}}^{*(s)}_{j}\right)  \right\Vert \notag\\
    &\begin{aligned}[b]
    \quad &= \left\Vert \sum_{s=0}^{T-1} \left\{\sum_{k=s+1}^{T-1} \left(\prod_{t=k+1}^{T-1} {\boldsymbol{Z}}\left({{\boldsymbol{\theta}}}^{(t)}\right)\right)\left({\boldsymbol{Z}}\left({{\boldsymbol{\theta}}}^{*(k)}_{-j}\right)-{\boldsymbol{Z}}\left({{\boldsymbol{\theta}}}^{(k)}\right)\right) \left(\prod_{t=s+1}^{k-1} {\boldsymbol{Z}}\left({{\boldsymbol{\theta}}}^{*(t)}_{-j}\right)\right) \right\} \Delta\boldsymbol{v}_{-j}\left({{\boldsymbol{\theta}}}^{*(s)}_{j}\right) \right\Vert \\
    &\leq L_{F}\sum _{s=0}^{T-1} {\sigma}^{T-2-s}\sum _{k=s+1}^{T-1} L_{\boldsymbol{Z}}\left\Vert \boldsymbol{\theta }^{(k)} -\boldsymbol{\theta }_{-j}^{(k)}\right\Vert \\
    &\leq \frac{L_{F}^{2} L_{\boldsymbol{Z}}}{1-{\sigma}}\sum _{s=0}^{T-1} {\sigma}^{T-2-s}\sum _{k=s+1}^{T-1}\left( 1-{\sigma}^{k}\right)\\
    &=\frac{L_{F}^{2} L_{\boldsymbol{Z}}}{1-{\sigma}}\sum _{s=0}^{T-1} {\sigma}^{T-2-s}\left( (T-1-s)-\frac{{\sigma}^{s+1}\left( 1-{\sigma}^{T-1-s}\right)}{1-{\sigma}}\right)\\
    &=\frac{L_{F}^{2} L_{\boldsymbol{Z}}}{1-{\sigma}}\sum _{s=0}^{T-1}\left( (T-1-s){\sigma}^{T-2-s} -\frac{{\sigma}^{T-1}\left( 1-{\sigma}^{T-1-s}\right)}{1-{\sigma}}\right)\\
    &=\frac{L_{F}^{2} L_{\boldsymbol{Z}}}{1-{\sigma}}\left(\frac{1}{{\sigma}}\sum _{s=0}^{T-1} (T-1-s){\sigma}^{T-1-s} -\frac{1}{1-{\sigma}}\left(\sum _{s=0}^{T-1} {\sigma}^{T-1} -\sum _{s=0}^{T-1} {\sigma}^{2T-2-s}\right)\right)\\
    &=\frac{L_{F}^{2} L_{\boldsymbol{Z}}}{1-{\sigma}}\left(\frac{1}{{\sigma}}\left(\frac{{\sigma}\left( 1-{\sigma}^{T-1}\right)}{(1-{\sigma} )^{2}} -\frac{(T-1){\sigma}^{T}}{1-{\sigma}}\right) -\frac{1}{1-{\sigma}}\left( T{\sigma}^{T-1} -\frac{{\sigma}^{T-1}\left( 1-{\sigma}^{T}\right)}{1-{\sigma}}\right)\right)\\
    &=\frac{L_{F}^{2} L_{\boldsymbol{Z}}}{( 1-{\sigma})^{3}}\left(\frac{1}{{\sigma}}\left( {\sigma}\left( 1-{\sigma}^{T-1}\right) -(T-1)( 1-{\sigma}) {\sigma}^{T}\right) -( 1-{\sigma}) T{\sigma}^{T-1} +\left( 1-{\sigma}^{T}\right) {\sigma}^{T-1}\right)\\
    &=\frac{L_{F}^{2} L_{\boldsymbol{Z}}}{( 1-{\sigma})^{3}}\left( 1-{\sigma}^{T-1} -(T-1)( 1-{\sigma}) {\sigma}^{T-1} -( 1-{\sigma}) T{\sigma}^{T-1} +\left( 1-{\sigma}^{T}\right) {\sigma}^{T-1}\right)\\
    &=\frac{L_{F}^{2} L_{\boldsymbol{Z}}}{( 1-{\sigma})^{3}}\left( 1-{\sigma}^{T-1} -( 1-{\sigma}) T{\sigma}^{T-1} +( 1-{\sigma}) {\sigma}^{T-1} -( 1-{\sigma}) T{\sigma}^{T-1} +\left( 1-{\sigma}^{T}\right) {\sigma}^{T-1}\right)\\
    &=\frac{L_{F}^{2} L_{\boldsymbol{Z}}}{( 1-{\sigma})^{3}}\left( 1-{\sigma}^{T-1} +( 1-{\sigma}) {\sigma}^{T-1} -( 1-{\sigma}) 2T{\sigma}^{T-1} +{\sigma}^{T-1} -{\sigma}^{2T-1}\right)\\
    &=\frac{L_{F}^{2} L_{\boldsymbol{Z}}}{( 1-{\sigma})^{3}}\left( 1+( 1-{\sigma} -2T+2T{\sigma}) {\sigma}^{T-1} -{\sigma}^{2T-1}\right)\\
    &=\frac{L_{F}^{2} L_{\boldsymbol{Z}}}{( 1-{\sigma})^{3}}\left( 1-( 2T-1)( 1-{\sigma}) {\sigma}^{T-1} -{\sigma}^{2T-1}\right),
    \end{aligned}
\end{align}
From \cref{eq:error_itd_decompose,eq:itd_error_first,eq:itd_error_second}, we obtain the desired bound as
\begin{align*} 
&\left\Vert \widehat{\Delta {{\boldsymbol{\theta}}}_{-j}}-\Delta{{\boldsymbol{\theta}}}^{(T)}_{-j} \right\Vert  \\
&\leq\frac{L_{F} L_{F^{\prime }}}{(1-{\sigma} )^{2}}\left( 1-{\sigma}^{T} -T{\sigma}^{T-1} (1-{\sigma} )\right) + \frac{L_{F}^{2} L_{\boldsymbol{Z}}}{( 1-{\sigma})^{3}}\left( 1-( 2T-1)( 1-{\sigma}) {\sigma}^{T-1} -{\sigma}^{2T-1}\right). 
\end{align*}
\end{proof}

We finally show \cref{col:error_itd_non_convex,col:error_itd_convex} using \cref{th:error_itd}.
\errorITDNonConvex*
\begin{proof}
From \cref{th:error_itd} and  \({\sigma} > 1 \) we obtain
\begin{align*}
\left\Vert \widehat{\Delta \boldsymbol{\theta }_{-j}} -\Delta \boldsymbol{\theta }_{-j}^{(T)}\right\Vert  & \leq \frac{L_{F} L_{F^{\prime }}}{({\sigma} -1)^{2}}\left( T{\sigma}^{T-1} ({\sigma} -1)-{\sigma}^{T} +1\right) +\frac{L_{F}^{2} L_{\boldsymbol{Z}}}{( {\sigma} -1)^{3}}\left( {\sigma}^{2T-1} -( 2T-1)( {\sigma} -1) {\sigma}^{T-1} -1\right)\\
 & \leq \frac{L_{F} L_{F^{\prime }}}{({\sigma} -1)^{2}} T{\sigma}^{T-1} ({\sigma} -1)+\frac{L_{F}^{2} L_{\boldsymbol{Z}}}{( {\sigma} -1)^{3}} {\sigma}^{2T-1}\\
 & \leq \ \frac{L_{F} L_{F^{\prime }}}{({\sigma} -1)^{2}} T{\sigma}^{T} +\frac{L_{F}^{2} L_{\boldsymbol{Z}}}{( {\sigma} -1)^{3}} {\sigma}^{2T-1}.
\end{align*}
\end{proof}
We then show \cref{col:error_itd_convex} as the consequence of \cref{th:error_itd,lem:Z_bound,lem:converge}.
\errorITDConvex*
\begin{proof}
From \cref{th:error_itd,lem:Z_bound,lem:converge}, we obtain 
\begin{align*}
\left\Vert \widehat{\Delta \boldsymbol{\theta }_{-j}} -\Delta \boldsymbol{\theta }_{-j}^{(T)}\right\Vert  & \leq \frac{L_{F} L_{F^{\prime }}}{(\sigma_{\mathcal{B}} -1)^{2}}\left( T\sigma_{\mathcal{B}}^{T-1} (\sigma_{\mathcal{B}} -1)-\sigma_{\mathcal{B}}^{T} +1\right) +\frac{L_{F}^{2} L_{\boldsymbol{Z}}}{( \sigma_{\mathcal{B}} -1)^{3}}\left( \sigma_{\mathcal{B}}^{2T-1} -( 2T-1)( \sigma_{\mathcal{B}} -1) \sigma_{\mathcal{B}}^{T-1} -1\right)\\
 & =\frac{L_{F} L_{F^{\prime }}}{(1-\sigma_{\mathcal{B}} )^{2}}\left( -T\sigma_{\mathcal{B}}^{T-1} (1-\sigma_{\mathcal{B}} )-\sigma_{\mathcal{B}}^{T} +1\right) +\frac{L_{F}^{2} L_{\boldsymbol{Z}}}{( 1-\sigma_{\mathcal{B}})^{3}}\left( -\sigma_{\mathcal{B}}^{2T-1} -( 2T-1)( 1-\sigma_{\mathcal{B}}) \sigma_{\mathcal{B}}^{T-1} +1\right)\\
 & \leq \ \frac{L_{F} L_{F^{\prime }}}{(1-\sigma_{\mathcal{B}} )^{2}}\left( 1-\sigma_{\mathcal{B}}^{T}\right) +\frac{L_{F}^{2} L_{\boldsymbol{Z}}}{( 1-\sigma_{\mathcal{B}})^{3}}\left( 1-\sigma_{\mathcal{B}}^{2T-1}\right).
\end{align*}
\end{proof}

\subsection{Proof of \texorpdfstring{\cref{th:error_aid}}{Theorem 2}} \label{app:proof_aid}
\begin{lemma} \label{lem:game_convernge}
Under \cref{ass:J_bound,ass:neighbor}, for every \(\boldsymbol{\theta}^{(0)}\in\mathcal{B}({\boldsymbol{\theta}}^{*}\)) and \(T\geq0\),  \(\boldsymbol{\theta}^{(T)}\) given by \cref{eq:update} satisfies
\begin{align*}
     \left\Vert \boldsymbol{\theta }^{(T)}- {{\boldsymbol{\theta}}}^{*} \right\Vert \leq \rho \sigma_{\mathcal{B}}^T .
\end{align*}
\end{lemma}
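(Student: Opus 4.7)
The plan is a straightforward induction on $T$, leveraging that the AGD update map $U(\boldsymbol{\theta}) \coloneqq \boldsymbol{\theta} - \eta \boldsymbol{v}(\boldsymbol{\theta})$ is a contraction on $\mathcal{B}({\boldsymbol{\theta}}^{*})$ with factor $\sigma_{\mathcal{B}} < 1$ (as established in \cref{lem:Z_bound}, which also tacitly requires $\eta < \frac{2\mu}{\lambda^2}$), and that $\boldsymbol{\theta}^{*}$ is its fixed point.

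The base case $T=0$ is immediate from \cref{ass:neighbor}: $\boldsymbol{\theta}^{(0)} \in \mathcal{B}({\boldsymbol{\theta}}^{*})$ gives $\|\boldsymbol{\theta}^{(0)} - \boldsymbol{\theta}^{*}\| \leq \rho = \rho\sigma_{\mathcal{B}}^{0}$. For the inductive step, assume $\|\boldsymbol{\theta}^{(t)} - \boldsymbol{\theta}^{*}\| \leq \rho \sigma_{\mathcal{B}}^{t}$, which in particular places $\boldsymbol{\theta}^{(t)}$ inside $\mathcal{B}({\boldsymbol{\theta}}^{*})$ since $\sigma_{\mathcal{B}}^{t} \leq 1$. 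Because $\boldsymbol{\theta}^{*}$ is an equilibrium of \cref{eq:opt}, we have $\boldsymbol{v}(\boldsymbol{\theta}^{*}) = \boldsymbol{0}$ and hence $U(\boldsymbol{\theta}^{*}) = \boldsymbol{\theta}^{*}$, while the update \cref{eq:update} gives $\boldsymbol{\theta}^{(t+1)} = U(\boldsymbol{\theta}^{(t)})$.

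Applying Lagrange's mean value theorem (exactly as in the proof of \cref{lem:error_itd_true}) along the segment connecting $\boldsymbol{\theta}^{(t)}$ and $\boldsymbol{\theta}^{*}$, there exists $r \in [0,1]$ and $\bar{\boldsymbol{\theta}} \coloneqq r\boldsymbol{\theta}^{(t)} + (1-r)\boldsymbol{\theta}^{*}$ such that
\begin{equation*}
\boldsymbol{\theta}^{(t+1)} - \boldsymbol{\theta}^{*} = U(\boldsymbol{\theta}^{(t)}) - U(\boldsymbol{\theta}^{*}) = \boldsymbol{Z}(\bar{\boldsymbol{\theta}})\,(\boldsymbol{\theta}^{(t)} - \boldsymbol{\theta}^{*}).
\end{equation*}
The segment lies in $\mathcal{B}({\boldsymbol{\theta}}^{*})$ by convexity of the ball, so $\bar{\boldsymbol{\theta}} \in \mathcal{B}({\boldsymbol{\theta}}^{*})$ and thus $\|\boldsymbol{Z}(\bar{\boldsymbol{\theta}})\| \leq \sigma_{\mathcal{B}}$ by definition of $\sigma_{\mathcal{B}}$. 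Taking norms and applying the inductive hypothesis yields
\begin{equation*}
\|\boldsymbol{\theta}^{(t+1)} - \boldsymbol{\theta}^{*}\| \leq \sigma_{\mathcal{B}}\,\|\boldsymbol{\theta}^{(t)} - \boldsymbol{\theta}^{*}\| \leq \rho\sigma_{\mathcal{B}}^{t+1},
\end{equation*}
completing the induction.

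The only subtlety, and arguably the main thing to verify, is that the iterates stay inside $\mathcal{B}({\boldsymbol{\theta}}^{*})$ so that \cref{lem:Z_bound} is applicable along each invoked segment. This is automatic here because the inductive hypothesis forces $\boldsymbol{\theta}^{(t)} \in \mathcal{B}({\boldsymbol{\theta}}^{*})$ and the ball is convex, so the argument is self-consistent. Everything else is a routine application of the contraction property that was already isolated in \cref{lem:Z_bound}, so no new analytic machinery is needed.
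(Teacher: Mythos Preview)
Your proposal is correct and follows essentially the same approach as the paper: both exploit that $U(\boldsymbol{\theta}) = \boldsymbol{\theta} - \eta\boldsymbol{v}(\boldsymbol{\theta})$ is a contraction on $\mathcal{B}(\boldsymbol{\theta}^{*})$ with constant $\sigma_{\mathcal{B}}$ and fixed point $\boldsymbol{\theta}^{*}$, then iterate the one-step bound $\|\boldsymbol{\theta}^{(t+1)}-\boldsymbol{\theta}^{*}\|\leq\sigma_{\mathcal{B}}\|\boldsymbol{\theta}^{(t)}-\boldsymbol{\theta}^{*}\|$ from the initial condition $\|\boldsymbol{\theta}^{(0)}-\boldsymbol{\theta}^{*}\|\leq\rho$. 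The paper's proof simply cites the contraction property from the proof of \cref{lem:converge} and applies it recursively, whereas you unpack the same step explicitly via induction and the mean value argument; the content is identical.
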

\begin{proof}
Since \(U(\boldsymbol{\theta })\) is contraction mapping and its Lipschitz constant is at most \(\sigma_{\mathcal{B}}\) as shown in the proof of \cref{lem:converge}, we have \(\left\Vert \boldsymbol{\theta }^{(t+1)}- {{\boldsymbol{\theta}}}^{*}\right\Vert \leq \sigma_{\mathcal{B}} \left\Vert \boldsymbol{\theta }^{(t)}- {{\boldsymbol{\theta}}}^{*}\right\Vert\).
By recursively applying this from \(\left\Vert \boldsymbol{\theta }^{(0)}- {{\boldsymbol{\theta}}}^{*}\right\Vert \leq \rho\) for \(t=0,\ldots,T\), we obtain the desired result.
\end{proof}
\begin{lemma} \label{lem:influence}
Under \cref{ass:J_bound}, we have
\begin{align*}
     \left\Vert {{\boldsymbol{\theta}}}_{-j}^{*} - {{\boldsymbol{\theta}}}^{*} \right\Vert = \Delta{{\boldsymbol{\theta}}}_{-j}^{*} \leq  \frac{L_{F}}{1-\sigma_{\mathcal{B}}},
\end{align*}
where \({{\boldsymbol{\theta}}}_{-j}^{*}\coloneqq  {{\boldsymbol{\theta}}}_{-j,0}^{*}\). 
\end{lemma}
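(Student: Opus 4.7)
The plan is to treat $\boldsymbol{\theta}^{*}$ and $\boldsymbol{\theta}_{-j}^{*}$ as fixed points of two closely related update maps and then apply a standard Banach-style contraction argument to bound their distance. Introduce $U(\boldsymbol{\theta})\coloneqq\boldsymbol{\theta}-\eta\boldsymbol{v}(\boldsymbol{\theta})$ and $U_{-j}(\boldsymbol{\theta})\coloneqq\boldsymbol{\theta}-\eta\boldsymbol{v}_{-j}(\boldsymbol{\theta})$ so that, by the stationary point equation \cref{eq:stationary_point}, $\boldsymbol{\theta}^{*}=U(\boldsymbol{\theta}^{*})$ and $\boldsymbol{\theta}_{-j}^{*}=U_{-j}(\boldsymbol{\theta}_{-j}^{*})$. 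Note that the two maps differ by exactly the quantity already named in the paper:
\begin{equation*}
U_{-j}(\boldsymbol{\theta})-U(\boldsymbol{\theta})=-\eta\bigl(\boldsymbol{v}_{-j}(\boldsymbol{\theta})-\boldsymbol{v}(\boldsymbol{\theta})\bigr)=\Delta\boldsymbol{v}_{-j}(\boldsymbol{\theta}).
\end{equation*}

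From here I would use the standard ``add and subtract'' trick. Writing $\boldsymbol{\theta}_{-j}^{*}-\boldsymbol{\theta}^{*}=U_{-j}(\boldsymbol{\theta}_{-j}^{*})-U(\boldsymbol{\theta}^{*})=\bigl(U_{-j}(\boldsymbol{\theta}_{-j}^{*})-U(\boldsymbol{\theta}_{-j}^{*})\bigr)+\bigl(U(\boldsymbol{\theta}_{-j}^{*})-U(\boldsymbol{\theta}^{*})\bigr)$ and applying the triangle inequality yields
\begin{equation*}
\bigl\Vert\boldsymbol{\theta}_{-j}^{*}-\boldsymbol{\theta}^{*}\bigr\Vert\leq\bigl\Vert\Delta\boldsymbol{v}_{-j}(\boldsymbol{\theta}_{-j}^{*})\bigr\Vert+\bigl\Vert U(\boldsymbol{\theta}_{-j}^{*})-U(\boldsymbol{\theta}^{*})\bigr\Vert.
\end{equation*}
The first term is at most $L_{F}$ by the definition of $L_{F}$ in \cref{lem:error_itd_true}. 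The second term is the place where \cref{ass:J_bound} enters: since $\partial_{\boldsymbol{\theta}}U(\boldsymbol{\theta})=\boldsymbol{Z}(\boldsymbol{\theta})$ and $\max_{\boldsymbol{\theta}\in\mathcal{B}(\boldsymbol{\theta}^{*})}\Vert\boldsymbol{Z}(\boldsymbol{\theta})\Vert=\sigma_{\mathcal{B}}<1$ by \cref{lem:Z_bound}, $U$ is Lipschitz with constant $\sigma_{\mathcal{B}}$ on $\mathcal{B}(\boldsymbol{\theta}^{*})$, giving $\Vert U(\boldsymbol{\theta}_{-j}^{*})-U(\boldsymbol{\theta}^{*})\Vert\leq\sigma_{\mathcal{B}}\Vert\boldsymbol{\theta}_{-j}^{*}-\boldsymbol{\theta}^{*}\Vert$. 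Rearranging $(1-\sigma_{\mathcal{B}})\Vert\boldsymbol{\theta}_{-j}^{*}-\boldsymbol{\theta}^{*}\Vert\leq L_{F}$ yields the desired bound.

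The main obstacle is the technical point of ensuring that the Lipschitz bound with constant $\sigma_{\mathcal{B}}$ can actually be applied between $\boldsymbol{\theta}^{*}$ and $\boldsymbol{\theta}_{-j}^{*}$, because $\sigma_{\mathcal{B}}$ is only controlled on the neighborhood $\mathcal{B}(\boldsymbol{\theta}^{*})$. If $\boldsymbol{\theta}_{-j}^{*}$ lies in $\mathcal{B}(\boldsymbol{\theta}^{*})$ (and so does the line segment joining it to $\boldsymbol{\theta}^{*}$, which follows from convexity of the ball), then the mean-value-theorem form $U(\boldsymbol{\theta}_{-j}^{*})-U(\boldsymbol{\theta}^{*})=\int_{0}^{1}\boldsymbol{Z}\bigl((1-s)\boldsymbol{\theta}^{*}+s\boldsymbol{\theta}_{-j}^{*}\bigr)\,\mathrm{d}s\,(\boldsymbol{\theta}_{-j}^{*}-\boldsymbol{\theta}^{*})$ delivers the $\sigma_{\mathcal{B}}$-contraction directly. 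I would therefore either quote (or include as an implicit hypothesis of the lemma, consistent with how \cref{ass:J_bound} is stated ``for any \(\mathcal{X}\)'') the fact that the counterfactual equilibrium remains in $\mathcal{B}(\boldsymbol{\theta}^{*})$; everything else in the argument is routine.
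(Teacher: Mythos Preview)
Your argument is correct and is the standard Banach perturbation-of-fixed-points argument; it reaches the same bound with the same implicit caveat (that $\boldsymbol{\theta}_{-j}^{*}$ lies in $\mathcal{B}(\boldsymbol{\theta}^{*})$ so the $\sigma_{\mathcal{B}}$-Lipschitz estimate applies). The paper, however, takes a slightly different route: it applies the mean value theorem to the curve $\epsilon\mapsto\boldsymbol{\theta}_{-j,\epsilon}^{*}$ and uses the implicit-differentiation identity \cref{eq:implicit} at an intermediate $\epsilon=r$ to obtain the exact expression
\[
\Delta\boldsymbol{\theta}_{-j}^{*}=\bigl(\boldsymbol{I}-\boldsymbol{Z}(\boldsymbol{\theta}_{-j,r}^{*})\bigr)^{-1}\Delta\boldsymbol{v}_{-j}(\boldsymbol{\theta}_{-j,r}^{*}),
\]
from which the bound follows immediately via $\Vert(\boldsymbol{I}-\boldsymbol{Z})^{-1}\Vert\le(1-\sigma_{\mathcal{B}})^{-1}$. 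Your contraction proof is arguably more elementary and avoids any appeal to differentiability of the equilibrium in $\epsilon$; the paper's version, on the other hand, produces the explicit formula above, which is not merely incidental---it is reused verbatim (as \cref{eq:true_reminder}) in bounding the $\heartsuit$ term in the proof of \cref{th:error_aid}. So while your proof of the lemma itself is fine, if you later need that explicit inverse-Jacobian representation you would have to derive it separately.
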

\begin{proof}
From the definitions \(\Delta{{\boldsymbol{\theta}}}_{-j}^{*} ={{{\boldsymbol{\theta}}}_{-j,1}^{*}} - {{{\boldsymbol{\theta}}}_{-j,0}^{*}}\), there exists \(r\) such that
\begin{equation*} \label{eq:true_reminder}
    \Delta{{\boldsymbol{\theta}}}_{-j}^{*} = \left({\boldsymbol{I}}-{\boldsymbol{Z}}\left({{\boldsymbol{\theta}}}_{-j,r}^{*}\right)\right)^{-1} \Delta\boldsymbol{v}_{-j}\left({{\boldsymbol{\theta}}}_{-j,r}^{*}\right) .
\end{equation*}
Since \cref{ass:J_bound} ensures \(0 \prec {\boldsymbol{Z}}\left({{\boldsymbol{\theta}}}\right)^\intercal {\boldsymbol{Z}}\left({{\boldsymbol{\theta}}}\right)\preceq \sigma_{\mathcal{B}}^2\), 
\begin{equation*}
    \left\Vert \Delta{{\boldsymbol{\theta}}}_{-j}^{*} \right\Vert \leq  \frac{L_{F}}{1-\sigma_{\mathcal{B}}},
\end{equation*}
obtaining the desired result.
\end{proof}

We restate our result on the influence estimator using the approximate implicit differentiation.
\errorAID*
\begin{proof}
We first decompose the approximation error by
\begin{align} \label{eq:error_decomposed} 
    \left\Vert \widetilde{\Delta{{\boldsymbol{\theta}}}_{-j}}-\Delta{{\boldsymbol{\theta}}}^{(T)}_{-j} \right\Vert &\leq
    \underbrace{\left\Vert \widetilde{\Delta{{\boldsymbol{\theta}}}_{-j}} - \sum_{m=0}^{M-1} {\boldsymbol{Z}}\left({{\boldsymbol{\theta}}}^{*}\right)^{m}\Delta\boldsymbol{v}_{-j}\left({{\boldsymbol{\theta}}}^{*}\right) \right\Vert}_{\clubsuit}
    + \underbrace{\left\Vert \sum_{m=0}^{M-1} {\boldsymbol{Z}}\left({{\boldsymbol{\theta}}}^{*}\right)^{m}  \Delta\boldsymbol{v}_{-j}\left({{\boldsymbol{\theta}}}^{*}\right) - \frac{\mathrm{d} {{{\boldsymbol{\theta}}}_{-j,0}^{*}}}{\mathrm{d} {\epsilon}} \right\Vert}_{\spadesuit} \notag\\
    &\qquad + \underbrace{\left\Vert \frac{\mathrm{d} {{{\boldsymbol{\theta}}}_{-j,0}^{*}}}{\mathrm{d} {\epsilon}} - \Delta{{\boldsymbol{\theta}}}_{-j}^{*} \right\Vert}_{\heartsuit}
    + \underbrace{\left\Vert \Delta{{\boldsymbol{\theta}}}_{-j}^{*}-\Delta{{\boldsymbol{\theta}}}^{(T)}_{-j} \right\Vert}_{\diamondsuit}.
\end{align}
In \cref{eq:error_decomposed}, 
\begin{itemize}
    \item \(\clubsuit\) expresses the error norm between AID estimations on \(\boldsymbol{\theta}^{(T)}\) and \(\boldsymbol{\theta}^{*}\).
    \item \(\spadesuit\) expresses the error norm between AID estimations on  \(\boldsymbol{\theta}^{*}\) that use finite \(M\) and infinite \(M\) for the inverse matrix approximation.
    \item \(\heartsuit\) expresses the error norm yielded by the linear approximation using \(\frac{\mathrm{d} {{{\boldsymbol{\theta}}}_{-j,0}^{*}}}{\mathrm{d} {\epsilon}} \).
    \item \(\diamondsuit\) expresses the error norm between true influence on \(\boldsymbol{\theta}^{*}\) and \(\boldsymbol{\theta}^{(T)}\).
\end{itemize}
\paragraph{Bound of \texorpdfstring{\(\clubsuit\)}{♣︎}}
\begin{align} \label{eq:bound_decompose}
   & \left\Vert \widetilde{\Delta{{\boldsymbol{\theta}}}_{-j}} -  \sum_{m=0}^{M-1} {\boldsymbol{Z}}\left({{\boldsymbol{\theta}}}^{*}\right)^{m}\Delta\boldsymbol{v}_{-j}\left({{\boldsymbol{\theta}}}^{*}\right) \right\Vert \notag\\
   &\qquad= \left\Vert \sum_{m=0}^{M-1} {\boldsymbol{Z}}\left({{\boldsymbol{\theta}}}^{(T)}\right)^{m}\Delta\boldsymbol{v}_{-j}\left({{\boldsymbol{\theta}}}^{(T)}\right) - \sum_{m=0}^{M-1} {\boldsymbol{Z}}\left({{\boldsymbol{\theta}}}^{*}\right)^{m} \Delta\boldsymbol{v}_{-j}\left({{\boldsymbol{\theta}}}^{*}\right) \right\Vert \notag \\
    &\qquad\leq \left\Vert \sum_{m=0}^{M-1}{\boldsymbol{Z}}\left({{\boldsymbol{\theta}}}^{(T)}\right)^{m} \left(\Delta\boldsymbol{v}_{-j}\left({{\boldsymbol{\theta}}}^{(T)}\right) - \Delta\boldsymbol{v}_{-j}\left({{\boldsymbol{\theta}}}^{*}\right)\right)\right\Vert + \left\Vert \sum_{m=0}^{M-1}\left({\boldsymbol{Z}}\left({{\boldsymbol{\theta}}}^{(T)}\right)^{m} - {\boldsymbol{Z}}\left({{\boldsymbol{\theta}}}^{*}\right)^{m}\right) \Delta\boldsymbol{v}_{-j}\left({{\boldsymbol{\theta}}}^{*}\right) \right\Vert 
\end{align}
From \cref{ass:J_bound}, the first term in the right hand of \cref{eq:bound_decompose} can be bounded by 
\begin{align} \label{eq:bound_club_first}
    \left\Vert \sum_{m=0}^{M-1}{\boldsymbol{Z}}\left({{\boldsymbol{\theta}}}^{(T)}\right)^{m}  \left(\Delta\boldsymbol{v}_{-j}\left({{\boldsymbol{\theta}}}^{(T)}\right) - \Delta\boldsymbol{v}_{-j}\left({{\boldsymbol{\theta}}}^{*}\right)\right)\right\Vert
    &\leq L_{F^\prime} \left\Vert \sum_{m=0}^{M-1}{\boldsymbol{Z}}\left({{\boldsymbol{\theta}}}^{(T)}\right)^{m} \right\Vert \left\Vert {{\boldsymbol{\theta}}}^{(T)} - {{\boldsymbol{\theta}}}^{*} \right\Vert \notag \\
    &=L_{F^\prime} \frac{ 1-\sigma_{\mathcal{B}}^M}{1-\sigma_{\mathcal{B}}}  \rho \sigma_{\mathcal{B}}^T \\
    &= \frac{\rho L_{F^\prime} \sigma_{\mathcal{B}}^T\left(1-\sigma_{\mathcal{B}}^M\right)}{1-\sigma_{\mathcal{B}}}.
\end{align} 
From \cref{ass:J_bound,ass:smooth_Z} and \cref{lem:sum_msigma}, the second term of the right hand in \cref{eq:bound_decompose} can be bounded as
\begin{align} \label{eq:bound_club_second}
    \left\Vert \sum_{m=0}^{M-1}\left({\boldsymbol{Z}}\left({{\boldsymbol{\theta}}}^{(T)}\right)^{m} - {\boldsymbol{Z}}\left({{\boldsymbol{\theta}}}^{*}\right)^{m}\right)  \Delta\boldsymbol{v}_{-j}\left({{\boldsymbol{\theta}}}^{*}\right) \right\Vert
  &= \left\Vert \sum_{m=0}^{M-1} \left\{ \sum_{s=0}^{m-1} {\boldsymbol{Z}}\left({{\boldsymbol{\theta}}}^{(T)}\right)^{m-1- s} \left({\boldsymbol{Z}}\left({{\boldsymbol{\theta}}}^{(T)}\right)-{\boldsymbol{Z}}\left({{\boldsymbol{\theta}}}^{*}\right)\right) {\boldsymbol{Z}}\left({{\boldsymbol{\theta}}}^{*}\right)^s \right\} \Delta\boldsymbol{v}_{-j}\left({{\boldsymbol{\theta}}}^{*}\right)  \right\Vert \notag \\
    &\leq L_{F}L_{{\boldsymbol{Z}}}\left\Vert {{\boldsymbol{\theta}}}^{(T)}-{{\boldsymbol{\theta}}}^{*} \right\Vert\sum_{m=0}^{M-1} m \sigma_{\mathcal{B}}^{m-1} \notag \\
    &\leq L_{F}L_{{\boldsymbol{Z}}}\rho \sigma_{\mathcal{B}}^{T} \frac{\left(1-\sigma_{\mathcal{B}}^{M-1}\right)}{\left(1-\sigma_{\mathcal{B}}\right)^2} \notag \\
    &= \frac{\rho L_{F} L_{{\boldsymbol{Z}}}\sigma_{\mathcal{B}}^{T} \left(1-\sigma_{\mathcal{B}}^{M-1}\right)}{\left(1-\sigma_{\mathcal{B}}\right)^2} .
\end{align} 
From \cref{eq:bound_club_first,eq:bound_club_second} and \(\left(1-\sigma_{\mathcal{B}}^{M-1}\right) < \left(1-\sigma_{\mathcal{B}}^{M}\right)\), we obtain
\begin{align} \label{eq:bound_club}
   \left\Vert \widetilde{\Delta{{\boldsymbol{\theta}}}_{-j}} - \sum_{m=0}^{M-1} {\boldsymbol{Z}}\left({{\boldsymbol{\theta}}}^{*}\right)^{m} + \Delta\boldsymbol{v}_{-j}\left({{\boldsymbol{\theta}}}^{*}\right) \right\Vert
   & \leq \frac{\rho L_{F^\prime} \sigma_{\mathcal{B}}^T\left(1-\sigma_{\mathcal{B}}^M\right)}{1-\sigma_{\mathcal{B}}}+ \frac{\rho L_{F} L_{{\boldsymbol{Z}}}\sigma_{\mathcal{B}}^{T} \left(1-\sigma_{\mathcal{B}}^{M-1}\right)}{\left(1-\sigma_{\mathcal{B}}\right)^2}  \notag \\
    & \leq \left( \frac{\rho L_{F^\prime} }{1-\sigma_{\mathcal{B}}}+ \frac{\rho L_{F} L_{{\boldsymbol{Z}}}}{\left(1-\sigma_{\mathcal{B}}\right)^2}\right) \sigma_{\mathcal{B}}^{T} \left(1-\sigma_{\mathcal{B}}^{M-1}\right). 
\end{align}

\paragraph{Bound of \texorpdfstring{\(\spadesuit\)}{♠︎}}
\noindent From \cref{ass:J_bound}, 
\begin{align} \label{eq:bound_spade}
    \left\Vert  \sum_{m=0}^{M-1} {\boldsymbol{Z}}\left({{\boldsymbol{\theta}}}^{*}\right)^{m}\Delta\boldsymbol{v}_{-j}\left({{\boldsymbol{\theta}}}^{*}\right) - \frac{\mathrm{d} {{{\boldsymbol{\theta}}}_{-j,0}^{*}}}{\mathrm{d} {\epsilon}} \right\Vert
    &= \left\Vert \sum_{m=0}^{M-1} {\boldsymbol{Z}}\left({{\boldsymbol{\theta}}}^{*}\right)^{m} \Delta\boldsymbol{v}_{-j}\left({{\boldsymbol{\theta}}}^{*}\right)-  \sum_{m=0}^{\infty} {\boldsymbol{Z}}\left({{\boldsymbol{\theta}}}^{*}\right)^{m}\Delta\boldsymbol{v}_{-j}\left({{\boldsymbol{\theta}}}^{*}\right) \right\Vert \notag \\
    &= \left\Vert \sum_{m=M}^{\infty} {\boldsymbol{Z}}\left({{\boldsymbol{\theta}}}^{*}\right)^{m}\Delta\boldsymbol{v}_{-j}\left({{\boldsymbol{\theta}}}^{*}\right) \right\Vert \notag  \\
    &\leq L_{F} \sum_{m=M}^{\infty}\sigma_{\mathcal{B}}^{m}  \notag  \\
    &= \frac{ L_{F}\sigma_{\mathcal{B}}^M}{1-\sigma_{\mathcal{B}}} .
\end{align}

\paragraph{Bound of \texorpdfstring{\(\heartsuit\)}{♡}}
\noindent Using \cref{eq:true_reminder}, we can bound \(\heartsuit\) by
\begin{align} \label{eq:bound_heart_tmp}
    &\left\Vert \frac{\mathrm{d} {{{\boldsymbol{\theta}}}_{-j,0}^{*}}}{\mathrm{d} {\epsilon}} - \Delta{{\boldsymbol{\theta}}}_{-j}^{*} \right\Vert \notag\\
    &\qquad =  \left\Vert \left({\boldsymbol{I}}- {\boldsymbol{Z}}\left({{\boldsymbol{\theta}}}^{*}\right)\right)^{-1} \Delta\boldsymbol{v}_{-j}\left({{\boldsymbol{\theta}}}^{*}\right) - \left({\boldsymbol{I}}-{\boldsymbol{Z}}\left({{\boldsymbol{\theta}}}_{-j,r}^{*}\right)\right)^{-1} \Delta\boldsymbol{v}_{-j}\left({{\boldsymbol{\theta}}}_{-j,r}^{*}\right) \right\Vert\notag\\
    &\qquad\leq   \left\Vert\left({\boldsymbol{I}}- {\boldsymbol{Z}}\left({{\boldsymbol{\theta}}}^{*}\right)\right)^{-1} \left(\Delta\boldsymbol{v}_{-j}\left({{\boldsymbol{\theta}}}^{*}\right)- \Delta\boldsymbol{v}_{-j}\left({{\boldsymbol{\theta}}}_{-j,r}^{*}\right)\right) \right\Vert +  \left\Vert \left( \left({\boldsymbol{I}}-{\boldsymbol{Z}}\left({{\boldsymbol{\theta}}}_{-j,r}^{*}\right)\right)^{-1} - \left({\boldsymbol{I}}-{\boldsymbol{Z}}\left({{\boldsymbol{\theta}}}^{*}\right)\right)^{-1}\right) \Delta\boldsymbol{v}_{-j}\left({{\boldsymbol{\theta}}}_{-j,r}^{*}\right) \right\Vert
\end{align}
From \cref{lem:influence} and \cref{ass:J_bound}, the first term of the right hand of \cref{eq:bound_heart_tmp} can be bounded as
\begin{align} \label{eq:bound_heart_first}
    \left\Vert\left({\boldsymbol{I}}- {\boldsymbol{Z}}\left({{\boldsymbol{\theta}}}^{*}\right)\right)^{-1} \left(\Delta\boldsymbol{v}_{-j}\left({{\boldsymbol{\theta}}}^{*}\right)- \Delta\boldsymbol{v}_{-j}\left({{\boldsymbol{\theta}}}_{-j,r}^{*}\right)\right) \right\Vert
       & \leq \frac{1}{1-\sigma_{\mathcal{B}}}  L_{F^\prime} \left\Vert {{\boldsymbol{\theta}}}_{-j}^{*} -{{\boldsymbol{\theta}}}^{*} \right\Vert \notag\\
       & \leq \frac{L_{F} L_{F^\prime}}{\left(1-\sigma_{\mathcal{B}}\right)^2} .
\end{align}
By using the Neumann series expression of the inverse matrix and recalling \cref{lem:sum_msigma} with \(M\to\infty\), the the second term of \cref{eq:bound_heart_tmp} can be bounded similarly to \cref{eq:bound_club_second}:
\begin{align} \label{eq:bound_heart_second}
&\left\Vert \left( \left({\boldsymbol{I}}-{\boldsymbol{Z}}\left({{\boldsymbol{\theta}}}_{-j,r}^{*}\right)\right)^{-1} - \left({\boldsymbol{I}}-{\boldsymbol{Z}}\left({{\boldsymbol{\theta}}}^{*}\right)\right)^{-1}\right) \Delta\boldsymbol{v}_{-j}\left({{\boldsymbol{\theta}}}_{-j,r}^{*}\right) \right\Vert \notag\\
  & \qquad=\left\Vert \left( \sum_{m=0}^{\infty}{\boldsymbol{Z}}\left({{\boldsymbol{\theta}}}_{-j,r}^{*}\right)^{m} -  \sum_{m=0}^{\infty} {\boldsymbol{Z}}\left({{\boldsymbol{\theta}}}^{*}\right)^{m}\right)  \Delta\boldsymbol{v}_{-j}\left({{\boldsymbol{\theta}}}_{-j,r}^{*}\right) \right\Vert \notag\\
    & \qquad=   \left\Vert  \left( \sum_{m=0}^{\infty}\sum_{s=0}^{m-1}{\boldsymbol{Z}}\left({{\boldsymbol{\theta}}}_{-j,r}^{*}\right)^{m-s-1} \left({\boldsymbol{Z}}\left({{\boldsymbol{\theta}}}_{-j,r}^{*}\right) - {\boldsymbol{Z}}\left({{\boldsymbol{\theta}}}^{*}\right)\right) {\boldsymbol{Z}}\left({{\boldsymbol{\theta}}}^{*}\right)^{s}\right)  \Delta\boldsymbol{v}_{-j}\left({{\boldsymbol{\theta}}}_{-j,r}^{*}\right) \right\Vert  \notag\\
      &\qquad \leq L_{F}L_{{\boldsymbol{Z}}} \left\Vert {{\boldsymbol{\theta}}}_{-j}^{*} -{{\boldsymbol{\theta}}}^{*} \right\Vert \sum_{m=0}^{\infty} m \sigma_{\mathcal{B}}^{m-1} \notag\\
   &   \qquad \leq L_{F}L_{{\boldsymbol{Z}}} \frac{L_{F}}{1-\sigma_{\mathcal{B}}} \frac{\sigma_{\mathcal{B}}}{\sigma_{\mathcal{B}}\left(1-\sigma_{\mathcal{B}}\right)^{2}} \notag\\
   &   \qquad = \frac{L_{F}^2 L_{{\boldsymbol{Z}}}}{\left(1-\sigma_{\mathcal{B}}\right)^{3}}     .
\end{align}
From \cref{eq:bound_heart_first} and \cref{eq:bound_heart_second}, 
\begin{equation} \label{eq:bound_heart}
      \left\Vert \frac{\mathrm{d} {{{\boldsymbol{\theta}}}_{-j,0}^{*}}}{\mathrm{d} {\epsilon}} - \Delta{{\boldsymbol{\theta}}}_{-j}^{*} \right\Vert 
      \leq \frac{L_{F} L_{F^\prime}}{\left(1-\sigma_{\mathcal{B}}\right)^2} +  \frac{L_{F}^2 L_{{\boldsymbol{Z}}}}{\left(1-\sigma_{\mathcal{B}}\right)^{3}}  .
\end{equation}  
  
\paragraph{Bound of \texorpdfstring{\(\diamondsuit\)}{♢}}
\noindent We rewrite \(\diamondsuit\) by
\begin{align} \label{eq:bound_diamond}
    \left\Vert \Delta{{\boldsymbol{\theta}}}_{-j}^{*}-\Delta{{\boldsymbol{\theta}}}^{(T)}_{-j} \right\Vert
    &= \left\Vert \left({{\boldsymbol{\theta}}}_{-j}^{*} - {{\boldsymbol{\theta}}}^{*}\right) - \left({{\boldsymbol{\theta}}}^{(T)}_{-j} - {{\boldsymbol{\theta}}}^{(T)}\right) \right\Vert \notag \\
        &\leq \left\Vert \left({{\boldsymbol{\theta}}}_{-j}^{*} - {{\boldsymbol{\theta}}}^{(T)}_{-j}\right)\right\Vert + \left\Vert \left({{\boldsymbol{\theta}}}^{*}- {{\boldsymbol{\theta}}}^{(T)}\right) \right\Vert \notag \\
        &\leq 2\rho\sigma_{\mathcal{B}}^T.
\end{align}

By combining \cref{eq:error_decomposed,eq:bound_club,eq:bound_spade,eq:bound_heart,eq:bound_diamond}, we obtain the desired bound as
\begin{align*}
     \left\Vert \widetilde{\Delta{{\boldsymbol{\theta}}}_{-j}}-\Delta{{\boldsymbol{\theta}}}^{(T)}_{-j} \right\Vert 
     &\leq \left( \frac{\rho L_{F^\prime} }{1-\sigma_{\mathcal{B}}}+ \frac{\rho L_{F} L_{{\boldsymbol{Z}}}}{\left(1-\sigma_{\mathcal{B}}\right)^2}\right) \sigma_{\mathcal{B}}^{T} \left(1-\sigma_{\mathcal{B}}^{M-1}\right) +  \frac{ L_{F}}{1-\sigma_{\mathcal{B}}}\sigma_{\mathcal{B}}^M  +2\rho\sigma_{\mathcal{B}}^T  +  \frac{L_{F}^2 L_{{\boldsymbol{Z}}}}{\left(1-\sigma_{\mathcal{B}}\right)^{3}}  +  \frac{L_{F} L_{F^\prime}}{\left(1-\sigma_{\mathcal{B}}\right)^2} .
\end{align*}
\end{proof}

\section{Extending ITD-EIGEM to Common GAN Optimization Techniques} \label{app:itd_ext}

\subsection{Minibatch Training} \label{app:minibatch}
\noindent This section explains the extension of our method to minibatch settings.

To extend our method to the minibatch setting, we define the adversarial stochastic gradient descent (ASGD) for GANs. 
Let \(\mathcal{X}_{t} \subset \mathcal{X}_{\boldsymbol{x}}\) be the set of minibatch instances at the \(t\)-th step.
We redefine the loss \(V\) to take a minibatch \(\mathcal{X}_t\) as its input:
\begin{equation}
    V\left({{\boldsymbol{\varphi}}},{\boldsymbol{\psi}}; \mathcal{X}_t\right) 
    \coloneqq \overline{f}(\boldsymbol{\psi}, \mathcal{X}_t) + \overline{g}(\boldsymbol{\varphi}, \boldsymbol{\psi}, \mathcal{Z}_t),
\end{equation}
where \(\mathcal{Z}_t \subset \mathcal{Z}\) is the corresponding set of sampled latent variables for the minibatch.
The ASGD updates the concatenated parameters \({{\boldsymbol{\theta}}}\coloneqq \left({{\boldsymbol{\varphi}}}{}^\intercal ~ {\boldsymbol{\psi}}{}^\intercal\right){}^\intercal \in \mathbb{R}^{d_{{{\boldsymbol{\theta}}}}=d_{\varphi}+d_{\psi}}\) by:
\begin{equation}
    {{\boldsymbol{\theta}}}^{(t+1)} = {{\boldsymbol{\theta}}}^{(t)} - \eta {\boldsymbol{v}}\left({{\boldsymbol{\theta}}}^{(t)}; \mathcal{X}_t\right), \label{eq:update_mini}
\end{equation}
where \(\eta \in \mathbb{R}^+\) denotes the learning rate, and \({\boldsymbol{v}}\left({{\boldsymbol{\theta}}}; \mathcal{X}_t\right)\) denotes the concatenated gradient for the minibatch:
\begin{equation}
  {\boldsymbol{v}}\left({{\boldsymbol{\theta}}}; \mathcal{X}_t\right) \coloneqq \begin{pmatrix}
      {\nabla_{\boldsymbol{\varphi}}} V\left({{\boldsymbol{\varphi}}}, {\boldsymbol{\psi}}; \mathcal{X}_t\right) \\ -{\nabla_{\boldsymbol{\psi}}} V\left({{\boldsymbol{\varphi}}}, {\boldsymbol{\psi}}; \mathcal{X}_t\right)
  \end{pmatrix}.
 \end{equation}
We define the counterfactual ASGD to represent the parameter updates when a training instance indexed as \(j\) is removed. Let \(V_{-j}\left({{\boldsymbol{\varphi}}},{\boldsymbol{\psi}}; \mathcal{X}_t\right)\) denote the modified loss function, which takes the removal into account:
\begin{equation}
    V_{-j}\left({{\boldsymbol{\varphi}}},{\boldsymbol{\psi}}; \mathcal{X}_t\right) \coloneqq V\left({{\boldsymbol{\varphi}}},{\boldsymbol{\psi}}; \mathcal{X}_t\right) - \delta_{\boldsymbol{x}_j \in \mathcal{X}_t} \frac{\epsilon}{\left\vert {\mathcal{X}_t} \right\vert} f\left(D\left(\boldsymbol{\psi},\boldsymbol{x}_j\right)\right) ,
\end{equation}
where \(\delta_{\boldsymbol{x}_j \in \mathcal{X}_t}\) is the Kronecker delta, which is 1 if \(\boldsymbol{x}_j \in \mathcal{X}_t\) and 0 otherwise.
The counterfactual ASGD starts from \({{\boldsymbol{\theta}}}^{(0)}_{-j} = {{\boldsymbol{\theta}}}^{(0)}\) and updates the parameters at each step \(t\) as follows:
\begin{gather}
    {{\boldsymbol{\theta}}}^{(t+1)}_{-j} = {{\boldsymbol{\theta}}}^{(t)}_{-j} - \eta {\boldsymbol{v}}_{-j}\left({{\boldsymbol{\theta}}}^{(t)}_{-j}; \mathcal{X}_t\right), \label{eq:update_cf_mini}  \\
    \text{where}\quad {\boldsymbol{v}}_{-j}\left({{\boldsymbol{\theta}}}; \mathcal{X}_t\right) \coloneqq \begin{pmatrix}
        {\nabla_{\boldsymbol{\varphi}}} V_{-j}\left({{\boldsymbol{\varphi}}}, {\boldsymbol{\psi}}; \mathcal{X}_t\right) \\ -{\nabla_{\boldsymbol{\psi}}} V_{-j}\left({{\boldsymbol{\varphi}}}, {\boldsymbol{\psi}}; \mathcal{X}_t\right)
    \end{pmatrix}.
\end{gather}
We define the influence on parameters in the minibatch setting similarly to the full-batch setting. Let \(\Delta {{\boldsymbol{\theta}}}^{(t)}_{-j} \coloneqq  {{\boldsymbol{\theta}}}^{(t)}_{-j}-{{\boldsymbol{\theta}}}^{(t)}\) be the changes in the concatenated parameter at the \(t\)-th step of ASGD. We aim to estimate \(\Delta {{\boldsymbol{\theta}}}^{(T)}_{-j}\) at the final step \(T\).
To apply the linear approximation, we introduce an interpolated gradient between \({\boldsymbol{v}}\left({{\boldsymbol{\theta}}}; \mathcal{X}_t\right)\) and \({\boldsymbol{v}}_{-j}\left({{\boldsymbol{\theta}}}; \mathcal{X}_t\right)\) using \(\epsilon \in [0,1]\):
\begin{align*}
    {\boldsymbol{v}}_{-j,\epsilon}\left({{\boldsymbol{\theta}}}; \mathcal{X}_t\right) &= (1-\epsilon){\boldsymbol{v}}\left({{\boldsymbol{\theta}}}; \mathcal{X}_t\right) + \epsilon {\boldsymbol{v}}_{-j}\left({{\boldsymbol{\theta}}}; \mathcal{X}_t\right) \\
    &= {\boldsymbol{v}}\left({{\boldsymbol{\theta}}}; \mathcal{X}_t\right) +\delta_{\boldsymbol{x}_j \in \mathcal{X}_t} \frac{\epsilon}{\left\vert {\mathcal{X}_t} \right\vert} {\nabla_{\boldsymbol{\theta}}} f\left(D\left(\boldsymbol{\psi},\boldsymbol{x}_j\right)\right) .
\end{align*}
The linear approximation of \({\boldsymbol{v}}_{-j,1}\left({{\boldsymbol{\theta}}}^{(t)}_{-j}; \mathcal{X}_t\right)\) around \(\epsilon=0\) and \({{\boldsymbol{\theta}}}={{\boldsymbol{\theta}}}^{(t)}\) gives the following relation:
\begin{dmath*}
    {\boldsymbol{v}}_{-j}\left({{\boldsymbol{\theta}}}^{(t)}_{-j}; \mathcal{X}_t\right) -{\boldsymbol{v}}\left({{\boldsymbol{\theta}}}^{(t)}; \mathcal{X}_t\right) \approx {\boldsymbol{J}}\left({{\boldsymbol{\theta}}}^{(t)}; \mathcal{X}_t\right) \Delta {{\boldsymbol{\theta}}}^{(t)}_{-j} + \delta_{\boldsymbol{x}_j \in \mathcal{X}_t} \frac{1}{\left\vert {\mathcal{X}_t} \right\vert} {\nabla_{\boldsymbol{\theta}}} f\left(D\left(\boldsymbol{\psi},\boldsymbol{x}_j\right)\right) ,
\end{dmath*}
where \({\boldsymbol{J}}\left({{\boldsymbol{\theta}}}; \mathcal{X}_t\right) \coloneqq {\partial_{\boldsymbol{\theta}}}{\boldsymbol{v}}\left({{\boldsymbol{\theta}}}; \mathcal{X}_t\right)\).
By using this relation and subtracting \eqref{eq:update_mini} from \eqref{eq:update_cf_mini}, we have
\begin{align}
\label{eq:recur_asgd_mini}
\Delta {{\boldsymbol{\theta}}}^{(t+1)}_{-j} &= \Delta {{\boldsymbol{\theta}}}^{(t)}_{-j} - \eta \left({\boldsymbol{v}}_{-j}\left({{\boldsymbol{\theta}}}^{(t)}_{-j}; \mathcal{X}_t\right) - {\boldsymbol{v}}\left({{\boldsymbol{\theta}}}^{(t)}; \mathcal{X}_t\right)\right) \notag \\
&\approx \left( {\boldsymbol{I}} - \eta {\boldsymbol{J}}\left({{\boldsymbol{\theta}}}^{(t)}; \mathcal{X}_t\right) \right) \Delta {{\boldsymbol{\theta}}}^{(t)}_{-j} + \Delta {\boldsymbol{v}}_{-j}\left({{\boldsymbol{\theta}}}^{(t)}; \mathcal{X}_t\right),
\end{align}
where \(\Delta {\boldsymbol{v}}_{-j}\left({{\boldsymbol{\theta}}}; \mathcal{X}_t\right) \coloneqq - \delta_{\boldsymbol{x}_j \in \mathcal{X}_t} \frac{\eta}{\left\vert {\mathcal{X}_t} \right\vert} {\nabla_{\boldsymbol{\theta}}} f\left(D\left(\boldsymbol{\psi}, \boldsymbol{x}_j\right)\right)\).
By recursively applying \eqref{eq:recur_asgd_mini} from \(\Delta {{\boldsymbol{\theta}}}^{(0)}_{-j} = \boldsymbol{0}\), we obtain the influence estimator \(\widehat{\Delta{{\boldsymbol{\theta}}}_{-j}} \approx \Delta{{\boldsymbol{\theta}}}^{(T)}_{-j}\) as:
\begin{equation}
\label{eq:estimator_itd_mini}
\widehat{\Delta{{\boldsymbol{\theta}}}_{-j}} \coloneqq \sum_{t=0}^{T-1} \left(\prod_{s=t+1}^{T-1} {\boldsymbol{Z}}\left({{\boldsymbol{\theta}}}^{(s)}; \mathcal{X}_s\right)\right) \Delta {\boldsymbol{v}}_{-j}\left({{\boldsymbol{\theta}}}^{(t)}; \mathcal{X}_t\right),
\end{equation}
where \({\boldsymbol{Z}}\left({{\boldsymbol{\theta}}}; \mathcal{X}_t\right) \coloneqq {\boldsymbol{I}} - \eta {\boldsymbol{J}}\left({{\boldsymbol{\theta}}}; \mathcal{X}_t\right)\) and \(\prod\) denotes the product operation with the multiplication order \(\prod_{t=0}^{T-1} \boldsymbol{A}_t = \boldsymbol{A}_{T-1} \cdots \boldsymbol{A}_0\).

\subsection{Momentum-based Optimizer} 
\noindent We redefine the loss function and gradient where the weight of the \(j\)-th instance is scaled by \(\epsilon \in [0,1]\) as
\begin{equation*}
\ \ V_{-j,\epsilon }(\boldsymbol{\varphi } ,\boldsymbol{\psi }) \coloneq V(\boldsymbol{\varphi } ,\boldsymbol{\psi }) -\frac{\epsilon }{| \mathcal{X}| } f( D(\boldsymbol{\psi } ,\boldsymbol{x}_{j})),
\end{equation*}
defining the concatenated gradient:
\begin{equation*}
\boldsymbol{v}_{-j,\epsilon }(\boldsymbol{\theta }) \coloneq \begin{pmatrix}
\nabla _{\boldsymbol{\varphi }} V_{-j,\epsilon }(\boldsymbol{\varphi } ,\boldsymbol{\psi })\\
-\nabla _{\boldsymbol{\psi }} V_{-j,\epsilon }(\boldsymbol{\varphi } ,\boldsymbol{\psi })
\end{pmatrix}
\end{equation*}
Now, we consider the following update rule of Adversarial-RMSProp
\begin{equation*}
\boldsymbol{\theta }^{(t+1)} =\boldsymbol{\theta }^{(t)} -\eta \tilde{\boldsymbol{v}}_{-j,0}^{( t)}\left(\boldsymbol{\theta }^{(t)}\right)
\end{equation*}
where, 
\begin{align*}
\tilde{\boldsymbol{v}}_{-j,\epsilon }^{( t)}(\boldsymbol{\theta }) &\coloneq \frac{\boldsymbol{v}_{-j,\epsilon }(\boldsymbol{\theta })}{\sqrt{\boldsymbol{q}_{-j,\epsilon }^{( t)}(\boldsymbol{\theta })} +\delta }, \\
\boldsymbol{q}_{-j,\epsilon }^{( t)}(\boldsymbol{\theta }) &\coloneq \alpha \boldsymbol{q}_{t}^{( t)} +( 1-\alpha )\boldsymbol{v}_{-j,\epsilon }(\boldsymbol{\theta })^{2},
\end{align*}
Then we can define the counterfactual Adversarial-RMSProp as
\begin{equation*}
\boldsymbol{\theta }_{-j}^{(t+1)} =\boldsymbol{\theta }_{-j}^{(t)} -\eta \tilde{\boldsymbol{v}}_{-j,1}^{( t)}\left(\boldsymbol{\theta }_{-j}^{(t)}\right).
\end{equation*}
Then we can approximate the influence on parameters of a single step by
\begin{align*}
\tilde{\boldsymbol{v}}_{-j,1}^{( t)}\left(\boldsymbol{\theta }_{-j}^{(t)}\right) -\tilde{\boldsymbol{v}}_{-j,0}^{( t)}\left(\boldsymbol{\theta }^{(t)}\right) \approx \tilde{\boldsymbol{J}}^{( t)} \Delta \boldsymbol{\theta }_{-j}^{(t)} +\Delta \tilde{\boldsymbol{v}}_{-j}^{( t)} ,
\end{align*}
where \(\tilde{\boldsymbol{J}}^{( t)} \coloneq \partial _{\boldsymbol{\theta }}\tilde{\boldsymbol{v}}_{-j,0}^{( t)}\left(\boldsymbol{\theta }^{(t)}\right)\) and \(\Delta \tilde{\boldsymbol{v}}_{-j}^{( t)} :=\partial_\epsilon \tilde{\boldsymbol{v}}_{-j,0}^{( t)}\left(\boldsymbol{\theta }^{(t)}\right)\).
By using the expressions above and letting \(\tilde{\boldsymbol{Z}}^{( t)} \coloneq \boldsymbol{I} -\eta \tilde{\boldsymbol{J}}^{( t)}\), we finally obtain the ITD influence estimator as
\begin{equation*}
\widehat{\Delta \boldsymbol{\theta }_{-j}} \coloneq \sum _{t=0}^{T-1}\left(\prod _{s=t+1}^{T-1}\tilde{\boldsymbol{Z}}^{( s)}\right) \Delta \tilde{\boldsymbol{v}}_{-j}^{( t)}.
\end{equation*}
Here, \(\partial_\epsilon \tilde{\boldsymbol{v}}_{-j,0}^{( t)}\left(\boldsymbol{\theta }^{(t)}\right)\) requires its recursive derivation through \(\boldsymbol{q}_{t}^{( t)} \).
Such a derivative is important to trace how the removal of \(j\)-th instance changes the momentum of RMSProp at future steps.
Although it is possible to trace such an effect, it requires additional computational overhead.
However, we found that setting \(\partial_\epsilon \boldsymbol{q}_{t}^{( t)}\) to be zero still yields sufficiently accurate influence in practice, which we did in our experiment on StyleGAN in \cref{sec:exp2}.

\subsection{Moving Averaged Generator}
\noindent The moving average technique for parameter averaging in GAN training computes the time-average of the parameters, providing a more stable convergence by smoothing out fluctuations over time. 
As a common practice, \cite{stylegan} utilizes the exponential moving average, which computes an exponentially discounted sum of the parameters using the following update rule:
\begin{equation*}
\overline{\boldsymbol{\varphi }}^{(t+1)} =(1-\beta )\overline{\boldsymbol{\varphi }}^{(t)} +\beta \boldsymbol{\varphi }^{(t+1)},
\end{equation*}
where  \(\overline{\boldsymbol{\theta }}^{(0)} = \boldsymbol{\theta }^{(0)}\) and \(\beta\) is the smoothing factor \(0 < \beta < 1\).
To apply the ITD influence estimator to the averaged generator, we need to consider how the removal of the \(i\)-th instance affects the final averaged generator.
The ITD influence estimator accounts for the influence of removing a training instance by approximating the changes in the parameters over time. 
It starts by expressing the averaged parameters with and without the \(i\)-th instance as:
\begin{align*}
\overline{\boldsymbol{\theta }}^{(t+1)} &=(\boldsymbol{I} -\boldsymbol{B} )\overline{\boldsymbol{\theta }}^{(t)} +\boldsymbol{B\theta }^{(t+1)}, \\
\overline{\boldsymbol{\theta }}_{-j}^{(t+1)} &=(\boldsymbol{I} -\boldsymbol{B} )\overline{\boldsymbol{\theta }}_{-j}^{(t)} +\boldsymbol{B\theta }_{-j}^{(t+1)}
\end{align*}
where \(\boldsymbol{B} =\begin{pmatrix} \boldsymbol{I} & \boldsymbol{O}\\ \boldsymbol{O} & \beta\boldsymbol{I} \end{pmatrix}\) and $\boldsymbol{\theta }^{(t)}\coloneq \left(\overline{\boldsymbol{\varphi }}{^{(t)\intercal} }, \boldsymbol{\psi}^{(t)\intercal }\right)$.
The matrix \(\boldsymbol{B}\) is introduced to account for the fact that the discriminator's parameter, represented by \(\boldsymbol{\psi}^{(t)}\), is not updated using the moving average. 
Thus, the update rule for the discriminator's parameter follows the original definition of the AGD and Counterfactual AGD.
The difference between the parameters with and without the \(i\)-th instance at each step \(t\) can be defined as:
\begin{align*}
\Delta \overline{\boldsymbol{\theta }}_{-j}^{(t)} \coloneq \overline{\boldsymbol{\theta }}_{-j}^{(t)} -\overline{\boldsymbol{\theta }}^{(t)} & =(\boldsymbol{I} -\boldsymbol{B} )\Delta \overline{\boldsymbol{\theta }}_{-j}^{(t-1)} +\boldsymbol{B} \Delta \boldsymbol{\theta }_{-j}^{(t)}\\
 & \approx (\boldsymbol{I} -\boldsymbol{B} )\Delta \overline{\boldsymbol{\theta }}_{-j}^{(t-1)} +\boldsymbol{B}\widehat{\Delta \boldsymbol{\theta }_{-j}^{(t)}} ,
\end{align*}
where \(\widehat{\Delta \boldsymbol{\theta }_{-j}^{(t)}}\) is the slightly modified notation of the orignal ITD influence estimator for AGD \cref{eq:estimator_itd} that only approximate the parameter changes before the \(t\)-th step.
By summing over all time steps, we obtain:
\begin{align*}
\Delta\overline{ \boldsymbol{\theta }}_{-j}^{(T)} & =\sum _{\tau =0}^{T-1} (\boldsymbol{I} -\boldsymbol{B} )\boldsymbol{B}^{(T-(\tau +1))} \Delta \boldsymbol{\theta}_{-j}^{(k)}\\
 & \approx \sum _{\tau =0}^{T-1} (\boldsymbol{I} -\boldsymbol{B} )\boldsymbol{B}^{(T-(\tau +1))}\widehat{\Delta \boldsymbol{\theta }_{-j}^{(\tau )}}\eqqcolon \widehat{\Delta \overline{\boldsymbol{\theta }}_{-j}^{(t)}}.
\end{align*}
This results in the following expression for the change in the loss function due to the removal of the \(i\)-th instance:
\begin{align*}
E\left(\mathcal{X}_{G}^{\prime}\left(\boldsymbol{\varphi}^{(T)}_{-j}\right)\right) -  E\left(\mathcal{X}_{G}^{\prime}\left(\boldsymbol{\varphi}^{(T)}\right)\right) &\approx \widehat{\Delta \overline{\boldsymbol{\theta }}_{-j}^{(t)\intercal}} \nabla \overline{E}\\
&\approx \sum _{\tau =0}^{T-1} (\boldsymbol{I} -\boldsymbol{B} )\boldsymbol{B}^{(T-(\tau +1))}\sum _{k=0}^{\tau } \Delta \boldsymbol{v}_{-j}\left(\boldsymbol{\theta }^{(k)}\right)^{\intercal }\left(\prod _{s=k+1}^{\tau }\boldsymbol{Z}\left(\boldsymbol{\theta }^{(s)}\right)\right)^{\intercal } \nabla \overline{E} \\
&\eqqcolon \widehat{\Delta \overline{E}_{-j}},
\end{align*}
where ${\displaystyle \nabla \overline{E} \coloneq (\nabla _{\boldsymbol{\varphi }} E(\mathcal{X}_{G}^{\prime } (\overline{\boldsymbol{\varphi }}^{(T)} ))^{\intercal } ,\ \ \boldsymbol{0}^{\intercal } )^{\intercal }}$.
To further simplify the recursive computation of the influence estimator, we introduce $\overline{\boldsymbol{u}}^{(t)}$:
\begin{align*}
\overline{\boldsymbol{u}}^{(t)} \coloneq \left(\sum _{k=t}^{T-1} (\boldsymbol{I -B} )\boldsymbol{B}^{(T-(k+1))}\prod _{s=t+1}^{T-1}\boldsymbol{Z}\left(\boldsymbol{\theta }^{(s)}\right)\right)^{\intercal } \nabla \overline{E}
\end{align*}
Using this recursive computation, we can express the influence on the parameter as follows:
\begin{align*}
\overline{\boldsymbol{u}}^{(t-1)} & =\left(\sum _{\tau =t-1}^{T-1} (\boldsymbol{I -B} )\boldsymbol{B}^{(T-(\tau +1))}\prod _{s=t}^{\tau }\boldsymbol{Z}\left(\boldsymbol{\theta }^{(s)}\right)\right)^{\intercal } \nabla \overline{E}\\
 & =\left(\sum _{\tau =t}^{T-1} (\boldsymbol{I -B} )\boldsymbol{B}^{(T-(\tau +1))}\prod _{s=t}^{\tau }\boldsymbol{Z}\left(\boldsymbol{\theta }^{(s)}\right)\right) +(\boldsymbol{I -B} )\boldsymbol{B}^{(T-t))}\prod _{s=t}^{t-1}\boldsymbol{Z}\left(\boldsymbol{\theta }^{(s)}\right)^{\intercal } \nabla \overline{E}\\
 & =\boldsymbol{Z}\left(\boldsymbol{\theta }^{(t)}\right)\left(\sum _{\tau =t}^{T-1} (\boldsymbol{I -B} )\boldsymbol{B}^{(T-(\tau +1))}\prod _{s=t+1}^{\tau }\boldsymbol{Z}\left(\boldsymbol{\theta }^{(s)}\right)\right) +(\boldsymbol{I -B} )\boldsymbol{B}^{(T-t)} \nabla \overline{E}\\
 & =\boldsymbol{Z}\left(\boldsymbol{\theta }^{(t)}\right)\overline{\boldsymbol{u}}^{(t)}  +(\boldsymbol{I -B} )\boldsymbol{B}^{(T-t))} \nabla \overline{E}
\end{align*}
Then, the estimation of influence on evaluation metric at the \(t\)-th step can also be computed recursively as 
\begin{align*}
\widehat{\Delta\overline{E}_{-j}^{(t-1)}} & \coloneq \sum _{\tau =t}^{T-1} (\boldsymbol{I -B} )\boldsymbol{B}^{(T-(\tau +1))}\sum _{k=t}^{\tau } \Delta \boldsymbol{v}_{-j}\left(\boldsymbol{\theta }^{(k)}\right)^{\intercal }\left(\prod _{s=k+1}^{\tau }\boldsymbol{Z}\left(\boldsymbol{\theta }^{(s)}\right)\right)^{\intercal } \nabla \overline{E}\\
 & =\sum _{k=t}^{T-1} \Delta \boldsymbol{v}_{-j}\left(\boldsymbol{\theta }^{(k)}\right)^{\intercal }\sum _{\tau =k}^{T-1} (\boldsymbol{I -B} )\boldsymbol{B}^{(T-(\tau +1))}\left(\prod _{s=k+1}^{\tau }\boldsymbol{Z}\left(\boldsymbol{\theta }^{(s)}\right)\right)^{\intercal } \nabla \overline{E}\\
 & =\sum _{k=t}^{T-1} \Delta \boldsymbol{v}_{-j}\left(\boldsymbol{\theta }^{(k)}\right)^{\intercal }\boldsymbol{u}^{(k)}\\
 & =\widehat{\Delta E_{-j}^{(t)}} +\Delta \boldsymbol{v}_{-j}\left(\boldsymbol{\theta }^{(t)}\right)^{\intercal }\boldsymbol{u}^{(t)}.
\end{align*}
The overall influence can then be written as $\widehat{\Delta \overline{E}_{-j}}=\widehat{\Delta\overline{E}_{-j}^{(-1)}} $.
This recursive approach allows us to efficiently compute the influence estimator $\widehat{\Delta E_{-j}^{(t)}}$ for all $t$ from $T-1$ to $0$ using the derived recursive relations and initial conditions.

\section{Detailed Settings and Results of Experiments} \label{app:exp}

\subsection{GAN Evaluation Metrics}
\noindent In our experiments, we used three GAN evaluation metrics: average log-likelihood (ALL), inception score (IS), and Fréchet inception distance (FID).
    
ALL is the de-facto standard for evaluating generative models \cite{adagan}.
ALL measures the likelihood of the true data under the distribution that is estimated from generated data using kernel density estimation.
We calculated ALL using the validation dataset under the distribution estimated from the generated instances.
We adopted Gaussian kernel with the bandwidth 1 for kernel density estimation used in ALL.

The empirical version of IS has a form of  \(E (\mathcal{X}^\prime)  = \mathrm{exp}(\frac{1}{\vert \mathcal{X}^\prime \vert} \sum_{\boldsymbol{x} \in\mathcal{\mathcal{X}^\prime}}\mathbb{KL}(p_c(y \,\vert\,\boldsymbol{x})\,\Vert \,p_c(y))\), where \(p_c\) is a distribution of class label \(y\) drawn by a pre-trained learning classifier
      
FID measures Fréchet distance between two sets of feature vectors of real images and those of generated images.
The feature vectors are calculated on the basis of a pre-trained classifier.

Larger values of ALL and IS and a smaller value of FID indicate better generative performance.

To compute IS and FID, we trained a CNN classifier of MNIST with a validation dataset, whose architecture can be found in \cref{table:cnn}.
We selected the output of the 4th layer for the feature vectors for FID.

\subsection{Experiment 1: Estimation Accuracy} \label{app:exp1}

\subsubsection{LQGAN Trained for 1D-Normal}
We used \(x\sim\mathcal{N}(1,1)\) to construct the 1D-Normal training dataset \(\mathcal{X}\) with 1,000 instances for AGD training and the validation dataset with 1,000 instances for computing ALL. 
We also sampled 1,000 latent variables from \(z\sim\mathcal{N}(0,1)\) to construct \(\mathcal{Z}\).
Both the learning rate in \cref{eq:update} and the scaling coefficient for AID-EIGEM in \cref{eq:stationary_point} were set to \(0.01\).

\subsubsection{DCGAN Trained for MNIST}
For MNIST, we randomly selected 10,000 instances for AGD training and 10,000 validation instances for computing IS. 
DCGAN consists of transposed convolution (or deconvolution) layers and convolution layers (Table~\ref{table:dcgan}). 
We used Layer Normalization \cite{ba2016layer} for the layers shown in Table~\ref{table:dcgan} for the stability of the training.
In this experiment, we set \(8\) as the number of channels \(h_G\) and \(h_D\) in \cref{table:cnn}.
We also introduced the L2-norm regularization with a rate \(10^{-3}\) for all the layers.
We used the non-zero-sum game objective of the original paper \cite{goodfellow2014generative} for training stability.
In addition, both gradient descent \cref{eq:update} and \cref{alg:lie_itd} were performed in stochastic manner using the minibatch with 100 samples.
The learning rate was set to \(0.001\).
We also used the regularization with \(\gamma=0.1\) for AID-EIGEM for the stability of the recursive computation~\cref{sec:stability}.

\subsubsection{Results}
\cref{table:tau_toy,table:tau_is} show the complete result of \cref{fig:valid}\subref{sub:toy_valid} and \cref{fig:valid}\subref{sub:mnist_valid_is}, respectively.

\subsection{Experiment 2: Data Cleansing}\label{app:exp2}

\paragraph{LQGAN Trained for 1D-Normal}
We used \(x \sim b \mathcal{N}(1, 0.5) + (1-b) \mathcal{N}(-2, 0.5) \) with \(b \sim \mathrm{Bernoulli}(0.95)\) to construct the 1D-Normal training dataset \(\mathcal{X}\) with 1,000 instances.
We separately sampled 1,000 instances to construct the validation and test dataset from \(x \sim \mathcal{N}(1, 0.5)\).
The validation dataset is used to compute the influence on ALL and the test dataset is used to evaluate the test ALL after the data cleansing.
The AGD training adopted \(T=10000\).
The scaling coefficient for AID-EIGEM and the learning rate follow the setting of \cref{sec:exp1}.
We adopted the same architecture as \cref{sec:exp1}.

\paragraph{DCGAN Trained for MNIST}
For MNIST, we randomly selected 50,000 instances for AGD training and 10,000 validation instances for computing IS and FID.
The test dataset consists of 10,000 instances which are exclusive from the training and validation dataset.
The architecture of DCGAN followed (\cref{table:dcgan}) in which \(h_G=32\) and \(h_D=32\).
We also introduced the L2-norm regularization with a rate \(10^{-3}\) for all the layers.
DCGAN was trained by 10000 steps of the stochastic gradient descent with a learning rate \(0.001\).
The other settings followed those of \cref{sec:exp2} except for introducing the regularization with \(\gamma=0.1\) for ITD-EIGEM in the full-epoch retraining setting.

\paragraph{StyleGAN Fine-tuned for Animal Faces-HQ}
We conducted experiments on StyleGAN \cite{stylegan} using 5,558 cat images from the Animal Faces-HQ \cite{afhq} dataset, split into 80
Images were resized to 256x256 pixels.

We adopted a PyTorch implementation\footnote{\url{https://github.com/rosinality/style-based-gan-pytorch}} that replicates the original StyleGAN architecture. 
The pre-trained model on the FFHQ dataset \cite{stylegan} was also from the same repository. 
Our fine-tuning applied LoRA \cite{hu2021lora} to both the generator and discriminator, with a rank of 16 for both the generator's progression blocks and RGB layers, as well as the discriminator's convolutional blocks and linear layer. 
The RMSProp optimizer was used with learning rates of 0.002 for the LoRA parameters. 
The generator's LoRA parameters were updated using moving averaging with a decay factor of 0.999. 
Training was conducted for 50 epochs with a batch size of 8, and a gradient penalty was applied.

FID was used as the evaluation metric for both influence estimation and model evaluation. 
We computed activations from the pool-3 layer of pre-trained InceptionV3 \cite{inceptionnet} for FID, density, and coverage computations. 
To ensure the covariance matrix for FID was full rank, we augmented validation and test instances with horizontal flipping. 
Test activations from InceptionV3 were also used to evaluate the density and coverage shown in \cref{table:density_coverage}.

For instance selection approaches, we employed our ITD-EIGEM and AID-EIGEM methods, as well as isolation forest and random selection as baselines. 
ITD-EIGEM performed iterations as explained in \cref{app:itd_ext}. 
In the one-epoch retraining, ITD-EIGEM computed influence by only tracing back the training iterations in the last epoch. 
AID-EIGEM was applied to the final discriminator and the averaged generator, with parameters \(M=1000\) and \(\eta=0.001\). 
Both ITD-EIGEM and AID-EIGEM computed the influence on FID evaluated on the validation dataset. 
The isolation forest scored harmfulness using InceptionV3 activations from the validation dataset.

Counterfactual training was performed by removing harmful instances identified through influence scores, while varying removal rates from 0.001 to 0.9. 
For each removal rate, we retrained the model from the initial epoch for the full-epoch retraining and from the 49th epoch for the one-epoch retraining, and evaluated the model on the test dataset. 
During counterfactual training, we reproduced the original training's randomness, including noise input and style mixing step indices.

\subsubsection{Detailed Results}
\cref{table:ll,table:fid,table:inception_score} show the detailed results of the data cleansing for LQGAN and MNIST, which the statistical information excluded from \cref{fig:clean} for visibility.

Regarding the data cleansing for StyleGAN, the supplementary figures \cref{fig:instance_comparison}, \cref{fig:visual_cat_app1}, and \cref{fig:visual_cat_app2} further support the observations discussed in \cref{sec:quality}.
\cref{fig:instance_comparison} shows harmful and helpful instances identified by ITD-EIGEM with full-epoch and one-epoch tracking, as well as those identified by AID-EIGEM and the isolation forest.
Recalling our observation in \cref{sec:quality}, harmful instances predicted by ITD-EIGEM (\cref{fig:instance_comparison}\subref{sub:fid_itd_helpful_50epoch}) show common patterns like yellow cats with stripes, while helpful instances (\cref{fig:instance_comparison}\subref{sub:fid_itd_helpful_50epoch}) show rare patterns like cats without stripes or seal point cats.
In contrast, the harmful and helpful instances predicted by AID-EIGEM (\cref{fig:instance_comparison}\subref{sub:fid_aid_harmful_50epoch}-\subref{sub:fid_aid_helpful_50epoch}) and the isolation forest (\cref{fig:instance_comparison}\subref{sub:if_harmful_50epoch}-\subref{sub:if_helpful_50epoch}) do not show such a clear tendency in their patterns.
The generation results in \cref{fig:visual_cat_app1} and \cref{fig:visual_cat_app2} further illustrate the effect of data cleansing with a larger number of samples.
As noted in \cref{sec:quality}, our cleansed model by ITD-EIGEM seems to have reassigned latent variables originally associated with common patterns, such as yellow cats with stripes, to rare patterns, such as cats without stripes or seal point cats. 
This tendency is consistently observed across other generated samples shown in \cref{fig:visual_cat_app1} and \cref{fig:visual_cat_app2}.
In other methods, such reassignments are not or only partially observed.

\begin{table}[t]
\caption{Model Architecture of CNN Classifier of MNIST in Section \ref{sec:exp1} and \ref{sec:exp2}}
\label{table:cnn}
\begin{center}
    \begin{tabular}{cccccccc}
    \toprule
    Stage & Operation & Stride & Filter Shape & Bias & Norm. & Activation & Output \\
    \midrule
    0 & Input & -& -& - &  -&- & [28, 28, 1]  \\
    1 & Conv2D &1 & [5, 5]& \checkmark& - & Sigmoid & [25, 25, 8] \\
    2 & Conv2D &1 & [5, 5]& \checkmark& - & Sigmoid & [12, 12, 8] \\
    3 & MaxPooling &2 & [2, 2] & - &- & Sigmoid & [392] \\
    4 & Linear &1 & - & \checkmark& -& Sigmoid & [128] \\
    5 & Linear &1 & - & \checkmark& -& Sigmoid & [10] \\
    \bottomrule
    \end{tabular}
\end{center}
\end{table}

\begin{table}[t]
\caption{Model Architecture of DCGAN in Section \ref{sec:exp1} and \ref{sec:exp2}}
\label{table:dcgan}
\begin{center}
    \begin{tabular}{ccccccccc}
    \toprule
    Net. & Stage & Operation & Stride & Filter Shape & Bias & Norm. & Activation & Output \\
    \midrule
    - & 0 & Input & -& -& - &  -&- &[32]  \\
    \(G\)&1 & Deconv2D &1 & [2, 2] &\checkmark&\checkmark & Sigmoid & [2, 2, \(h_G\)] \\
    \(G\)&2 & Deconv2D &1 & [3, 3]& \checkmark&\checkmark & Sigmoid & [4, 4, \(h_G\)] \\
    \(G\)&3 & Deconv2D &2 & [3, 3]& \checkmark&\checkmark & Sigmoid & [9, 9, \(h_G\)] \\
    \(G\)&4 & Deconv2D &1 & [2, 2]& \checkmark&\checkmark & Sigmoid & [10, 10, \(h_G\)] \\
    \(G\)&5 & Deconv2D &1 & [3, 3]& \checkmark&\checkmark & Sigmoid & [12, 12, \(h_G\)] \\
    \(G\)&6 & Deconv2D &2 & [3, 3]& \checkmark&\checkmark & Sigmoid & [25, 25, \(h_G\)] \\
    \(G\)&7 & Deconv2D &1 & [4, 4]& \checkmark&\checkmark & Sigmoid & [28, 28, \(h_G\)] \\
    \(G\)&8 & Conv2D &1 & [1, 1]& \checkmark&- & Tanh & [28, 28, 1] \\
    \(D\)&9 & Conv2D &1 & [4, 4]& \checkmark&\checkmark & Sigmoid & [25, 25, \(h_D\)] \\
    \(D\)&10 & Conv2D &2 & [3, 3]& \checkmark&\checkmark & Sigmoid & [12, 12, \(h_D\)] \\
    \(D\)&11 & Conv2D &1 & [3, 3]& \checkmark&\checkmark & Sigmoid & [10, 10, \(h_D\)] \\
    \(D\)&12& Conv2D &1 & [2, 2]& \checkmark&\checkmark & Sigmoid & [9, 9, \(h_D\)] \\
    \(D\)&13 & Conv2D &2 & [3, 3]& \checkmark&\checkmark & Sigmoid & [4, 4, \(h_D\)] \\
    \(D\)&14& Conv2D &1 & [3, 3] &\checkmark&\checkmark & Sigmoid & [2, 2, \(h_D\)] \\
    \(D\)&15& Conv2D &1 & [2, 2] &\checkmark&\checkmark & Sigmoid & [1, 1, \(h_D\)] \\
    \(D\)&16& Linear &- & - &\checkmark&-& Sigmoid & [1] \\
    \bottomrule
    \end{tabular}
\end{center}
\end{table}

\begin{table}[t]
\centering
\caption{Average Kendal's Tau (10\% percentile) (90\% percentile) of Estimated and True Influence on ALL Computed Over Randomly Selected 100 Training Instances (Bold Numbers Express Statistically Significantly Larger Values Than Random with \(p < .05\))}
\label{table:tau_toy}
\begin{tabular}{ccccccccccccc}
\toprule
 & \multicolumn{12}{c}{\(T\)}\\ \cmidrule(lr){2-13}
&                            2     &                            5     &                            10    &                            20    &                            50    &                            100   &                            200   &                            500   &                            1000  &                            2000  &                            5000  &                            10000 \\
\midrule
\thead{AID \\ (\(M\)=10)}    &  \textbf{\thead{1.00 \\ (1.00)  \\ (1.00)}} &  \textbf{\thead{1.00 \\ (1.00)  \\ (1.00)}} &   \textbf{\thead{1.00 \\ (1.00)  \\ (1.00)}} &   \textbf{\thead{1.00 \\ (0.99)  \\ (1.00)}} &  \textbf{\thead{0.94 \\ (0.81)  \\ (1.00)}} &  \textbf{\thead{0.88 \\ (0.57)  \\ (1.00)}} &   \textbf{\thead{0.71 \\ (0.42)  \\ (0.99)}} &  \textbf{\thead{0.37 \\ (0.14)  \\ (0.51)}} &  \textbf{\thead{0.27 \\ (-0.00)  \\ (0.51)}} &            \thead{0.16 \\ (-0.21) \\ (0.70)} &  \textbf{\thead{0.11 \\ (-0.04)  \\ (0.31)}} &  \textbf{\thead{0.18 \\ (0.02)  \\ (0.33)}} \\
\thead{AID \\ (\(M\)=100)}   &  \textbf{\thead{0.99 \\ (0.99)  \\ (1.00)}} &  \textbf{\thead{0.99 \\ (0.99)  \\ (1.00)}} &   \textbf{\thead{0.99 \\ (0.99)  \\ (1.00)}} &   \textbf{\thead{0.98 \\ (0.93)  \\ (1.00)}} &  \textbf{\thead{0.93 \\ (0.76)  \\ (1.00)}} &  \textbf{\thead{0.88 \\ (0.55)  \\ (1.00)}} &   \textbf{\thead{0.77 \\ (0.58)  \\ (1.00)}} &  \textbf{\thead{0.65 \\ (0.54)  \\ (0.74)}} &   \textbf{\thead{0.38 \\ (0.17)  \\ (0.55)}} &  \textbf{\thead{0.34 \\ (-0.00)  \\ (0.79)}} &   \textbf{\thead{0.29 \\ (0.21)  \\ (0.43)}} &  \textbf{\thead{0.34 \\ (0.23)  \\ (0.44)}} \\
\thead{AID \\ (\(M\)=1000)}  &  \textbf{\thead{0.87 \\ (0.54)  \\ (1.00)}} &  \textbf{\thead{0.87 \\ (0.56)  \\ (1.00)}} &   \textbf{\thead{0.86 \\ (0.60)  \\ (1.00)}} &   \textbf{\thead{0.82 \\ (0.47)  \\ (1.00)}} &  \textbf{\thead{0.53 \\ (0.19)  \\ (0.94)}} &  \textbf{\thead{0.38 \\ (0.15)  \\ (0.63)}} &  \textbf{\thead{0.37 \\ (-0.03)  \\ (0.74)}} &  \textbf{\thead{0.71 \\ (0.49)  \\ (0.89)}} &   \textbf{\thead{0.86 \\ (0.80)  \\ (0.94)}} &   \textbf{\thead{0.71 \\ (0.42)  \\ (0.91)}} &   \textbf{\thead{0.89 \\ (0.85)  \\ (0.94)}} &  \textbf{\thead{0.87 \\ (0.84)  \\ (0.90)}} \\
\thead{AID \\ (\(M\)=10000)} &  \textbf{\thead{0.75 \\ (0.64)  \\ (1.00)}} &  \textbf{\thead{0.79 \\ (0.70)  \\ (1.00)}} &  \textbf{\thead{0.60 \\ (-0.08)  \\ (1.00)}} &  \textbf{\thead{0.61 \\ (-0.50)  \\ (1.00)}} &           \thead{0.25 \\ (-0.38) \\ (0.99)} &  \textbf{\thead{0.36 \\ (0.09)  \\ (0.57)}} &   \textbf{\thead{0.43 \\ (0.10)  \\ (0.79)}} &  \textbf{\thead{0.73 \\ (0.57)  \\ (0.86)}} &   \textbf{\thead{0.89 \\ (0.80)  \\ (0.97)}} &   \textbf{\thead{0.73 \\ (0.55)  \\ (0.92)}} &   \textbf{\thead{0.99 \\ (0.98)  \\ (1.00)}} &  \textbf{\thead{1.00 \\ (1.00)  \\ (1.00)}} \\
ITD                          &  \textbf{\thead{1.00 \\ (1.00)  \\ (1.00)}} &  \textbf{\thead{1.00 \\ (1.00)  \\ (1.00)}} &   \textbf{\thead{1.00 \\ (1.00)  \\ (1.00)}} &   \textbf{\thead{1.00 \\ (1.00)  \\ (1.00)}} &  \textbf{\thead{1.00 \\ (1.00)  \\ (1.00)}} &  \textbf{\thead{1.00 \\ (1.00)  \\ (1.00)}} &   \textbf{\thead{1.00 \\ (1.00)  \\ (1.00)}} &  \textbf{\thead{1.00 \\ (1.00)  \\ (1.00)}} &   \textbf{\thead{1.00 \\ (1.00)  \\ (1.00)}} &   \textbf{\thead{1.00 \\ (0.99)  \\ (1.00)}} &   \textbf{\thead{1.00 \\ (1.00)  \\ (1.00)}} &  \textbf{\thead{1.00 \\ (1.00)  \\ (1.00)}} \\
\bottomrule
\end{tabular}
\end{table}

\begin{table}[t]
\centering
\caption{Average Kendal's Tau (10\% percentile) (90\% percentile) of Estimated and True Influence on IS Computed Over Randomly Selected 100 Training Instances (Bold Numbers Express Statistically Significantly Larger Values Than Random with \(p < .05\))}
\label{table:tau_is}
\begin{tabular}{cccccccc}
\toprule
 & \multicolumn{7}{c}{\(T\)}\\ \cmidrule(lr){2-8}
{} &                                     100   &                                     200   &                                      500   &                                     1000  &                                      2000  &                                     5000  &                             10000 \\
\midrule
\thead{AID \\ (\(M\)=10)}    &  \textbf{\thead{0.48 \\ (0.30)  \\ (0.63)}} &  \textbf{\thead{0.32 \\ (0.15)  \\ (0.48)}} &  \textbf{\thead{0.07 \\ (-0.00)  \\ (0.19)}} &           \thead{0.08 \\ (-0.04) \\ (0.20)} &           \thead{-0.04 \\ (-0.18) \\ (0.11)} &          \thead{-0.00 \\ (-0.06) \\ (0.06)} &  \thead{-0.04 \\ (-0.12) \\ (0.08)} \\
\thead{AID \\ (\(M\)=100)}   &  \textbf{\thead{0.50 \\ (0.36)  \\ (0.65)}} &  \textbf{\thead{0.32 \\ (0.15)  \\ (0.47)}} &   \textbf{\thead{0.09 \\ (0.01)  \\ (0.21)}} &           \thead{0.09 \\ (-0.05) \\ (0.22)} &           \thead{-0.03 \\ (-0.17) \\ (0.10)} &           \thead{0.01 \\ (-0.05) \\ (0.07)} &  \thead{-0.04 \\ (-0.12) \\ (0.09)} \\
\thead{AID \\ (\(M\)=1000)}  &  \textbf{\thead{0.49 \\ (0.34)  \\ (0.64)}} &  \textbf{\thead{0.32 \\ (0.15)  \\ (0.47)}} &   \textbf{\thead{0.09 \\ (0.01)  \\ (0.22)}} &           \thead{0.08 \\ (-0.07) \\ (0.23)} &           \thead{-0.03 \\ (-0.16) \\ (0.10)} &           \thead{0.01 \\ (-0.06) \\ (0.08)} &  \thead{-0.04 \\ (-0.14) \\ (0.08)} \\
\thead{AID \\ (\(M\)=10000)} &  \textbf{\thead{0.50 \\ (0.37)  \\ (0.63)}} &  \textbf{\thead{0.33 \\ (0.15)  \\ (0.50)}} &   \textbf{\thead{0.09 \\ (0.01)  \\ (0.21)}} &           \thead{0.09 \\ (-0.05) \\ (0.23)} &           \thead{-0.03 \\ (-0.17) \\ (0.09)} &           \thead{0.01 \\ (-0.06) \\ (0.08)} &  \thead{-0.04 \\ (-0.13) \\ (0.07)} \\
ITD                          &  \textbf{\thead{0.94 \\ (0.88)  \\ (0.97)}} &  \textbf{\thead{0.93 \\ (0.88)  \\ (0.97)}} &   \textbf{\thead{0.88 \\ (0.80)  \\ (0.95)}} &  \textbf{\thead{0.58 \\ (0.10)  \\ (0.92)}} &  \textbf{\thead{0.29 \\ (-0.16)  \\ (0.70)}} &  \textbf{\thead{0.20 \\ (0.01)  \\ (0.42)}} &   \thead{0.14 \\ (-0.08) \\ (0.33)} \\
\bottomrule
\end{tabular}
\end{table}

\begin{table}[t]
\centering
\caption{Improvements of Test ALL (\(\pm\)STD) after the Data Cleansing of 1D-Normal (Values are Highlighted when the Improvement is Statistically Significant with a Significant Level 0.05)}
\label{table:ll}
\begin{tabular}{lcccccc}
\toprule
{} &                              0.01 &                              0.02 &                               0.05 &                               0.10 &                               0.20 &                              0.50 \\
\midrule
Influence on ALL by ITD (Ours) &  \textbf{\thead{+3.29 \\ (0.23)}} &  \textbf{\thead{+5.90 \\ (0.34)}} &  \textbf{\thead{+10.98 \\ (0.83)}} &  \textbf{\thead{+11.39 \\ (1.00)}} &   \textbf{\thead{+9.63 \\ (1.02)}} &  \textbf{\thead{+2.49 \\ (1.22)}} \\
Influence on ALL by AID (Ours) &  \textbf{\thead{+3.06 \\ (0.88)}} &  \textbf{\thead{+5.52 \\ (1.45)}} &  \textbf{\thead{+10.16 \\ (2.89)}} &  \textbf{\thead{+10.48 \\ (3.22)}} &   \textbf{\thead{+9.65 \\ (1.05)}} &  \textbf{\thead{+2.78 \\ (0.96)}} \\
Influence on Disc. Loss by ITD &  \textbf{\thead{+2.46 \\ (1.39)}} &  \textbf{\thead{+4.31 \\ (2.60)}} &   \textbf{\thead{+7.62 \\ (4.98)}} &   \textbf{\thead{+7.45 \\ (5.57)}} &   \textbf{\thead{+5.40 \\ (5.87)}} &           \thead{-4.97 \\ (6.69)} \\
Influence on Disc. Loss by AID &  \textbf{\thead{+2.78 \\ (1.14)}} &  \textbf{\thead{+4.99 \\ (2.12)}} &   \textbf{\thead{+9.31 \\ (3.98)}} &   \textbf{\thead{+9.30 \\ (4.48)}} &   \textbf{\thead{+7.41 \\ (4.78)}} &           \thead{-0.81 \\ (5.45)} \\
Isolation Forest               &  \textbf{\thead{+2.43 \\ (0.32)}} &  \textbf{\thead{+4.00 \\ (0.57)}} &   \textbf{\thead{+8.72 \\ (1.01)}} &  \textbf{\thead{+11.48 \\ (1.03)}} &  \textbf{\thead{+10.10 \\ (1.10)}} &  \textbf{\thead{+1.26 \\ (2.26)}} \\
Random                         &           \thead{-0.03 \\ (0.23)} &           \thead{-0.11 \\ (0.23)} &            \thead{-0.46 \\ (0.25)} &            \thead{-1.07 \\ (0.39)} &            \thead{-2.12 \\ (0.53)} &           \thead{-6.61 \\ (1.09)} \\
\bottomrule
\end{tabular}
\end{table}

\begin{table}[t]
\centering
\caption{Improvements of Test Inception Score (\(\pm\)STD) after the Data Cleansing of MNIST (Values are Highlighted when the Improvement is Statistically Significant with a Significant Level 0.05)}
\label{table:inception_score}
\begin{tabular}{llcccccccccc}
\toprule
   && \multicolumn{10}{c}{Rate of Instances removed \(n_h / N_x\)} \\  \cmidrule(lr){2-11}
& {} &                     0.01 &                              0.02 &                     0.05 &                              0.10 &                     0.20 &                              0.30 &                     0.40 &                     0.50 &                     0.70 &                     0.90 \\
\midrule
 \multirow{8}{*}[-1.8cm]{Full-epoch retraining} &Influence on FID by ITD (Ours) &  \thead{+0.02 \\ (0.08)} &           \thead{+0.02 \\ (0.07)} &  \thead{+0.05 \\ (0.12)} &  \textbf{\thead{+0.07 \\ (0.14)}} &  \thead{+0.07 \\ (0.17)} &  \textbf{\thead{+0.09 \\ (0.21)}} &  \thead{+0.05 \\ (0.20)} &  \thead{+0.03 \\ (0.22)} &  \thead{-0.12 \\ (0.21)} &  \thead{-2.76 \\ (1.21)} \\
 &Influence on FID by AID (Ours) &  \thead{+0.01 \\ (0.10)} &           \thead{-0.00 \\ (0.15)} &  \thead{-0.14 \\ (0.27)} &           \thead{-0.25 \\ (0.30)} &  \thead{-0.54 \\ (0.40)} &           \thead{-0.66 \\ (0.55)} &  \thead{-0.82 \\ (0.60)} &  \thead{-0.74 \\ (0.57)} &  \thead{-0.77 \\ (0.54)} &  \thead{-2.18 \\ (0.67)} \\
 &Influence on IS by ITD (Ours)  &  \thead{+0.01 \\ (0.06)} &  \textbf{\thead{+0.03 \\ (0.05)}} &  \thead{+0.05 \\ (0.13)} &  \textbf{\thead{+0.07 \\ (0.14)}} &  \thead{+0.05 \\ (0.14)} &           \thead{+0.08 \\ (0.24)} &  \thead{+0.06 \\ (0.20)} &  \thead{+0.02 \\ (0.21)} &  \thead{-0.10 \\ (0.20)} &  \thead{-2.72 \\ (1.05)} \\
 &Influence on IS by AID (Ours)  &  \thead{+0.01 \\ (0.13)} &           \thead{-0.00 \\ (0.10)} &  \thead{-0.13 \\ (0.25)} &           \thead{-0.27 \\ (0.31)} &  \thead{-0.49 \\ (0.46)} &           \thead{-0.79 \\ (0.48)} &  \thead{-0.80 \\ (0.52)} &  \thead{-0.70 \\ (0.49)} &  \thead{-0.82 \\ (0.63)} &  \thead{-2.24 \\ (0.67)} \\
 &Influence on Disc. Loss by ITD &  \thead{+0.01 \\ (0.07)} &           \thead{-0.03 \\ (0.16)} &  \thead{+0.00 \\ (0.14)} &           \thead{-0.00 \\ (0.21)} &  \thead{+0.02 \\ (0.21)} &           \thead{+0.02 \\ (0.24)} &  \thead{-0.03 \\ (0.25)} &  \thead{-0.01 \\ (0.21)} &  \thead{-0.14 \\ (0.21)} &  \thead{-2.29 \\ (0.82)} \\
 &Influence on Disc. Loss by AID &  \thead{+0.04 \\ (0.11)} &           \thead{+0.03 \\ (0.13)} &  \thead{-0.13 \\ (0.24)} &           \thead{-0.24 \\ (0.21)} &  \thead{-0.31 \\ (0.23)} &           \thead{-0.45 \\ (0.29)} &  \thead{-0.44 \\ (0.32)} &  \thead{-0.50 \\ (0.30)} &  \thead{-0.61 \\ (0.24)} &  \thead{-2.02 \\ (0.42)} \\
 &Isolation Forest               &  \thead{-0.01 \\ (0.07)} &           \thead{-0.06 \\ (0.13)} &  \thead{-0.02 \\ (0.16)} &           \thead{-0.08 \\ (0.20)} &  \thead{-0.17 \\ (0.26)} &           \thead{-0.20 \\ (0.31)} &  \thead{-0.47 \\ (0.27)} &  \thead{-0.70 \\ (0.33)} &  \thead{-1.13 \\ (0.42)} &  \thead{-2.52 \\ (0.93)} \\
 &Random                         &  \thead{-0.00 \\ (0.04)} &           \thead{+0.01 \\ (0.06)} &  \thead{+0.00 \\ (0.07)} &           \thead{-0.02 \\ (0.11)} &  \thead{+0.01 \\ (0.13)} &           \thead{+0.04 \\ (0.18)} &  \thead{+0.05 \\ (0.16)} &  \thead{+0.04 \\ (0.16)} &  \thead{-0.02 \\ (0.16)} &  \thead{-1.99 \\ (0.84)} \\
\midrule
  \multirow{8}{*}[-1.8cm]{One-epoch retraining} &Influence on FID by ITD (Ours) &           \thead{+0.01 \\ (0.07)} &           \thead{+0.02 \\ (0.09)} &           \thead{+0.03 \\ (0.09)} &  \thead{+0.03 \\ (0.12)} &           \thead{+0.04 \\ (0.15)} &           \thead{+0.04 \\ (0.15)} &           \thead{+0.06 \\ (0.15)} &           \thead{+0.06 \\ (0.15)} &  \thead{+0.04 \\ (0.15)} &  \thead{-0.20 \\ (0.27)} \\
 &Influence on FID by AID (Ours) &  \textbf{\thead{+0.02 \\ (0.04)}} &  \textbf{\thead{+0.03 \\ (0.05)}} &           \thead{+0.03 \\ (0.08)} &  \thead{+0.04 \\ (0.12)} &           \thead{-0.01 \\ (0.16)} &           \thead{-0.07 \\ (0.19)} &           \thead{-0.11 \\ (0.19)} &           \thead{-0.15 \\ (0.23)} &  \thead{-0.27 \\ (0.24)} &  \thead{-0.81 \\ (0.52)} \\
 &Influence on IS by ITD (Ours)  &           \thead{+0.01 \\ (0.05)} &           \thead{+0.02 \\ (0.07)} &  \textbf{\thead{+0.04 \\ (0.08)}} &  \thead{+0.04 \\ (0.12)} &  \textbf{\thead{+0.07 \\ (0.14)}} &  \textbf{\thead{+0.10 \\ (0.15)}} &  \textbf{\thead{+0.10 \\ (0.16)}} &  \textbf{\thead{+0.10 \\ (0.16)}} &  \thead{+0.08 \\ (0.18)} &  \thead{-0.16 \\ (0.30)} \\
 &Influence on IS by AID (Ours)  &  \textbf{\thead{+0.01 \\ (0.02)}} &  \textbf{\thead{+0.01 \\ (0.03)}} &           \thead{+0.01 \\ (0.05)} &  \thead{-0.00 \\ (0.09)} &           \thead{-0.04 \\ (0.14)} &           \thead{-0.10 \\ (0.15)} &           \thead{-0.16 \\ (0.18)} &           \thead{-0.18 \\ (0.19)} &  \thead{-0.33 \\ (0.22)} &  \thead{-1.00 \\ (0.63)} \\
 &Influence on Disc. Loss by ITD &           \thead{-0.07 \\ (0.09)} &           \thead{-0.11 \\ (0.13)} &           \thead{-0.11 \\ (0.15)} &  \thead{-0.14 \\ (0.23)} &           \thead{-0.11 \\ (0.14)} &           \thead{-0.09 \\ (0.15)} &           \thead{-0.08 \\ (0.14)} &           \thead{-0.08 \\ (0.14)} &  \thead{-0.13 \\ (0.15)} &  \thead{-0.22 \\ (0.13)} \\
 &Influence on Disc. Loss by AID &           \thead{+0.00 \\ (0.01)} &           \thead{-0.01 \\ (0.06)} &           \thead{-0.01 \\ (0.10)} &  \thead{-0.04 \\ (0.15)} &           \thead{-0.10 \\ (0.16)} &           \thead{-0.17 \\ (0.17)} &           \thead{-0.25 \\ (0.20)} &           \thead{-0.27 \\ (0.19)} &  \thead{-0.35 \\ (0.21)} &  \thead{-0.77 \\ (0.25)} \\
 &Isolation Forest               &           \thead{+0.00 \\ (0.03)} &           \thead{+0.00 \\ (0.02)} &           \thead{+0.00 \\ (0.02)} &  \thead{+0.00 \\ (0.04)} &           \thead{-0.00 \\ (0.05)} &           \thead{-0.04 \\ (0.07)} &           \thead{-0.08 \\ (0.12)} &           \thead{-0.19 \\ (0.20)} &  \thead{-0.49 \\ (0.34)} &  \thead{-1.84 \\ (0.88)} \\
 &Random                         &           \thead{-0.00 \\ (0.02)} &           \thead{+0.00 \\ (0.01)} &  \textbf{\thead{+0.01 \\ (0.02)}} &  \thead{+0.00 \\ (0.02)} &           \thead{+0.00 \\ (0.03)} &           \thead{-0.01 \\ (0.04)} &           \thead{+0.00 \\ (0.03)} &           \thead{-0.00 \\ (0.04)} &  \thead{-0.00 \\ (0.07)} &  \thead{+0.00 \\ (0.09)} \\
\bottomrule
\end{tabular}
\end{table}

\begin{table}[t]
\centering
\caption{Improvements of Test FID (\(\pm\)STD) after the Data Cleansing of MNIST (Values are Highlighted when the Improvement is Statistically Significant with a Significant Level 0.05)}
\label{table:fid}
\begin{tabular}{llllllllllll}
\toprule
  && \multicolumn{10}{c}{Rate of Instances removed \(n_h / N_x\)}\\  \cmidrule(lr){2-11}
& {} &                              0.01 &                              0.02 &                              0.05 &                              0.10 &                              0.20 &                              0.30 &                              0.40 &                              0.50 &                              0.70 &                       0.90 \\
\midrule
  \multirow{8}{*}[-1.8cm]{Full-epoch retraining} &Influence on FID by ITD (Ours) &  \textbf{\thead{-0.13 \\ (0.17)}} &  \textbf{\thead{-0.15 \\ (0.13)}} &  \textbf{\thead{-0.23 \\ (0.32)}} &  \textbf{\thead{-0.32 \\ (0.34)}} &  \textbf{\thead{-0.37 \\ (0.46)}} &  \textbf{\thead{-0.40 \\ (0.50)}} &  \textbf{\thead{-0.34 \\ (0.45)}} &  \textbf{\thead{-0.38 \\ (0.47)}} &           \thead{-0.12 \\ (0.51)} &  \thead{+12.69 \\ (12.36)} \\
 &Influence on FID by AID (Ours) &           \thead{-0.01 \\ (0.27)} &           \thead{+0.06 \\ (0.41)} &           \thead{+0.51 \\ (0.77)} &           \thead{+0.93 \\ (0.87)} &           \thead{+1.93 \\ (1.50)} &           \thead{+2.48 \\ (2.08)} &           \thead{+3.04 \\ (2.41)} &           \thead{+2.56 \\ (2.03)} &           \thead{+2.55 \\ (1.95)} &    \thead{+6.58 \\ (2.86)} \\
 &Influence on IS by ITD (Ours)  &  \textbf{\thead{-0.10 \\ (0.14)}} &  \textbf{\thead{-0.14 \\ (0.13)}} &  \textbf{\thead{-0.26 \\ (0.27)}} &  \textbf{\thead{-0.32 \\ (0.36)}} &  \textbf{\thead{-0.34 \\ (0.40)}} &  \textbf{\thead{-0.36 \\ (0.55)}} &  \textbf{\thead{-0.35 \\ (0.45)}} &  \textbf{\thead{-0.38 \\ (0.48)}} &           \thead{-0.15 \\ (0.49)} &   \thead{+10.91 \\ (9.35)} \\
 &Influence on IS by AID (Ours)  &           \thead{-0.06 \\ (0.36)} &           \thead{+0.08 \\ (0.38)} &           \thead{+0.47 \\ (0.80)} &           \thead{+0.94 \\ (0.95)} &           \thead{+1.67 \\ (1.46)} &           \thead{+2.82 \\ (1.88)} &           \thead{+2.83 \\ (1.95)} &           \thead{+2.35 \\ (2.05)} &           \thead{+2.75 \\ (2.26)} &    \thead{+6.66 \\ (2.89)} \\
 &Influence on Disc. Loss by ITD &           \thead{-0.07 \\ (0.17)} &           \thead{-0.04 \\ (0.30)} &           \thead{-0.15 \\ (0.39)} &           \thead{-0.17 \\ (0.50)} &  \textbf{\thead{-0.23 \\ (0.51)}} &  \textbf{\thead{-0.33 \\ (0.51)}} &  \textbf{\thead{-0.22 \\ (0.50)}} &  \textbf{\thead{-0.31 \\ (0.45)}} &           \thead{-0.04 \\ (0.53)} &    \thead{+7.31 \\ (4.75)} \\
 &Influence on Disc. Loss by AID &  \textbf{\thead{-0.15 \\ (0.27)}} &           \thead{-0.15 \\ (0.37)} &           \thead{+0.29 \\ (0.61)} &           \thead{+0.64 \\ (0.52)} &           \thead{+0.81 \\ (0.71)} &           \thead{+1.25 \\ (0.92)} &           \thead{+1.27 \\ (0.94)} &           \thead{+1.38 \\ (0.89)} &           \thead{+1.40 \\ (0.63)} &    \thead{+5.01 \\ (1.61)} \\
 &Isolation Forest               &           \thead{+0.08 \\ (0.20)} &           \thead{+0.19 \\ (0.37)} &           \thead{+0.29 \\ (0.48)} &           \thead{+0.58 \\ (0.57)} &           \thead{+1.16 \\ (0.85)} &           \thead{+1.44 \\ (1.05)} &           \thead{+2.50 \\ (0.92)} &           \thead{+3.40 \\ (1.31)} &           \thead{+5.03 \\ (1.74)} &   \thead{+11.36 \\ (4.91)} \\
 &Random                         &           \thead{-0.00 \\ (0.11)} &           \thead{-0.04 \\ (0.15)} &  \textbf{\thead{-0.05 \\ (0.11)}} &           \thead{-0.06 \\ (0.22)} &           \thead{-0.14 \\ (0.34)} &  \textbf{\thead{-0.20 \\ (0.45)}} &  \textbf{\thead{-0.25 \\ (0.35)}} &  \textbf{\thead{-0.34 \\ (0.40)}} &  \textbf{\thead{-0.25 \\ (0.46)}} &    \thead{+5.39 \\ (3.73)} \\
  \midrule
  \multirow{8}{*}[-1.8cm]{One-epoch retraining} &Influence on FID by ITD (Ours) &  \textbf{\thead{-0.11 \\ (0.11)}} &  \textbf{\thead{-0.14 \\ (0.15)}} &  \textbf{\thead{-0.21 \\ (0.17)}} &  \textbf{\thead{-0.28 \\ (0.21)}} &  \textbf{\thead{-0.34 \\ (0.23)}} &  \textbf{\thead{-0.37 \\ (0.26)}} &  \textbf{\thead{-0.40 \\ (0.26)}} &  \textbf{\thead{-0.39 \\ (0.27)}} &  \textbf{\thead{-0.31 \\ (0.28)}} &           \thead{+0.23 \\ (0.65)} \\
 &Influence on FID by AID (Ours) &  \textbf{\thead{-0.05 \\ (0.09)}} &  \textbf{\thead{-0.07 \\ (0.12)}} &  \textbf{\thead{-0.12 \\ (0.19)}} &  \textbf{\thead{-0.16 \\ (0.30)}} &           \thead{-0.06 \\ (0.32)} &           \thead{+0.11 \\ (0.44)} &           \thead{+0.18 \\ (0.40)} &           \thead{+0.30 \\ (0.49)} &           \thead{+0.70 \\ (0.61)} &           \thead{+2.57 \\ (1.98)} \\
 &Influence on IS by ITD (Ours)  &  \textbf{\thead{-0.08 \\ (0.12)}} &  \textbf{\thead{-0.11 \\ (0.14)}} &  \textbf{\thead{-0.17 \\ (0.15)}} &  \textbf{\thead{-0.22 \\ (0.21)}} &  \textbf{\thead{-0.30 \\ (0.23)}} &  \textbf{\thead{-0.36 \\ (0.26)}} &  \textbf{\thead{-0.36 \\ (0.29)}} &  \textbf{\thead{-0.36 \\ (0.29)}} &  \textbf{\thead{-0.25 \\ (0.35)}} &           \thead{+0.28 \\ (0.73)} \\
 &Influence on IS by AID (Ours)  &  \textbf{\thead{-0.04 \\ (0.07)}} &  \textbf{\thead{-0.03 \\ (0.06)}} &           \thead{-0.03 \\ (0.13)} &           \thead{+0.00 \\ (0.28)} &           \thead{+0.09 \\ (0.33)} &           \thead{+0.25 \\ (0.42)} &           \thead{+0.42 \\ (0.47)} &           \thead{+0.50 \\ (0.49)} &           \thead{+0.96 \\ (0.62)} &           \thead{+3.35 \\ (2.49)} \\
 &Influence on Disc. Loss by ITD &           \thead{+0.08 \\ (0.16)} &           \thead{+0.15 \\ (0.23)} &           \thead{+0.14 \\ (0.26)} &           \thead{+0.17 \\ (0.42)} &           \thead{+0.07 \\ (0.27)} &           \thead{+0.05 \\ (0.27)} &           \thead{+0.03 \\ (0.26)} &           \thead{+0.06 \\ (0.25)} &           \thead{+0.25 \\ (0.28)} &           \thead{+0.50 \\ (0.43)} \\
 &Influence on Disc. Loss by AID &  \textbf{\thead{-0.02 \\ (0.04)}} &           \thead{-0.01 \\ (0.12)} &           \thead{-0.05 \\ (0.23)} &           \thead{-0.04 \\ (0.35)} &           \thead{+0.02 \\ (0.44)} &           \thead{+0.17 \\ (0.46)} &           \thead{+0.36 \\ (0.55)} &           \thead{+0.44 \\ (0.50)} &           \thead{+0.68 \\ (0.57)} &           \thead{+1.75 \\ (0.82)} \\
 &Isolation Forest               &           \thead{+0.01 \\ (0.05)} &           \thead{+0.02 \\ (0.06)} &           \thead{+0.04 \\ (0.05)} &           \thead{+0.11 \\ (0.10)} &           \thead{+0.20 \\ (0.12)} &           \thead{+0.37 \\ (0.20)} &           \thead{+0.57 \\ (0.37)} &           \thead{+0.90 \\ (0.59)} &           \thead{+2.05 \\ (1.24)} &           \thead{+7.61 \\ (4.30)} \\
 &Random                         &           \thead{+0.01 \\ (0.03)} &           \thead{-0.01 \\ (0.02)} &  \textbf{\thead{-0.02 \\ (0.03)}} &           \thead{-0.01 \\ (0.05)} &           \thead{-0.01 \\ (0.07)} &           \thead{-0.01 \\ (0.08)} &           \thead{-0.02 \\ (0.08)} &           \thead{-0.03 \\ (0.08)} &           \thead{-0.04 \\ (0.15)} &  \textbf{\thead{-0.16 \\ (0.21)}} \\
\bottomrule
\end{tabular}
\end{table}
\begin{figure*}[t]
\begin{minipage}{\linewidth}
\centering
\subfloat[Harmful (Influence on FID by ITD with Full-epoch)]{
\includegraphics[width=0.48\linewidth,clip]{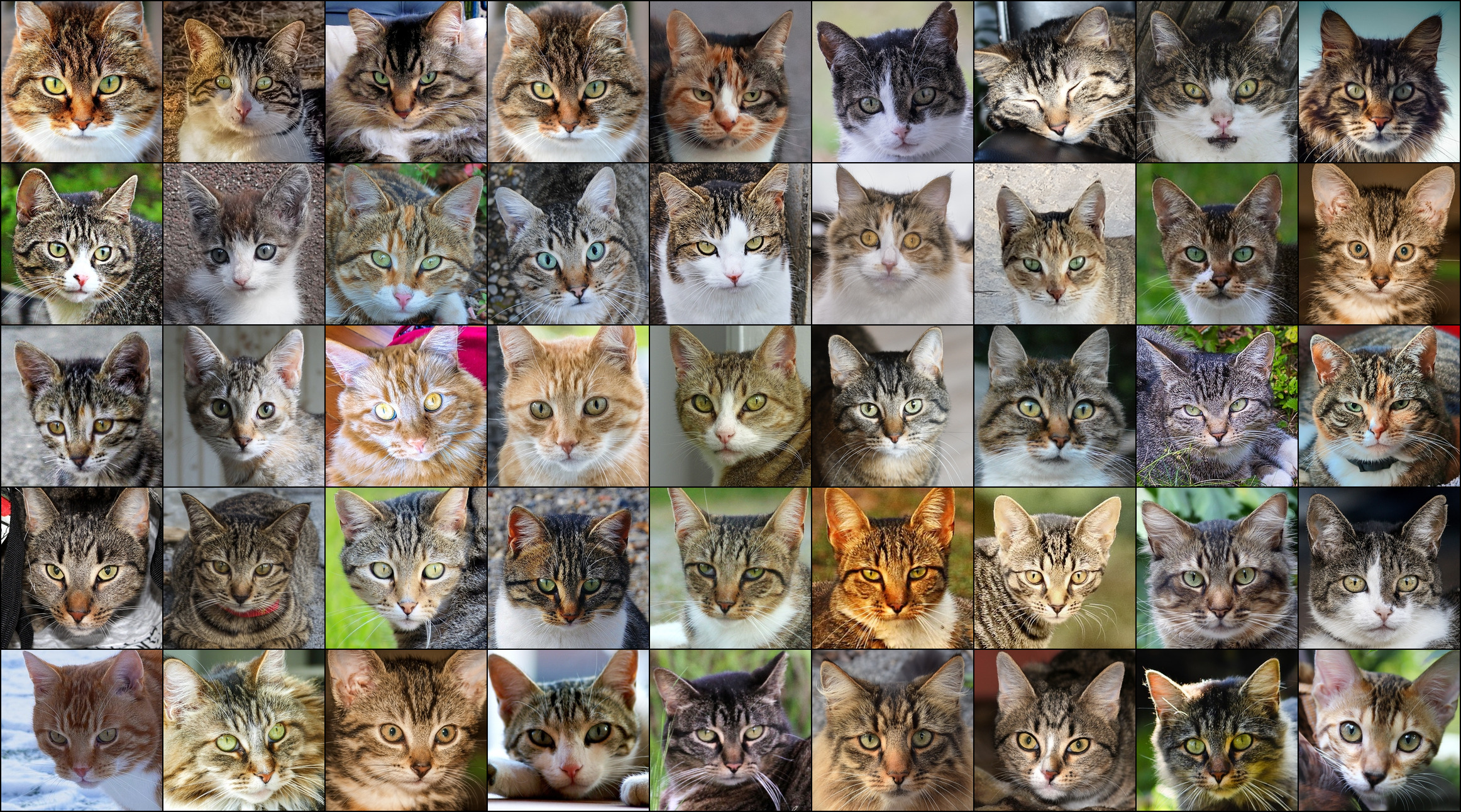}
    \label{sub:fid_itd_harmful_50epoch}}
\hfill
\subfloat[Helpful (Influence on FID by ITD with Full-epoch)]{
\includegraphics[width=0.48\linewidth,clip]{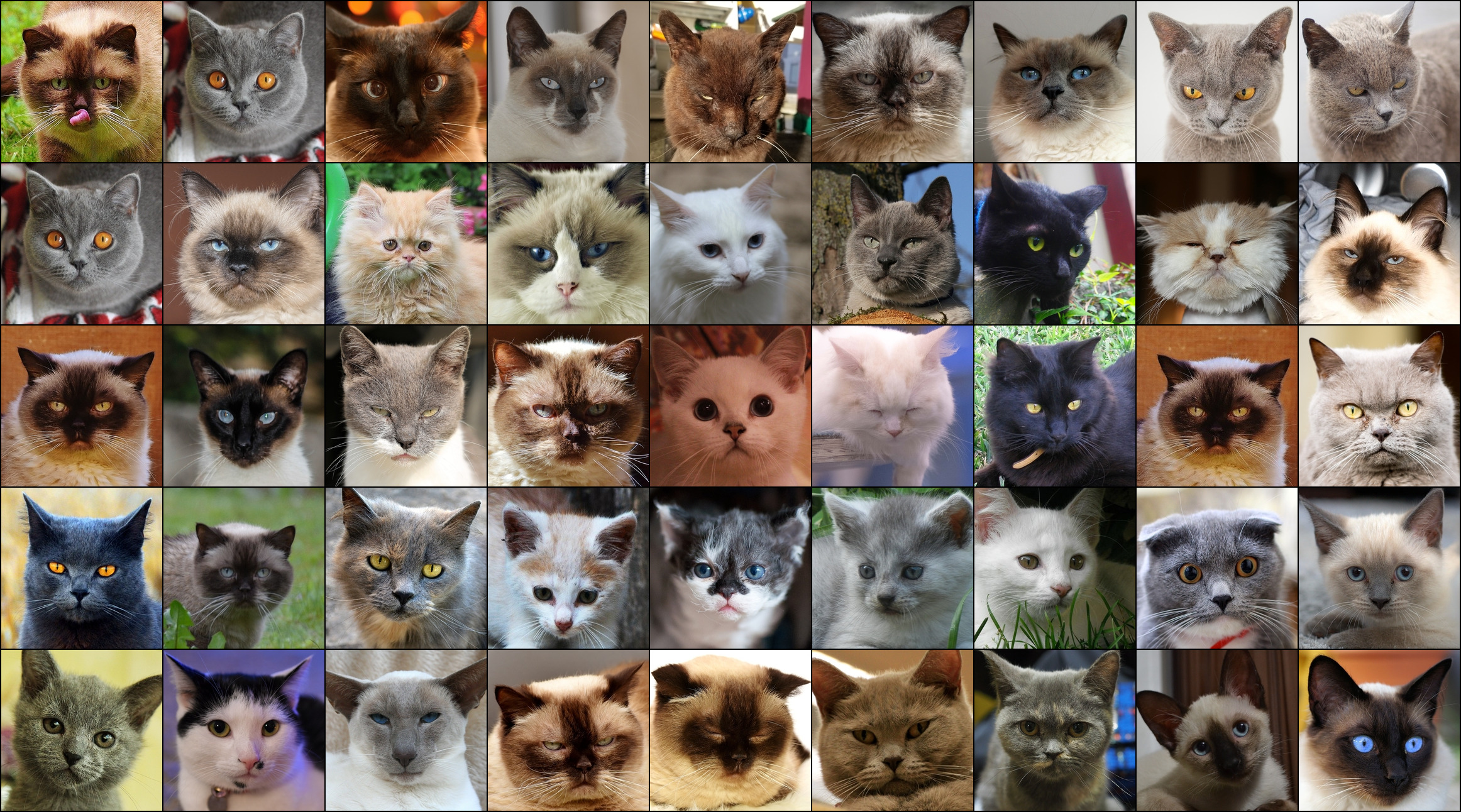}
    \label{sub:fid_itd_helpful_50epoch}}

\subfloat[Harmful (Influence on FID by ITD with One-epoch)]{
\includegraphics[width=0.48\linewidth,clip]{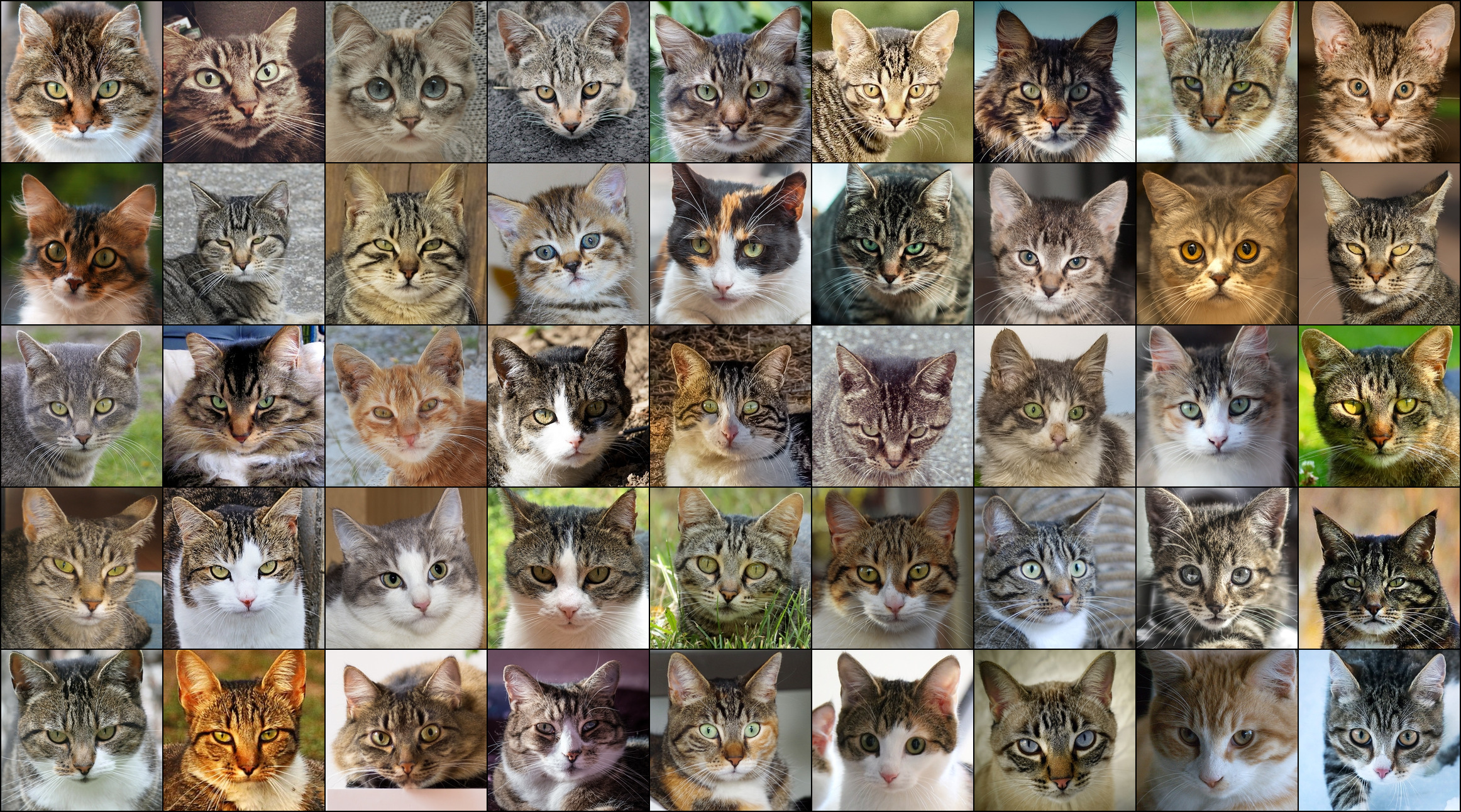}
    \label{sub:fid_itd_harmful_1epoch}}
\hfill
\subfloat[Helpful (Influence on FID by ITD with One-epoch)]{
\includegraphics[width=0.48\linewidth,clip]{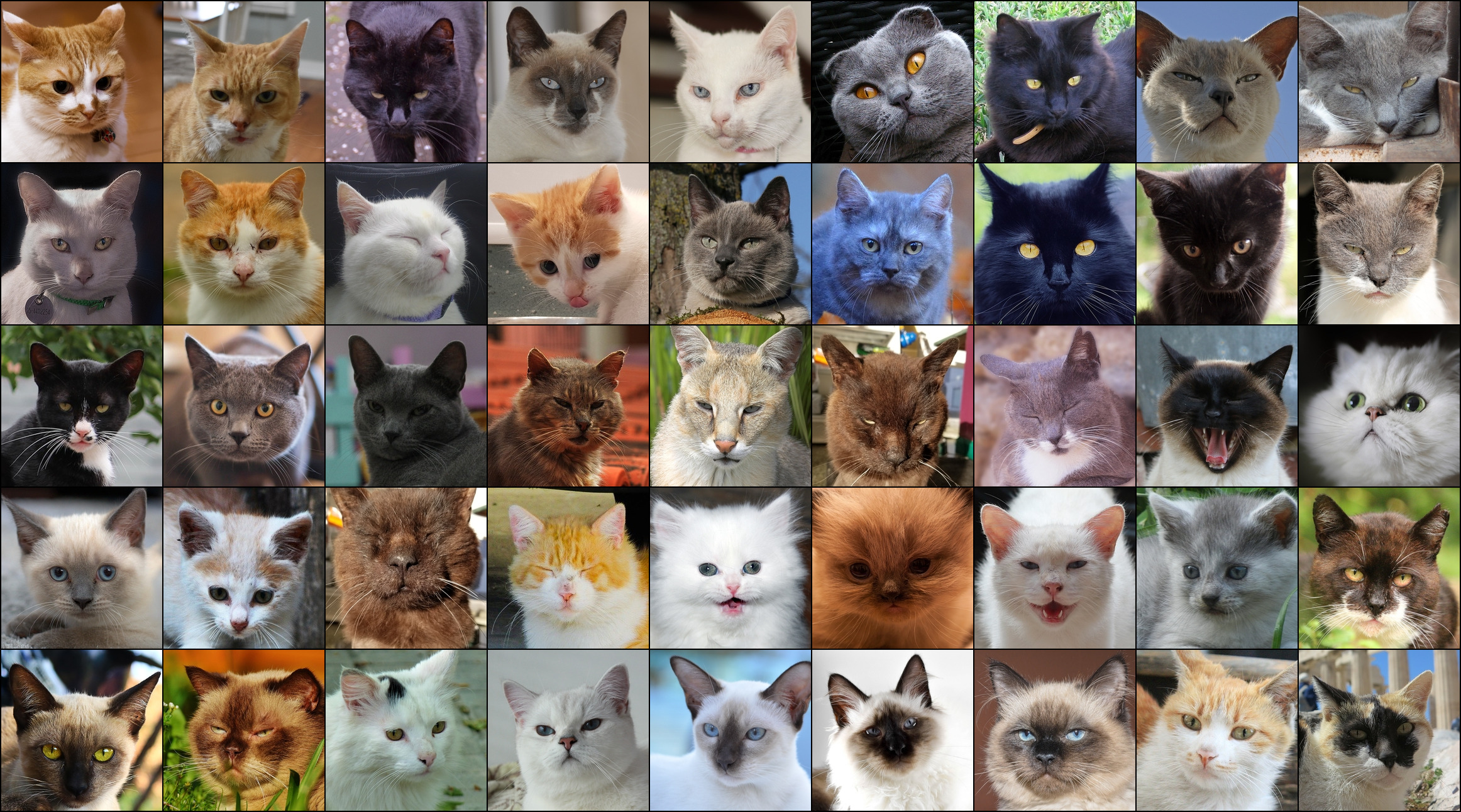}
    \label{sub:fid_itd_helpful_1epoch}}

\subfloat[Harmful (Influence on FID by AID)]{
\includegraphics[width=0.48\linewidth,clip]{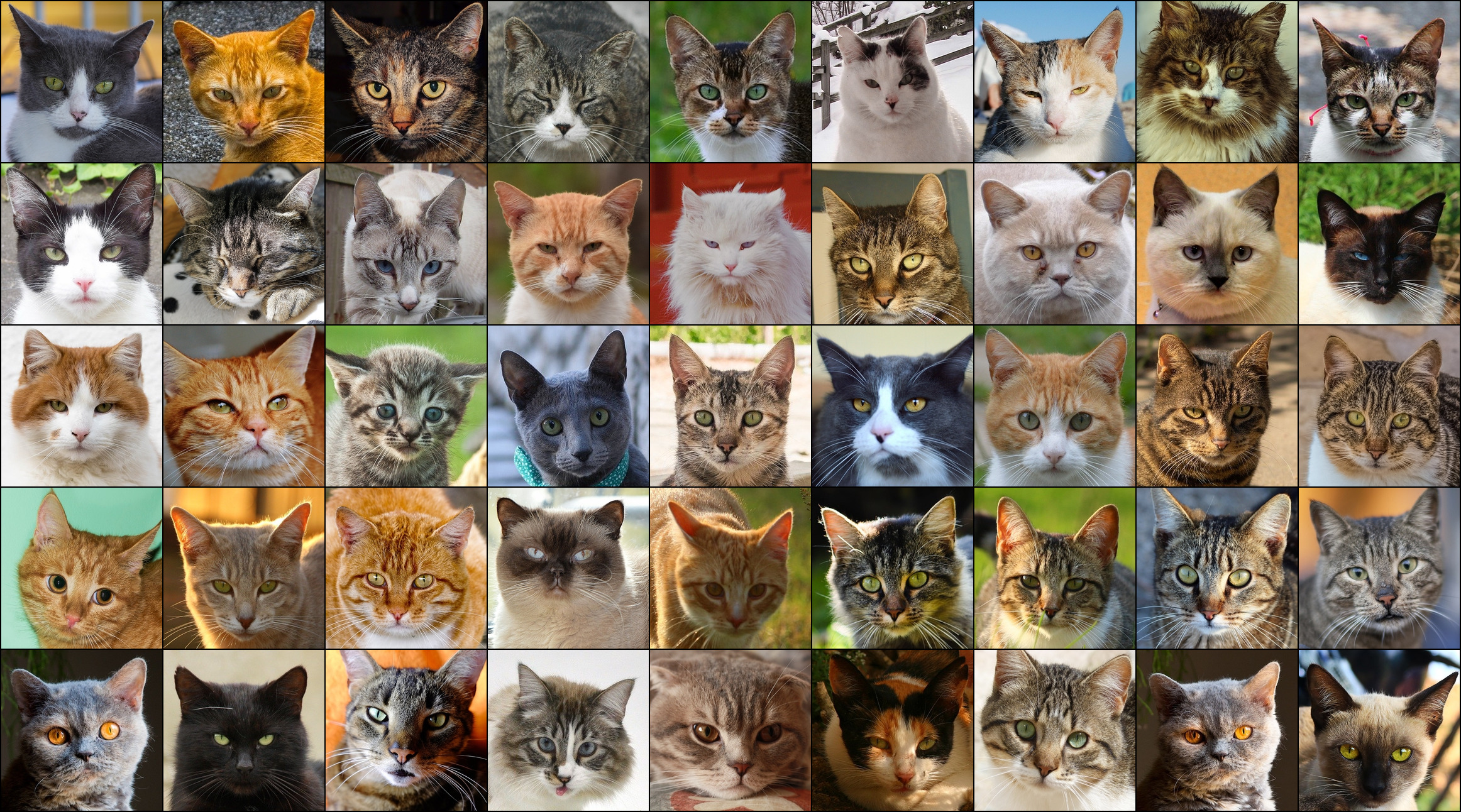}
    \label{sub:fid_aid_harmful_50epoch}}
\hfill
\subfloat[Helpful (Influence on FID by AID)]{
\includegraphics[width=0.48\linewidth,clip]{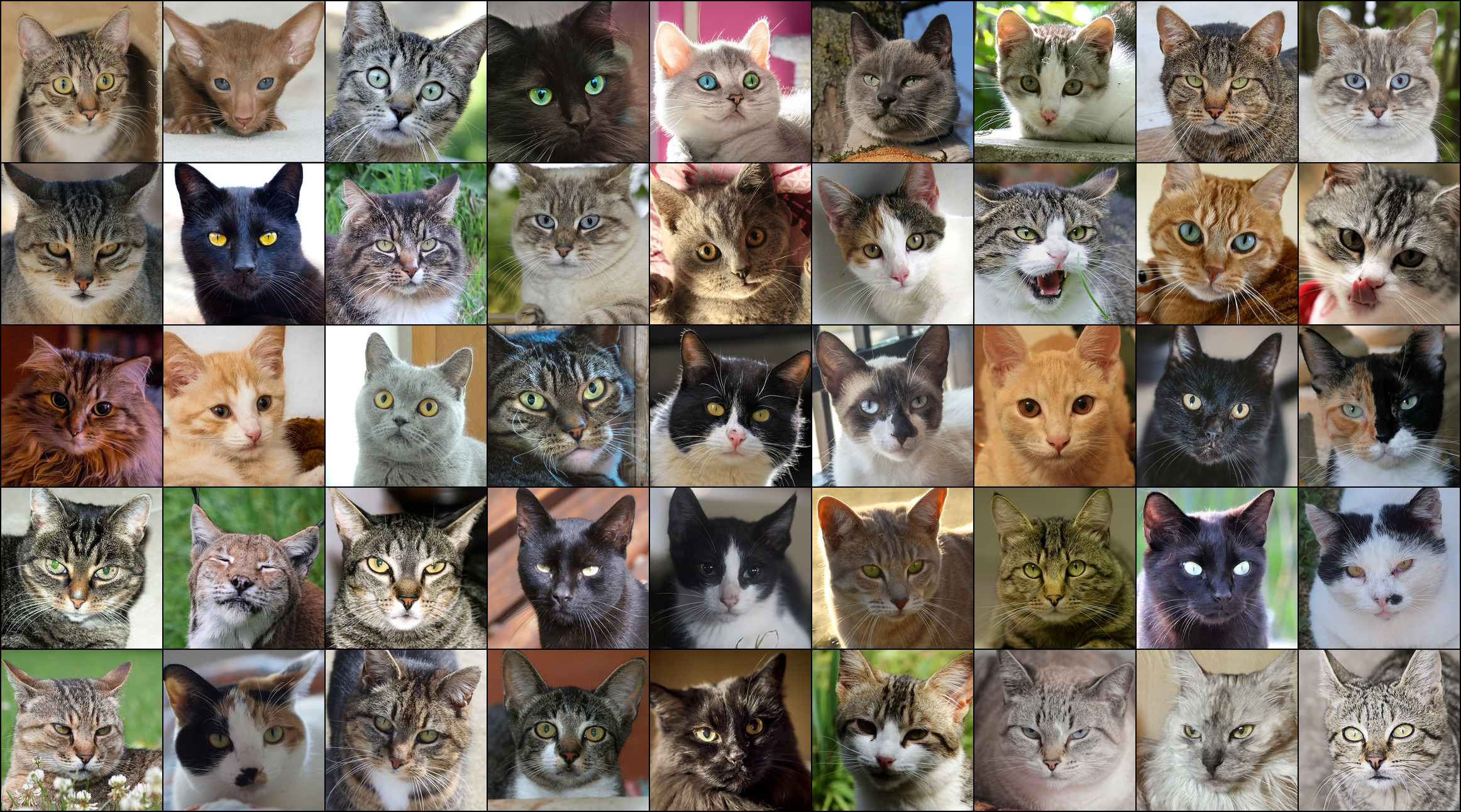}
    \label{sub:fid_aid_helpful_50epoch}}

\subfloat[Harmful (Isolation Forest)]{
\includegraphics[width=0.48\linewidth,clip]{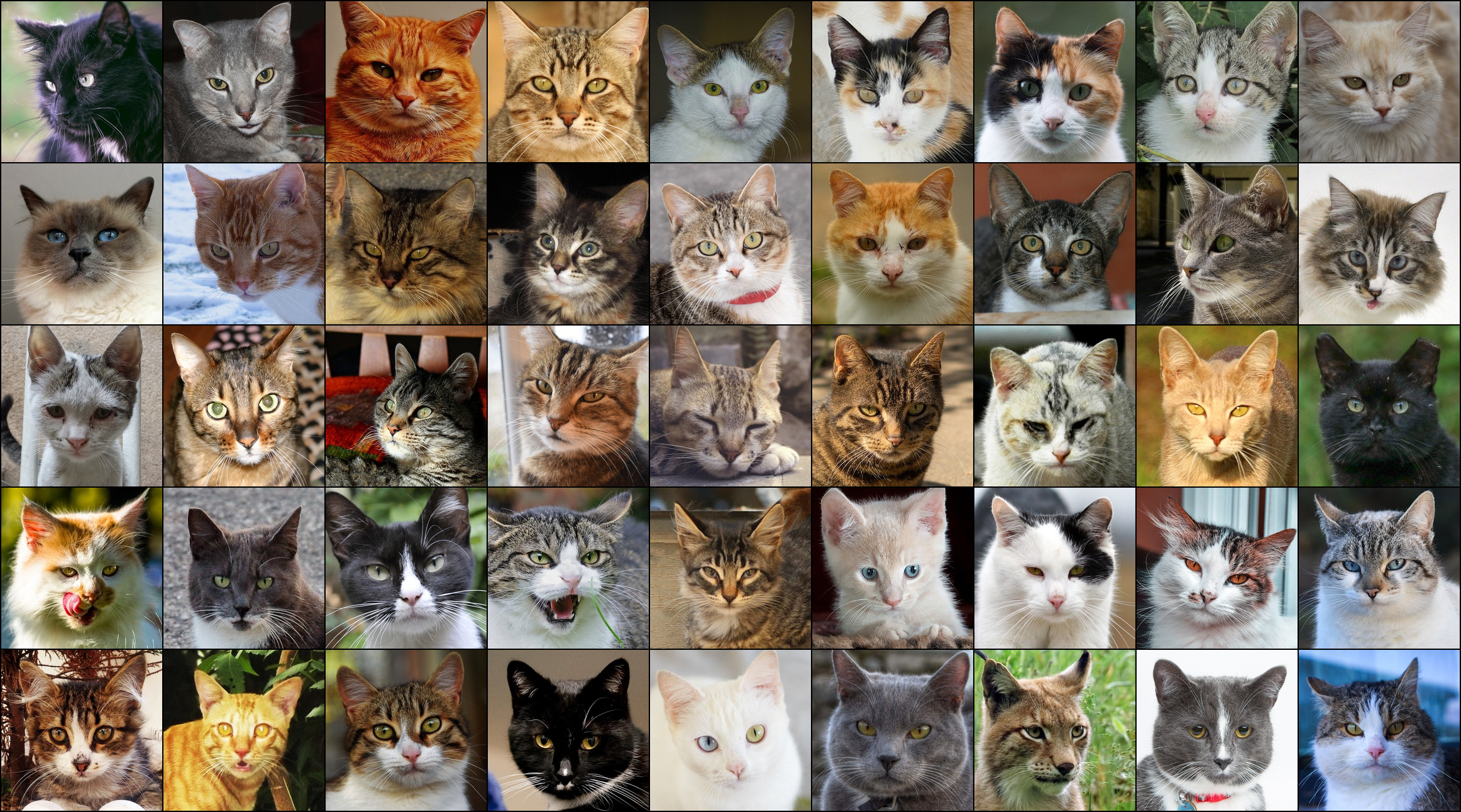}
    \label{sub:if_harmful_50epoch}}
\hfill
\subfloat[Helpful (Isolation Forest)]{
\includegraphics[width=0.48\linewidth,clip]{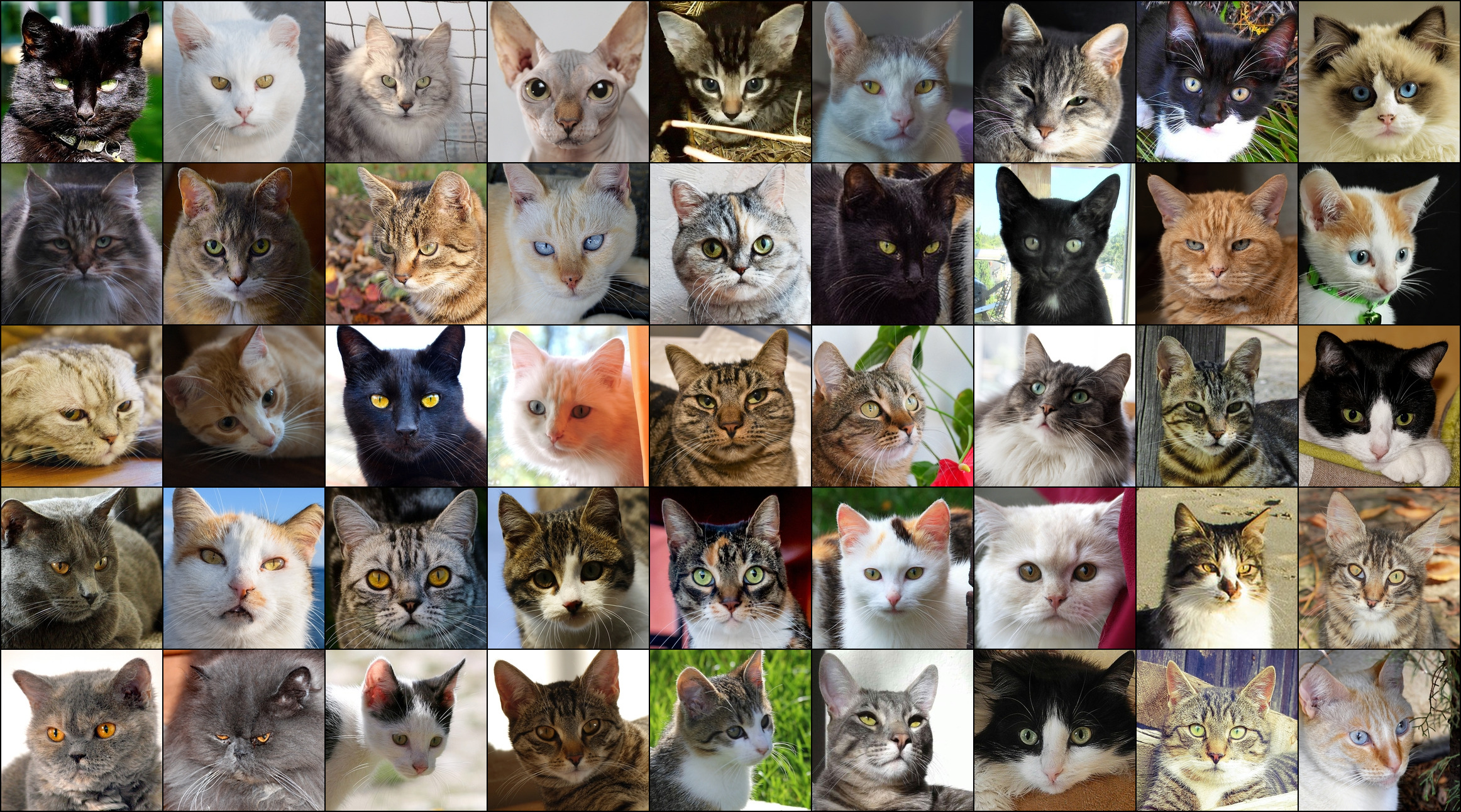}
    \label{sub:if_helpful_50epoch}}
\caption[]{
\noindent Top 45 harmful and helpful instances suggested by our approaches and the isolation forest. 
\subref{sub:fid_itd_harmful_50epoch} and \subref{sub:fid_itd_helpful_50epoch} show the harmful and helpful instances predicted by ITD-EIGEM that traced back full-epochs of fine-tuning, while \subref{sub:fid_itd_harmful_1epoch} and \subref{sub:fid_itd_helpful_1epoch} show those predicted by ITD-EIGEM that traced back only the last epoch.
}
\label{fig:instance_comparison}
\end{minipage}
\end{figure*}

\def\widthcat{0.16}
\def\scalecat{1.0}
\begin{figure}[t]
\begin{minipage}{0.49\linewidth}
\centering
\begin{minipage}{\widthcat\linewidth}
\centering
\subfloat[Ori.]{
\includegraphics[width=\scalecat\linewidth,clip]{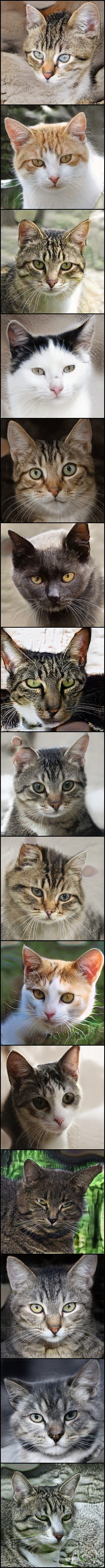}
    \label{sub:visual_cat_no_removal_app1}}
\end{minipage} 
\begin{minipage}{\widthcat\linewidth}
\centering
\subfloat[ITD]{
\includegraphics[width=\scalecat\linewidth,clip]{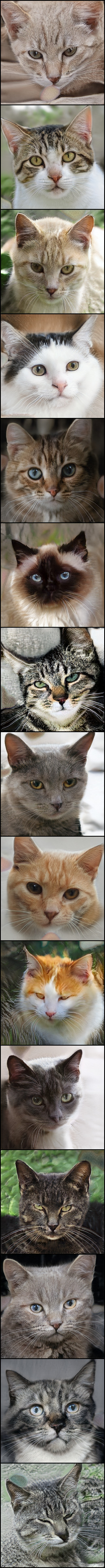}
    \label{sub:visual_cat_fid_itd_app1}}
\end{minipage} 
\begin{minipage}{\widthcat\linewidth}
\centering
\subfloat[AID]{
\includegraphics[width=\scalecat\linewidth,clip]{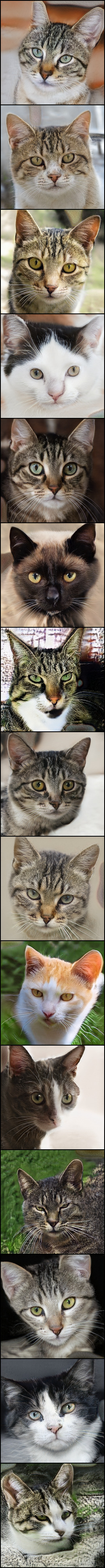}
    \label{sub:visual_cat_fid_aid_app1}}
\end{minipage} 
\begin{minipage}{\widthcat\linewidth}
\centering
\subfloat[IF]{
\includegraphics[width=\scalecat\linewidth,clip]{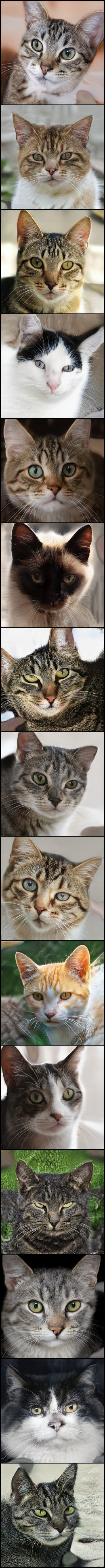}
    \label{sub:visual_cat_if_app1}}
\end{minipage} 
\begin{minipage}{\widthcat\linewidth}
\centering
\subfloat[Rand.]{
\includegraphics[width=\scalecat\linewidth,clip]{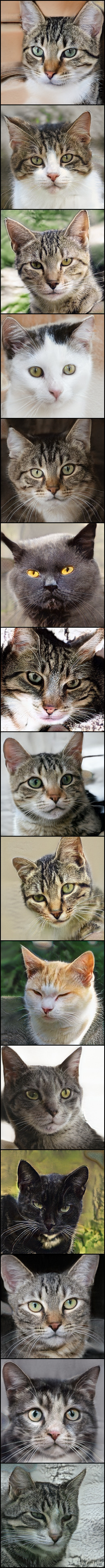}
    \label{sub:visual_cat_random_app1}}
\end{minipage} 
\caption[]{Generated images before and after data cleansing using different methods. ``Ori.'' represents the model without data cleansing, ``ITD'' refers to ITD-EIGEM with full-epoch iterations, ``AID'' denotes AID-EIGEM, ``IF''stands for isolation forest, and ``Rand.'' indicates the random selection. These images are generated by the same procedure as in \cref{fig:visual_cat_main} with different latent variables from \cref{fig:visual_cat_app1}.}
\label{fig:visual_cat_app1}
\end{minipage}
\hfill
\begin{minipage}{0.49\linewidth}
\centering
\begin{minipage}{\widthcat\linewidth}
\centering
\subfloat[Ori.]{
\includegraphics[width=\scalecat\linewidth,clip]{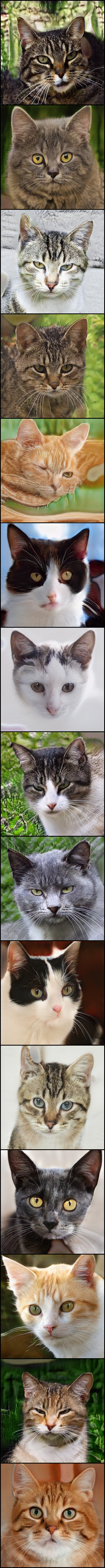}}
\end{minipage} 
\begin{minipage}{\widthcat\linewidth}
\centering
\subfloat[ITD]{
\includegraphics[width=\scalecat\linewidth,clip]{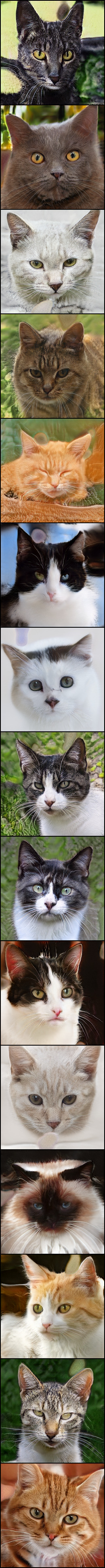}}
\end{minipage} 
\begin{minipage}{\widthcat\linewidth}
\centering
\subfloat[AID]{
\includegraphics[width=\scalecat\linewidth,clip]{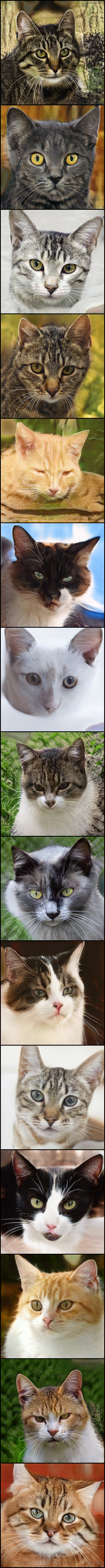}}
\end{minipage} 
\begin{minipage}{\widthcat\linewidth}
\centering
\subfloat[IF]{
\includegraphics[width=\scalecat\linewidth,clip]{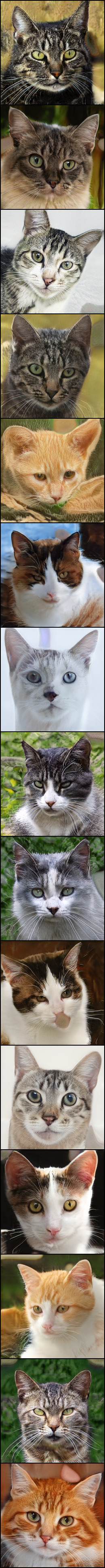}}
\end{minipage} 
\begin{minipage}{\widthcat\linewidth}
\centering
\subfloat[Rand.]{
\includegraphics[width=\scalecat\linewidth,clip]{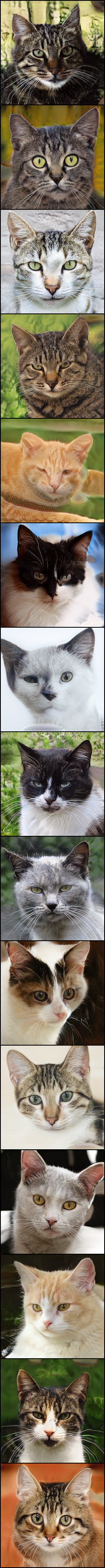}}
\end{minipage} 
\caption[]{Generated images before and after the data cleansing generated using the same procedure as in \cref{fig:visual_cat_main} with different latent variables from \cref{fig:visual_cat_app1}.}
\label{fig:visual_cat_app2}
\end{minipage}
\end{figure}

\clearpage
\subsection{Applicability to a Larger GAN: StyleGAN2}
\noindent To evaluate the scalability and effectiveness of our data cleansing method on more complex models, we extended our experiments to StyleGAN2~\cite{karras2020analyzing}, a larger and more advanced GAN architecture.

\subsubsection{Settings}
\noindent We followed the same experimental setup as with StyleGAN to ensure consistency, with the primary difference being the architectural enhancements inherent to StyleGAN2.
In this experiment, we also applied LoRA with a rank of 32 to all fully connected and convolutional layers in both the generator and discriminator.

We used ITD-EIGEM for identifying harmful instances because it showed better performance than AID-EIGEM in the StyleGAN case (\cref{sec:exp2}).
Random instance removal served as a baseline to ensure that any performance improvements were due to the precise elimination of harmful data points rather than simply reducing the dataset size.

\subsubsection{Results}
\noindent \cref{fig:clean_stylegan2}\subref{sub:clean_stylegan2} and \ref{fig:clean_stylegan2}\subref{sub:clean_stylegan2_1epoch} show the results of data cleansing using full-epoch retraining and one-epoch retraining strategies, respectively.
The combination of ITD-EIGEM with full-epoch retraining led to noticeable improvements in test FID scores, indicating enhanced generative performance after cleansing.
Consistent with our observations in the StyleGAN experiments, full-epoch retraining generally yielded better performance than one-epoch retraining.
We observed that random instance removal often degraded the test FID scores.
This confirms that the performance gains from our method are not simply due to reducing the training dataset size but are the result of accurately identifying and removing harmful instances.

Overall, these results demonstrate that our data cleansing method is broadly applicable and effective across various GAN architectures, including large and complex models like StyleGAN2.

\begin{figure*}[t]
\centering
    \begin{minipage}{0.9\linewidth}
        \centering
        \includegraphics[trim={-2.0cm -0.5cm 0.0cm 0.0cm},clip,width=0.7\linewidth]{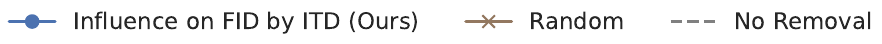}
    \end{minipage}
    \newline
    \begin{minipage}{0.45\linewidth}
    \centering
    \subfloat[][Full-epoch retraining]{
        \includegraphics[trim={.0cm .8cm 1.5cm 1cm},clip,width=\linewidth]{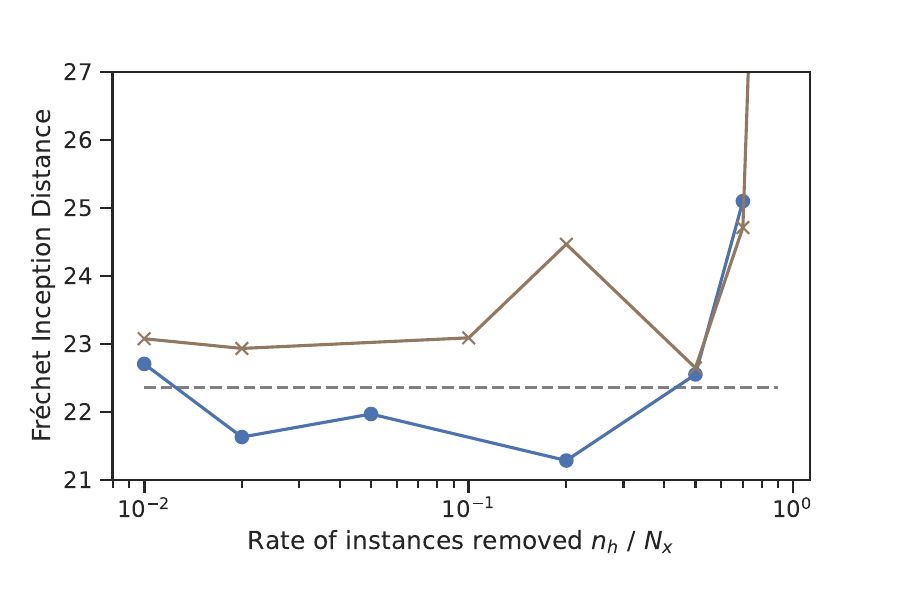}
        \label{sub:clean_stylegan2}
    }
    \end{minipage}
    \hspace{0.52cm}
    \begin{minipage}{0.45\linewidth}
    \centering
    \subfloat[][One-epoch retraining]{
        \includegraphics[trim={.0cm .8cm 1.5cm 1cm},clip,width=\linewidth]{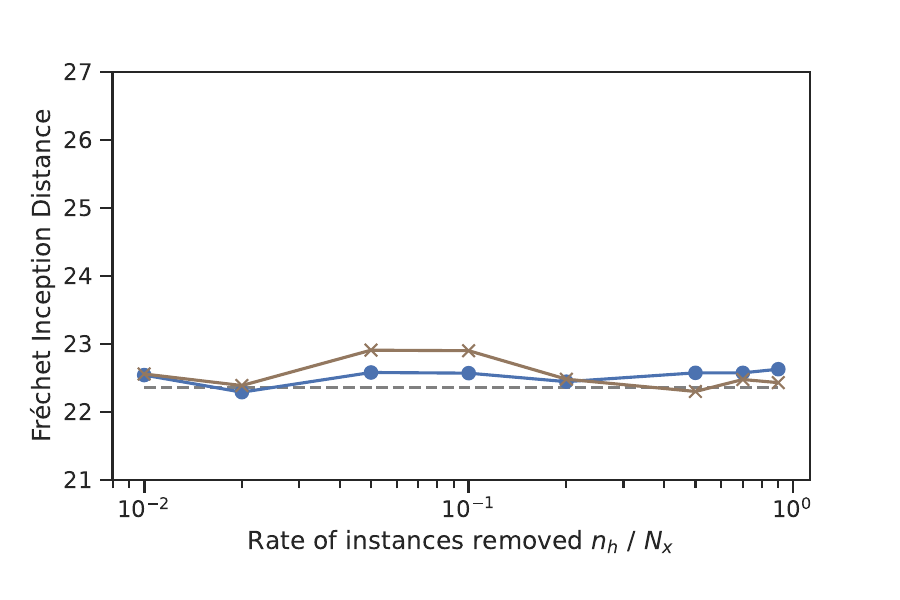}
        \label{sub:clean_stylegan2_1epoch}
}
    \end{minipage}
    \caption[]{
The average test FID after the data cleansing on StyleGAN2 finetuned for AFHQ-CAT.
A higher value indicates better generative performance.
The test FID after full-epoch retraining by the random removal with the removal rate of 0.05 and by the influence estimation with removal rates of 0.1 and 0.9 are not shown as their retraining diverged.
}
    \label{fig:clean_stylegan2}
\end{figure*}

\putbib[main]
\end{bibunit}

\end{document}